\newcommand{\BlackBox}{\rule{1.5ex}{1.5ex}}  
\renewenvironment{proof}{\par\noindent{\bf Proof\ }}{\hfill\BlackBox\\[2mm]}
\newenvironment{proof}{\par\noindent{\bf Proof\ }}{\hfill\BlackBox\\[2mm]}
\newtheorem{theorem}{Theorem}
\newtheorem{lemma}[theorem]{Lemma}
\newtheorem{corollary}[theorem]{Corollary}
\newtheorem{definition}[theorem]{Definition}
\newtheorem{assum_abbr}[theorem]{A}
\begin{document}

\title{An Accelerated Variance-Reduced Conditional Gradient Sliding Algorithm for First-order and Zeroth-order Optimization}

\author{\name Xiyuan Wei \email {xywei00@gmail.com} \\
\addr School of Computer \& Software\\
		Nanjing University of Information Science \& Technology\\
		Nanjing, Jiangsu, 210044, China \\
\name Bin Gu \email bin.gu@mbzuai.ac.ae 
\\ \addr MBZUAI, United Arab Emirates  \\
       JD Finance America Corporation \\
\name Heng Huang \email {heng.huang@pitt.edu} \\
\addr Department of Electrical and Computer Engineering\\
       University of Pittsburgh\\
       Pittsburgh, PA, 15261, USA \\
       JD Finance America Corporation 
}

\editor{Editors}

\maketitle

\begin{abstract}
	The conditional gradient algorithm (also known as the Frank-Wolfe algorithm) has recently regained popularity in the machine learning community due to its projection-free property to solve constrained problems. Although many variants of the conditional gradient algorithm have been proposed to improve performance, they depend on first-order information (gradient) to optimize. Naturally, these algorithms are unable to function properly in the field of increasingly popular zeroth-order optimization, where only zeroth-order information (function value) is available. To fill in this gap, we propose a novel Accelerated variance-Reduced Conditional gradient Sliding (ARCS) algorithm for finite-sum problems, which can use either first-order or zeroth-order information to optimize. To the best of our knowledge, ARCS is the first zeroth-order conditional gradient sliding type algorithms solving convex problems in zeroth-order optimization. In first-order optimization, the convergence results of ARCS substantially outperform previous algorithms in terms of the number of gradient query oracle. Finally we validated the superiority of ARCS by experiments on real-world datasets.
\end{abstract}

\section{Introduction}
In this paper, we consider the following constrained finite-sum minimization problem:
\begin{equation}	\label{problem}
	\min_{x\in\mathcal{C}}\left\{f(x) = \frac{1}{n}\sum_{i=1}^{n}f_i(x)\right\}
\end{equation}
where $f$ is ($\tau$-strongly) convex and $L$-smooth, each $f_i$ is $L$-smooth and convex. $\mathcal{C} \subset \mathbb{R}^d$ is a convex set. We are particularly interested in the case where the domain $\mathcal{C}$ admits fast linear optimization. Problem (\ref{problem}) summarizes an extensive number of important learning problems, \emph{e.g.}, matrix completion \citep{zhang2012accelerated}, LASSO regression \citep{tibshirani1996regression}, and sparsity constrained classification \citep{jaggi2013revisiting}. One common approach for solving the constrained problem (\ref{problem}) is the projected gradient algorithm \citep{iusem2003convergence}, which conducts a projection onto the constrained set $\mathcal{C}$ after a gradient step. However, the projection is often expensive to compute for constrained sets, for example, the set of matrices whose nuclear norm is bounded by a positive real number.

The conditional gradient (CG) algorithm (also known as the Frank-Wolfe algorithm \citep{frank1956algorithm}) and its variants are also natural candidates for solving problem (\ref{problem}). Compared to the projected gradient algorithm, CG type algorithms solve a linear optimization subproblem to bound the solution to the constrained set, which does not conduct projection, and solving the subproblem is much faster than conducting a projection. These algorithms thus have better performance due to the projection-free property, and they are gaining popularity in the machine learning community recently. The key step of CG type algorithms can be summarized as follows.
\begin{equation}
	\label{FW}
	\begin{aligned}
		v^s &= \arg\max_{x\in\mathcal{C}}\langle -g^s, x\rangle \\ x^{s} &= (1-\gamma_s)x^{s-1}+\gamma_sv^s
	\end{aligned}
\end{equation}
where $s = 1, 2, ...$ denotes the epoch, $\gamma_s \in [0, 1]$ denotes the step size. The first line of (\ref{FW}) calls a linear oracle to solve the linear optimization subproblem and the second line ensures that $x^s\in\mathcal{C}$ due to the convexity of the constrained set. In the conditional gradient (CG) algorithm, $g^s$ is set to be the gradient $\nabla f(x^{s-1})$.

Formally, we denote gradient query complexity of an algorithm to be the number of calls of gradient query oracle to achieve $\epsilon$-accuracy, \emph{i.e.}, to get an output $x\in\mathcal{C}$ such that $f(x)-\min_{y\in\mathcal{C}}f(y) \leq \epsilon$. The CG algorithm has a gradient query complexity of $\mathcal{O}\left(n\epsilon^{-1}\right)$ for convex problems. \citet{lan2016conditional} proposed a novel variant of the CG algorithm named Conditional Gradient Sliding (CGS) algorithm which calls CG recursively in each iteration to solve a quadratic subproblem. CGS has gradient query complexity of $\mathcal{O}\left(n\epsilon^{-1/2}\right)$ and $\mathcal{O}\left(n\log\left(\epsilon^{-1}\right)\right)$ for convex and strongly-convex problems respectively. SCGS, the stochastic version of CGS, which was also proposed by \citet{lan2016conditional}, has gradient query complexity of $\mathcal{O}\left(\epsilon^{-2}\right)$ for convex problems. The stochastic version of CG was analysed by \citet{hazan2016variance}, which has gradient query complexity of $\mathcal{O}\left(\epsilon^{-3}\right)$ for convex problems. \citet{hazan2016variance} and \citet{yurtsever2019conditional} respectively combines popular variance-reduction techniques with SCGS and proposed STORC and SPIDER CGS. The linear oracle complexity (number of calls of linear oracle) of all these algorithms above is $\mathcal{O}(\epsilon^{-1})$. It can be seen that CGS type algorithms outperform CG type algorithms in terms of gradient query complexity, thus in this paper we focus on CGS type algorithms.

Although the literature is rich, most CGS type algorithms are first-order algorithms, which take advantage of the gradients to optimize. However, in many complex machine learning problems, the explicit gradient of the problem is expensive to compute or even inaccessible, e.g., problems concerning black-box adversarial attacks \citep{chen2017zoo}, bandit optimization \citep{flaxman2005online}, reinforcement learning \citep{choromanski2018structured} and metric learning \citep{kulis2012metric}. Thus first-order algorithms are not applicable to these problems. Zeroth-order algorithm is a promising substitute since it only uses function value to optimize. But zeroth-order conditional gradient sliding type algorithms for the finite-sum problem are understudied. To the best of our knowledge, only \citet{gao2020can} studied the zeroth-order version of SPIDER CGS, but it is only analysed for non-convex problems. Thus there have not been analyses on zeroth-order conditional gradient sliding type algorithms for convex problems.

\begin{table}[tb!]
	\caption{Comparison of conditional gradient sliding type algorithms solving \emph{convex} problems. $D_0 = \mathcal{O}([f(\tilde{x}^0)-f(x^*)]+L\|x^0-x^*\|^2)$. 	\textbf{F} indicates that the result is for the first-order case and \textbf{Z} indicates that the result is for the zeroth-order case. Note that our ARCS is the first zeroth-order conditional gradient sliding type algorithm solving convex problems. $\tilde{\mathcal{O}}$ hides a logarithmic factor.}
	\label{table1}
	\renewcommand{\arraystretch}{1.2}
	\definecolor{MC}{rgb}{.93, .93, .93}
	\centering
	\begin{tabular}{>{\hspace{-.2em}}c<{\hspace{-.2em}}|>{\hspace{-.2em}}c<{\hspace{-.2em}}|>{\hspace{-.2em}}c<{\hspace{-.2em}}|>{\hspace{-.2em}}c<{\hspace{-.2em}}}
		\hline
		\multicolumn{2}{c|}{}& \multicolumn{2}{c}{Oracle Complexity}\\\cline{3-4}
		\multicolumn{2}{c|}{\multirow{-2}{*}{Algorithm}}& Gradient / Function Query & Linear Oracle\\
		\hline
		&CGS \citep{lan2016conditional} & $\mathcal{O}\left(\frac{n}{\sqrt{\epsilon}}\right)$ & $\mathcal{O}\left(\frac{1}{\epsilon}\right)$ \\
		&SCGS \citep{lan2016conditional} & $\mathcal{O}\left(\frac{1}{\epsilon^2}\right)$ & $\mathcal{O}\left(\frac{1}{\epsilon}\right)$ \\
		&SPIDER CGS \citep{yurtsever2019conditional} & $\tilde{\mathcal{O}}\left(n+\frac{1}{\epsilon^2}\right)$ & $\mathcal{O}\left(\frac{1}{\epsilon}\right)$\\
		&STORC \citep{hazan2016variance} & $\tilde{\mathcal{O}}\left(n+\frac{1}{\epsilon^{1.5}}\right)$ & $\mathcal{O}$$\left(\frac{1}{\epsilon}\right)$\\
		\cline{2-4}
		\multirow{-5}{*}{F} & \textbf{Ours (first-order)} &
		$\left\{ \begin{aligned} &\tilde{\mathcal{O}}\left(n\right), &\epsilon\geq \frac{3D_0}n \\ & \tilde{\mathcal{O}}\left(n+\sqrt{\frac{n}{\epsilon}}\right), &\epsilon< \frac{3D_0}n \end{aligned} \right.$
		& $\left\{ \begin{aligned} &\tilde{\mathcal{O}}\left(\frac{1}{\epsilon^2}\right), & \epsilon\geq \frac{3D_0}n \\ &\tilde{\mathcal{O}}\left(\right.n^2+\frac{n}{\epsilon}\left.\right), & \epsilon< \frac{3D_0}n\\ \end{aligned} \right.$\\
		\hline
		\rowcolor{MC}
		Z&\textbf{Ours (zeroth-order)} &
		$\left\{ \begin{aligned} &\tilde{\mathcal{O}}\left(nd\right), &\epsilon\geq \frac{5D_0}n \\ & \tilde{\mathcal{O}}\left(nd+d\sqrt{{n}/{\epsilon}}\right), &\epsilon<\frac{5D_0}n \end{aligned} \right.$
		& $\left\{ \begin{aligned} &\tilde{\mathcal{O}}\left(\frac{1}{\epsilon^2}\right), & \epsilon\geq \frac{5D_0}n \\ &\tilde{\mathcal{O}}\left(\right.n^2+\frac{n}{\epsilon}\left.\right), & \epsilon< \frac{5D_0}n\\ \end{aligned} \right.$\\
		\hline
	\end{tabular}
\end{table}

To fill in the gap, we propose an Accelerated variance-Reduced Conditional gradient Sliding (ARCS) algorithm, which leverages variance-reduction techinque and a novel momentum acceleration technique proposed by \citet{lan2019unified}. Our ARCS algorithm can be used in either first-order or zeroth-order optimization. In first-order optimization, it outperforms all existing conditional gradient type algorithms with respect to gradient query complexity. In zeroth-order optimization, it is the first conditional gradient sliding type algorithm for convex problems. Since zeroth-order algorithms use function values instead of gradients to optimize, it is natural to consider the number of calls of function query oracle to achieve $\epsilon$-accuracy when assessing the performance of zeroth-order algorithms, which we denote to be the function query complexity.

Besides theoretical analyses, we conduct numerical experiments on real-world datasets, and the results also show the optimality of our ARCS in gradient/function query complexity in both first-order and zeroth-order optimization.

\textbf{Contributions.} The  main contributions of this paper are summarized as follows:
\begin{itemize}
	\item We propose an Accelerated variance-Reduced Conditional gradient Sliding (ARCS) algorithm. Our ARCS algorithm is based on the stochastic conditional gradient sliding (SCGS) algorithm and it leverages the variance-reduction technique and a novel momentum acceleration technique. We give convergence results of ARCS in zeroth-order optimization. To the best of our knowledge, our ARCS algorithm is the first zeroth-order conditional gradient sliding type algorithm addressing the convex and strongly-convex finite-sum problems. Numerical experiments also show its optimality.
	
	\item As a by-product, we give convergence results of ARCS in first-order optimization. Both theoretic and numerical results confirm that ARCS have significantly improved gradient query complexities on convex and strongly-convex problems in first-order optimization.
\end{itemize}

\begin{table}[tb!]
	\caption{Comparison of conditional gradient sliding type algorithms solving \emph{strongly convex} problems.  $D_0 = \mathcal{O}([f(\tilde{x}^0)-f(x^*)]+L\|x^0-x^*\|^2)$. \textbf{F} indicates that the result is for the first-order case and \textbf{Z} indicates that the result is for the zeroth-order case. Note that our ARCS is the first zeroth-order conditional gradient sliding type algorithm solving strongly-convex problems. $\tilde{\mathcal{O}}$ hides a logarithmic factor.}
	\label{table2}
	\renewcommand{\arraystretch}{1.2}
	\definecolor{MC}{rgb}{.93, .93, .93}
	\centering
	\begin{tabular}{c|c|c|c}
		\hline
		\multicolumn{2}{c|}{}& \multicolumn{2}{c}{Oracle Complexity}\\
		\cline{3-4}
		\multicolumn{2}{c|}{\multirow{-2}{*}{Algorithm}}& Gradient / Function Query & Linear Oracle\\
		\hline
		&CGS \citep{lan2016conditional} & $\tilde{\mathcal{O}}\left(n\sqrt{\frac{L}{\tau}}\right)$ & $\mathcal{O}\left(\frac{1}{\epsilon}\right)$ \\
		&SCGS \citep{lan2016conditional} & $\mathcal{O}\left(\frac{1}{\epsilon}\right)$ & $\mathcal{O}\left(\frac{1}{\epsilon}\right)$ \\
		&STORC \citep{hazan2016variance} & $\tilde{\mathcal{O}}\left(n+\frac{L^2}{\tau^2}\right)$ & $\mathcal{O}\left(\frac{1}{\epsilon}\right)$\\
		\cline{2-4}
		\multirow{-4}{*}{F}&\textbf{Ours (first-order)} & 
		$\left\{ \begin{aligned} &\begin{aligned}
			\tilde{\mathcal{O}}&\left(n\right), \\ &\epsilon\geq 5D_0/n \textrm{ or } n\geq 3L/4\tau
		\end{aligned} \\ &\begin{aligned}
			\tilde{\mathcal{O}}&\left(n+\sqrt{\frac{nL}{\tau}}\right), \\ &\epsilon< 5D_0/n \textrm{ and } n< 3L/4\tau
		\end{aligned} \end{aligned} \right.$
		& $\left\{ \begin{aligned} &\tilde{\mathcal{O}}\left(\frac{1}{\epsilon^2}\right), & \epsilon\geq \frac{5D_0}n \\ &\tilde{\mathcal{O}}\left(n^2+\frac{n}{\epsilon}\right), & \epsilon< \frac{5D_0}n\\ \end{aligned} \right.$ \\
		\hline
        \rowcolor{MC}
		Z&\textbf{Ours (zeroth-order)} &
        $\left\{ \begin{aligned} &\begin{aligned}
				\tilde{\mathcal{O}}&\left(nd\right), \\ &\epsilon\geq 8D_0/n \textrm{ or } n\geq 3L/4\tau
			\end{aligned} \\ &\begin{aligned}
				\tilde{\mathcal{O}}&\left(nd+d\sqrt{\frac{nL}{\tau}}\right), \\ & \epsilon< 8D_0/n \textrm{ and } n< 3L/4\tau
			\end{aligned} \end{aligned} \right.$
		& $\left\{ \begin{aligned} &\tilde{\mathcal{O}}\left(\frac{1}{\epsilon^2}\right), & \epsilon\geq \frac{8D_0}n \\ &\tilde{\mathcal{O}}\left(n^2+\frac{n}{\epsilon}\right), & \epsilon< \frac{8D_0}n\\ \end{aligned} \right.$ \\
		\hline
	\end{tabular}
\end{table}

\section{Related Works}
\textbf{Conditional Gradient Algorithms.} \citet{frank1956algorithm} proposed the conditional gradient (CG) algorithm, also known as Frank-Wolfe (FW) algorithm, to avoid projection in solving constrained problems. Motivated by removing the influence of “bad” visited vertices, \citet{wolfe1970convergence} proposed away-step Frank-Wolfe (AFW) algorithm. \citet{goldfarb2017linear} proposed ASFW, the stochastic version of AFW. \citet{lan2016conditional} proposed a variant of CG called conditional gradient sliding (CGS) algorithm which calls CG recursively in each iteration until a good solution is obtained. SCGS, the stochastic version of CGS was also proposed by \citet{lan2016conditional}. \citet{hazan2016variance} gave convergence results of the stochastic version of CG, which is called SFW. Also, \citet{hazan2016variance} combined the variance-reduction technique proposed by \citet{johnson2013accelerating} with SFW and SCGS to get SVRF and STORC respectively. \citet{yurtsever2019conditional} combined another variance-reduction technique proposed by \citet{fang2018spider} with SCGS to get SPIDER CGS.

\noindent\textbf{Zeroth-Order Optimization.} Zeroth-order optimization is a classical technique in the optimization community. \citet{nesterov2017random} proposed zeroth-order gradient descent (ZO-GD) algorithm. Then \citet{ghadimi2013stochastic} proposed its stochastic counterpart ZO-SGD. \citet{lian2016comprehensive} proposed an asynchronous zeroth-order stochastic gradient (ASZO) algorithm for parallel optimization. \citet{gu2018faster} further improved the convergence rate of ASZO by combining variance reduction technique with coordinate-wise gradient estimators. \citet{liu2018zeroth} proposed ZO-SVRG based algorithms using three different gradient estimators. \citet{fang2018spider} proposed a SPIDER based zeroth-order method named SPIDER-SZO.\citet{ji2019improved} further improved ZO SVRG based and SPIDER based algorithms. \citet{chen2019zo} proposed zeroth-order adaptive momentum method (ZO-AdaMM). \citet{chen2020accelerated} proposed ZO-Varag which leverages acceleration and variance-reduced technique. \citet{sahu2019towards} proposed zeroth-order versions of (stochastic) conditional gradient method. \citet{balasubramanian2018zeroth} proposed zeroth-order versions of stochastic conditional gradient method and stochastic conditional gradient sliding method. These zeroth-order conditional gradient type algorithms mentioned above did not consider the finite-sum problem (\ref{problem}).

\section{Preliminaries}
For simplicity, we denote $x^* \overset{\textrm{def}}{=} \arg\min_{x\in\mathcal{C}}f(x)$ to be the optimal solution to the problem (\ref{problem}) and denote $\|\cdot\|$ to be the norm associated with inner product in $\mathbb{R}^d$. First we give formal definitions of some basic concepts.

\begin{definition}
	\label{def1}
	For function $f : \mathbb{R}^d \rightarrow \mathbb{R}$, we have
	\begin{itemize}
		\item $f$ is $L$-smooth if $f$ has continuous gradients and $\forall \, x, y \in \mathbb{R}^d$, it satisfies $|f(y)-f(x)-\langle\nabla f(x), y - x\rangle| \leq \frac{L}{2}||y-x||^2$.
		
		\item $f$ is convex if $\forall \, x, y \in \mathbb{R}^d$, it satisfies $f(y) \geq f(x) + \langle\nabla f(x), y - x\rangle $.
		
		\item $f$ is $\tau$-strongly-convex if $f(x)-\frac{\tau}{2}||x||^2$ is convex, \emph{i.e.}, $\forall \, x, y \in \mathbb{R}^d$, it satisfies $f(y) \geq f(x) + \langle\nabla f(x), y - x\rangle + \frac{\tau}{2}||y - x||^2$.
	\end{itemize}
\end{definition}
From Definition \ref{def1} we know if $f$ is convex, then it is $0$-strongly-convex. Next we give assumptions that will be used in our analyses.

\subsection{Assumptions}
\begin{assum_abbr} \label{a1} \label{assum1}
	$f$ is convex and each $f_i, i = 1, ..., n$ is $L$-smooth.
\end{assum_abbr}
\begin{assum_abbr} \label{a2} \label{assum2}
	$f$ is $\tau$-strongly-convex with $\tau > 0$ and each $f_i, i = 1, ..., n$ is $L$-smooth.
\end{assum_abbr}
\begin{assum_abbr} \label{a3}
	For any $x, y\in\mathcal{C}$, there exists $D<\infty$ such that $\|x-y\|\leq D$.
\end{assum_abbr}

Assumption \ref{a3} is standard for the convergence analysis of conditional gradient type algorithms \citep{jaggi2013revisiting,lan2016conditional,hazan2016variance}. Next we specify the oracles that are used in our algorithms.

\subsection{Oracles}
We introduce three oracles called in our algorithm.
\begin{itemize}
	\item Gradient Query Oracle (GQO): GQO returns the gradient of a given component function at point $x$, which is $\nabla f_i(x)$.
	
	\item Function Query Oracle (FQO): FQO returns the value of a given component function at point $x$, which is $f_i(x)$.
	
	\item Linear Oracle (LO): LO sovles the linear programming problem for vector $u$ and returns $\arg\max_{v\in\mathcal{C}}\langle u, v\rangle$.
\end{itemize}
In this paper, we consider the following two cases:
\begin{itemize}
	\item First-order Case: We have access to GQO and LO.
	
	\item Zeroth-order Case: We have access to FQO and LO.
\end{itemize}

\subsection{Zeroth-order Gradient Estimation}
For the zeroth-order case, we only have access to the function query oracle rather than the gradient query oracle. Then we can utilize the difference of the function value at two close points to estimate the gradient. Two gradient estimators are widely used in zeroth-order optimization: the two-point Gaussian random gradient estimator \citep{nesterov2017random} and the coordinate-wise gradient estimator \citep{lian2016comprehensive}. \citet{liu2018zeroth} showed that the coordinate-wise gradient estimator has better performance than the two-point Gaussian random gradient estimator. So we only consider the coordinate-wise gradient estimator in this paper, which is defined as follows:
\begin{equation}    \label{coord_estimator}
	\hat{\nabla}_{coord}f(x) = \sum_{i=1}^{d}\frac{f(x+\mu e_i)-f(x-\mu e_i)}{2\mu}e_i
\end{equation}
where $e_i$ is the $i$-th vector of the standard basis of $\mathbb{R}^d$ and $\mu>0$ is a smoothing parameter.

\section{Algorithms and Analyses}
\citet{lan2016conditional} proposed a novel variant of the conditional gradient algorithm named Conditional Gradient Sliding (CGS) algorithm. CGS calls the linear oracle recursively in each iteration until a good solution is obtained. The idea of CGS can be summarized as follows:
\begin{equation}	\label{cgs}
	\begin{aligned}
		z^s =& (1-\alpha_s)y^{s-1}+\alpha_sx^{s-1}\\
		x^s =& \textrm{CondG}(\nabla f(z^s), x^{s-1}, 0, \gamma_s, 0, \eta_s) \;\;\;\; \textrm{(Algo. \ref{algo2})}\\
		y^s =& (1-\alpha_s)y^{s-1}+\alpha_sx^s\\
	\end{aligned}
\end{equation}
The second line of CGS calls Algorithm \ref{algo2}. In each iteration, the linear oracle is called to produce an output $v_t$ of (\ref{condg}). If the value $V_{g, u, y, \gamma, \tau}(u_t)\leq\eta$, then it sets $u^+=u_t$ and returns. Thus Algorithm \ref{algo2} outputs a solution $u^+$ such that
\begin{equation}
    \max_{x\in\mathcal{C}}\langle\nabla h(u^+), u^+-x\rangle\leq\eta
\end{equation}
where $h$ is a quadratic function defined as
\begin{equation}	\label{condg_problem}
	h(x)\overset{\textrm{def}}{=}\gamma\left[\langle g, x\rangle + \frac{\tau}{2}\|x-y\|^2\right]+\frac{1}{2}\|x-u\|^2
\end{equation}
On the other hand, if $V_{g, u, y, \gamma, \tau}(u_t)>\eta$, then $u_t$ is updated with line search, \emph{i.e.}, $u_{t+1} = (1-\beta_t)u_t+\beta v_t$, where
\begin{equation}
	\beta_t = \arg\min_{\beta\in[0,1]} h((1-\beta)u_t+\beta v_t)
\end{equation}
Denote $u^*=\arg\min_{u\in\mathcal{C}} h(u)$, from the convexity of $h$, the output $u^+$ satisfies
\begin{equation}
	h(u^+)-h(u^*) \leq \langle \nabla h(u^+), u^+-u^*\rangle \leq \eta
\end{equation}
Then it is clear that Algorithm \ref{algo2} is in fact the standard conditional gradient algorithm (\ref{FW}) minimizing $h$. In the CGS algorithm (\ref{cgs}), we have $x^s = \textrm{CondG}(\nabla f(z^s), x^{s-1}, 0, \gamma_s, 0, \eta_s)$, so $h$ in CGS can be rewritten as
\begin{equation}    \label{cgsH}
	h_s'(x)\overset{\textrm{def}}{=}\gamma_s\langle \nabla f(z^s), x\rangle+\frac{1}{2}\|x-x_{s-1}\|^2
\end{equation}
Note that if $\mathcal{C} = \mathbb{R}^d$, then the minimizer of (\ref{cgsH}) has a closed form solution and it is in fact an accelerated gradient descent step. We choose the more complicated form (\ref{condg_problem}) since it gives our algorithm better performance when problem (\ref{problem}) is strongly convex ($\tau > 0$). When problem (\ref{problem}) is convex ($\tau = 0$), (\ref{condg_problem}) is identical to (\ref{cgsH}).

\begin{algorithm}[htb!]
	\caption{CondG Algorithm}
	\label{algo2}
	\begin{algorithmic}[1]
	    \STATE {\bfseries Input:} $(g, u, y, \gamma, \tau, \eta)$
	    \STATE Define $h(x)\overset{\textrm{def}}{=}\gamma\left[\langle g, x\rangle + \frac{\tau}{2}\|x-y\|^2\right]+\frac{1}{2}\|x-u\|^2$
		\STATE Set $u_1=u$.
		\FOR {$t=1, 2, ...$}
		\STATE Let $v_t$ be an optimal solution of the subproblem
		\begin{equation} \label{condg}
		    V_{g, u, y, \gamma, \tau}(u_t) = \max_{x\in\mathcal{C}}\langle\nabla h(u_t), u_t-x\rangle
		\end{equation}
		\IF {$V_{g, u, y, \gamma, \tau}(u_t)\leq\eta$}
		\STATE {\bfseries Output} $u^+=u_t$.
		\ELSE
		\STATE Set $u_{t+1}=(1-\beta_t)u_t+\beta_tv_t$ with $\beta_t = \max\left\{0, \min\left\{1, \frac{\langle\nabla h(u_t), u_t-v_t\rangle}{(\gamma\tau+1)\|u_t-v_t\|^2}\right\}\right\}  \label{alpha}$
		\ENDIF
		\ENDFOR
	\end{algorithmic}
\end{algorithm}

\citet{lan2019unified} proposed a VAriance-Reduced Accelerated Gradient (Varag) algorithm for unconstrained finite-sum problems, which leverages the variance-reduction technique and a novel momentum technique. Inspired by Varag, we combined variance-reduction technique and momentum with the conditional gradient sliding algorithm, and proposed our Accelerated variance-Reduced Conditional gradient Sliding (ARCS) algorithm. The detail of ARCS is described in Algorithm \ref{algo1}.

\begin{algorithm}[htb!]
	\caption{Accelerated variance-Reduced Conditional gradient Sliding (ARCS) algorithm}
	\label{algo1}
	\begin{algorithmic}[1]
		\STATE {\bfseries Input:} $x_0\in\mathcal{C}, \{T_s\}, \{\gamma_s\}, \{\alpha_s\}, \{p_s\}, \{\theta_t\}, \{\eta_{s,t}\}$
		\STATE Set $\tilde{x}^0 = x^0$.
		\FOR {$s=1, 2, ...$}
		\STATE Set $\tilde{x} = \tilde{x}^{s-1}$ and 
		$\tilde{g}=\begin{cases}
			\nabla f(\tilde{x}), & \textit{// for first-order case}\\
			\hat{\nabla}_{coord} f(\tilde{x}),  & \textit{// for zeroth-order case}
		\end{cases}
		\label{pivotal}$
		\STATE Set $x_0 = x^{s-1}$, $\bar{x}_0 = \tilde{x}$ and $T=T_s$.
		\FOR {$t = 1, ..., T$}
		\STATE Pick $i_t\in\{1, ..., n\}$ randomly.
		\STATE Set $\underline{x}_t = \frac{\left[(1+\tau\gamma_s)(1-\alpha_s-p_s)\bar{x}_{t-1}+\alpha_sx_{t-1}+ (1+\tau\gamma_s)p_s\tilde{x}\right]}{(1+\tau\gamma_s(1-\alpha_s))} $
		\STATE \label{grad_blend} $G_t =\begin{cases}
			\nabla f_{i_t}(\underline{x}_t)-\nabla f_{i_t}(\tilde{x})+\tilde{g}, \qquad & \textit{// for first-order case}\\
			\hat{\nabla}_{coord} f_{i_t}(\underline{x}_t)-\hat{\nabla}_{coord} f_{i_t}(\tilde{x})+\tilde{g}, & \textit{// for zeroth-order case}\\
		\end{cases}$
		\STATE $x_t=\textrm{CondG}(G_t, x_{t-1}, \underline{x}_t, \gamma_s, \tau, \eta_{s,t})$ \qquad\; \label{CondG} \textit{// Algorithm \ref{algo2}}
		\STATE $\bar{x}_t = (1-\alpha_s-p_s)\bar{x}_{t-1}+\alpha_sx_t+p_s\tilde{x}$.
		\ENDFOR
		\STATE Set $x^s = x_T$ and $\tilde{x}^s = \sum_{t=1}^{T}(\theta_t\bar{x}_t)/\sum_{t=1}^T\theta_t$.
		\ENDFOR
	\end{algorithmic}
\end{algorithm}

At the beginning of epoch $s$, ARCS computes a full gradient $\tilde{g}$ at point $\tilde{x}^{s-1}$, which is the solution provided by the preceding epoch. Then the full gradient is used repeatedly in each inner loop to form a gradient blending $G_t$. This is the classic variance-reduction technique proposed by \citet{johnson2013accelerating}. Each inner loop maintains three sequences: $\{\underline{x}_t\}, \{x_t\}, \{\bar{x}_t\}$, which is a novel momentum technique proposed by \citet{lan2019unified} and plays an important role in the acceleration scheme. The choice of the additional parameters $\{T_s\}, \{p_s\}, \{a_s\}, \{\gamma_s\}, \{\eta_{s,t}\}, \{\theta_t\}$ will be specified in our convergence analyses for first-order and zeroth-order case, convex and strongly-convex problems respectively. First we provide the convergence results of our ARCS solving \emph{convex} problems. The proof of Theorem \ref{theorem1} is left in the appendix.

\begin{theorem}[Convex]	\label{theorem1}
	Suppose Assumptions \ref{a1} and \ref{a3} holds. Denote $s_0=\lfloor\log n\rfloor+1$, set
	\[T_s =\begin{cases}
		2^{s-1}, &s\leq s_0\\
		T_{s_0}, &s>s_0\\
	\end{cases} ,\; \alpha_s =\begin{cases}
		\frac{1}{2}, &s\leq s_0\\
		\frac{2}{s-s_0+4}, &s>s_0\\
	\end{cases}, \; p_s = \frac{1}{2}, \; \eta_{s,t} = \frac{D_0}{sT_sL}, \; \theta_t =\begin{cases}
		\frac{\gamma_s}{\alpha_s}(\alpha_s+p_s), &t\leq T_s-1\\[2pt]
		\frac{\gamma_s}{\alpha_s}, &t=T_s\\
	\end{cases}\]
	where $D_0$ will be specified below for two cases respectively.
	
	$\bullet$ For the first-order case, set $\gamma_s=\frac{1}{3L\alpha_s}, D_0=4(f(\tilde{x}^0)-f(x^*))+3L\|x^0-x^*\|^2$, we have
	\[\mathbb{E}\left[f(\tilde{x}^S)-f(x^*)\right] \leq
    \left\{\begin{aligned}
		&\frac{3D_0(\log S + 2)}{2^{S+1}}, &S\leq s_0\\[2pt]
		&\frac{48D_0(\log S + 2)}{n(S-s_0+4)^2}, &S>s_0\\
	\end{aligned}\right. \]
	
	$\bullet$ For the zeroth-order case, set $\gamma_s = \frac{1}{5L\alpha_s}, D_0=4(f(\tilde{x}^0)-f(x^*))+5L\|x^0-x^*\|^2$, we have
	\[\begin{aligned}
		\renewcommand{\arraystretch}{2}
		\mathbb{E}\left[f(\tilde{x}^S)-f(x^*)\right] \leq 
        \left\{\begin{aligned}
			&\frac{5D_0(\log S + 2)}{2^{S+1}}+D\mu L\sqrt{\frac{d^2}{2}+2d}+\frac{\mu^2Ld}{2}, &S\leq s_0\\[2pt]
			&\begin{aligned}
				\frac{80D_0(\log S + 2)}{n(S-s_0+4)^2}+\Delta^\mu
			\end{aligned}, &S>s_0\\
		\end{aligned}\right.
	\end{aligned}\]
	where $\Delta^\mu = 2(S-s_0+4)\mu^2Ld+4(S-s_0+4)D\mu L\sqrt{\frac{d^2}{2}+2d}$.
\end{theorem}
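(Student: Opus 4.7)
My plan is to prove this via a one-epoch progress lemma followed by a two-phase telescoping argument, with an additional error-propagation step for the zeroth-order case. First, I would establish a per-iteration bound inside epoch $s$. Starting from $L$-smoothness applied to $f(\bar{x}_t)$ about the auxiliary point $\underline{x}_t$, using convexity of $f$ tested against $x^*$, $\tilde{x}$, and $\bar{x}_{t-1}$, and rewriting $\bar{x}_t - \underline{x}_t$ through the three-sequence identity that defines $\underline{x}_t$, I would get an inequality of the schematic form
$$\alpha_s[f(\bar{x}_t) - f(x^*)] \leq (1-\alpha_s-p_s)\alpha_s[f(\bar{x}_{t-1}) - f(x^*)] + p_s\alpha_s[f(\tilde{x}) - f(x^*)] + \langle G_t, x_t - x^*\rangle + \text{quadratic errors}.$$
The CondG guarantee $\langle \nabla h_t(x_t), x_t - x^*\rangle \leq \eta_{s,t}$ with $h_t$ as in (\ref{condg_problem}) lets me replace the inner product by a telescoping Euclidean term $\|x_t - x^*\|^2 - \|x_{t-1} - x^*\|^2$ plus $\eta_{s,t}$. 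For the stochastic cross-term I would invoke the standard variance bound $\mathbb{E}\|G_t - \nabla f(\underline{x}_t)\|^2 \leq 2L\bigl[f(\tilde{x}) - f(\underline{x}_t) - \langle \nabla f(\underline{x}_t), \tilde{x} - \underline{x}_t\rangle\bigr]$ and absorb it by tuning $\gamma_s$ so that $\tfrac{1}{2L} - \tfrac{\gamma_s \alpha_s}{2} \geq 0$; this is exactly why the theorem sets $\gamma_s = 1/(3L\alpha_s)$ in the first-order case, and the $1/(5L\alpha_s)$ choice in the zeroth-order case leaves slack to absorb the estimator variance.

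Summing the per-iteration inequality with the weights $\theta_t$ and invoking convexity of $f$ on the weighted average $\tilde{x}^s$, I obtain an epoch-level Lyapunov inequality
$$\mathbb{E}[f(\tilde{x}^s) - f(x^*)] + A_s \mathbb{E}\|x^s - x^*\|^2 \leq B_s\,\mathbb{E}[f(\tilde{x}^{s-1}) - f(x^*)] + A_{s-1} \mathbb{E}\|x^{s-1} - x^*\|^2 + E_s,$$
where $E_s = \sum_t \theta_t \eta_{s,t}$. Choosing $\eta_{s,t} = D_0/(sT_sL)$ makes $E_s = \mathcal{O}(D_0/(sL))$, which is small enough to contribute only the $(\log S+2)$ factor. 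The two-phase telescoping splits as follows. In the warm-up regime $s \leq s_0 = \lfloor \log n \rfloor + 1$ the parameters $\alpha_s = p_s = 1/2$ and $T_s = 2^{s-1}$ give $B_s = 1/2$, and a geometric sum yields the $2^{-(S+1)}$ bound. In the long-run regime $s > s_0$ the decay $\alpha_s = 2/(s-s_0+4)$ with constant $T_s$ turns the recursion into a standard accelerated recursion $\Gamma_s \mathbb{E}[f(\tilde{x}^s) - f(x^*)] + C\|x^s-x^*\|^2 \leq \Gamma_{s-1}\mathbb{E}[f(\tilde{x}^{s-1}) - f(x^*)] + C\|x^{s-1}-x^*\|^2 + \Gamma_s E_s$ with $\Gamma_s \sim 1/\alpha_s^2$, producing the $\mathcal{O}(1/(S-s_0+4)^2)$ rate and, after accounting for $T_{s_0} = \Theta(n)$, the $1/n$ factor in $48D_0/n$.

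For the zeroth-order case, the only new ingredient is controlling the bias and variance of the coordinate-wise estimator. Using $L$-smoothness of each $f_i$ one shows $\|\hat{\nabla}_{coord} f_i(x) - \nabla f_i(x)\| \leq \mu L\sqrt{d}/2$ (bias) and $\mathbb{E}\|\hat{\nabla}_{coord} f_i(x) - \nabla f_i(x)\|^2 = \mathcal{O}(\mu^2 L^2 d)$ (variance), which combine with $\mathcal{C}$ having diameter $D$ to give an extra inner-product error of order $D\mu L\sqrt{d^2/2 + 2d}$ and an extra squared-norm error of order $\mu^2 Ld/2$ in each per-iteration inequality. Propagating these through the same Lyapunov recursion produces exactly the additive $D\mu L\sqrt{d^2/2+2d} + \mu^2 Ld/2$ term in the warm-up phase and the $(S-s_0+4)$-scaled $\Delta^\mu$ term in the long run. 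The main obstacle I anticipate is the bookkeeping: getting the per-iteration inequality to telescope cleanly despite the weight discontinuity (the weight $\theta_{T_s}$ at the last iterate differs from the bulk weight so that $\tilde{x}^s$ couples correctly to the next epoch's pivot), and tracking the coupling constants between the function-gap and distance-to-optimum terms tightly enough to recover the stated $3D_0$, $48D_0/n$, $5D_0$, and $80D_0/n$ constants. The rest of the argument should then assemble from standard SCGS and Varag-style calculations.
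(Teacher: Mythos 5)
Your plan follows essentially the same route as the paper's proof: a per-iteration inequality obtained from the CondG optimality condition plus the SVRG-type variance bound $2L[f(\tilde{x})-f(\underline{x}_t)-\langle\nabla f(\underline{x}_t),\tilde{x}-\underline{x}_t\rangle]$ absorbed by the choice of $\gamma_s$, then a $\theta_t$-weighted epoch-level Lyapunov inequality, telescoped over the two phases $s\leq s_0$ and $s>s_0$ (where the paper verifies $\mathcal{L}_s\geq\mathcal{R}_{s+1}$ and lower-bounds $\mathcal{L}_S$ by $\Theta(n(S-s_0+4)^2/L)$), with the zeroth-order bias handled via Cauchy--Schwarz and the diameter $D$ and the extra variance via the coordinate-estimator lemma. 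Apart from minor bookkeeping (e.g.\ the accumulated tolerance enters as $\sum_t\eta_{s,t}$ rather than $\sum_t\theta_t\eta_{s,t}$, and the estimator-bias constants), this is the paper's argument and would assemble into a correct proof.
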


\begin{corollary}   \label{coro1}
	With parameters set in Theorem \ref{theorem1}, for \emph{convex} problems, we have ($\tilde{\mathcal{O}}$ hides a logarithmic factor)
	
	$\bullet$ For the first-order case, the gradient query complexity can be bounded as \[N_{GQO}=\begin{cases}
		\tilde{\mathcal{O}}\left(n\log\frac{D_0}{\epsilon}\right), &\epsilon\geq\tilde{\mathcal{O}}\left(\frac{D_0}{n}\right)\\[2pt]
		\tilde{\mathcal{O}}\left(n\log n+\sqrt{\frac{nD_0}{\epsilon}}\right), &\epsilon<\tilde{\mathcal{O}}\left(\frac{D_0}{n}\right)\\
	\end{cases}\]
	
	$\bullet$ For the zeroth-order case, the function query complexity can be bounded as \[N_{FQO}=\begin{cases}
		\tilde{\mathcal{O}}\left(nd\log\frac{D_0}{\epsilon}\right), &\epsilon\geq\tilde{\mathcal{O}}\left(\frac{D_0}{n}\right)\\[2pt]
		\tilde{\mathcal{O}}\left(nd\log n+d\sqrt{\frac{nD_0}{\epsilon}}\right), &\epsilon<\tilde{\mathcal{O}}\left(\frac{D_0}{n}\right)\\
	\end{cases}\]
	
	$\bullet$ For both cases, the linear oracle complexity can be bounded as \[N_{LO} = \begin{cases}
	   \tilde{\mathcal{O}}\left(\frac{1}{\epsilon^2}\right), &\epsilon\geq\tilde{\mathcal{O}}\left(\frac{D_0}{n}\right)\\[2pt]
		\tilde{\mathcal{O}}\left(n^2 + \frac{n}{\epsilon}\right), &\epsilon<\tilde{\mathcal{O}}\left(\frac{D_0}{n}\right)\\
	\end{cases}\]
\end{corollary}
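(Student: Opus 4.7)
\medskip
\noindent\textbf{Proof proposal for Corollary \ref{coro1}.}
The plan is to reduce the corollary to a pure bookkeeping argument on top of Theorem \ref{theorem1}: first solve for the smallest number of epochs $S$ that makes the convergence bound at most $\epsilon$, then count the per-epoch cost for each of the three oracles (GQO, FQO, LO) and multiply. The two regimes of the corollary correspond exactly to the two branches of Theorem \ref{theorem1} (whether $S\le s_0$ or $S>s_0$), so I will handle them separately.

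\emph{Stopping time $S$.} Using the first branch, $\epsilon\ge \tilde{\mathcal O}(D_0/n)$ iff the bound $\tfrac{3D_0(\log S+2)}{2^{S+1}}\le\epsilon$ can be met with $S\le s_0=\lfloor\log n\rfloor+1$; solving gives $S=\tilde{\mathcal O}(\log(D_0/\epsilon))$. Otherwise $S>s_0$ and I invert $\tfrac{48D_0(\log S+2)}{n(S-s_0+4)^2}\le\epsilon$ to obtain $S-s_0=\tilde{\mathcal O}(\sqrt{D_0/(n\epsilon)})$, i.e.\ $S=\tilde{\mathcal O}(\log n+\sqrt{D_0/(n\epsilon)})$. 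The same computation (with $D_0$ replaced by the zeroth-order constant) governs the zeroth-order case, after I first choose the smoothing parameter $\mu$ small enough — concretely $\mu\lesssim \epsilon/(SLD\sqrt{d^2+d})$ — so that $\Delta^\mu$ is absorbed into the leading $\epsilon$ term. This is essentially algebra and only contributes logarithmic factors.

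\emph{Gradient / function query count.} Each epoch $s$ calls GQO exactly $n$ times for the pivot gradient $\tilde g$ plus $2T_s$ times in the inner loop (once for $\nabla f_{i_t}(\underline x_t)$ and once for $\nabla f_{i_t}(\tilde x)$). Since $T_s=2^{s-1}\le 2^{s_0-1}=\mathcal O(n)$ for $s\le s_0$ and $T_s=T_{s_0}=\mathcal O(n)$ for $s>s_0$, the per-epoch cost is $\mathcal O(n)$ throughout. Multiplying by $S$ yields $\tilde{\mathcal O}(n\log(D_0/\epsilon))$ when $\epsilon\ge \tilde{\mathcal O}(D_0/n)$ and $\tilde{\mathcal O}(n\log n+\sqrt{nD_0/\epsilon})$ otherwise, matching the stated $N_{\mathrm{GQO}}$. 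The zeroth-order version is identical except that each coordinate estimator $\hat\nabla_{\mathrm{coord}}$ costs $2d$ FQO calls, giving an extra factor of $d$, hence $N_{\mathrm{FQO}}$.

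\emph{Linear oracle count.} For a single invocation of Algorithm~\ref{algo2}, the CondG subroutine is nothing but the standard Frank–Wolfe method on the quadratic $h$, whose smoothness is $\gamma_s\tau+1$; since $\tau=0$ here, a textbook CG analysis gives LO cost $\mathcal O(D^2/\eta_{s,t})=\mathcal O(sT_sLD^2/D_0)$ per call. The inner loop makes $T_s$ such calls, so per epoch $s$ the LO cost is $\mathcal O(sT_s^2LD^2/D_0)$. For $\epsilon\ge \tilde{\mathcal O}(D_0/n)$ (so $S\le s_0$, $T_s=2^{s-1}$) the dominant term is $S^2T_S^2\lesssim (\log(D_0/\epsilon))^2(D_0/\epsilon)^2=\tilde{\mathcal O}(1/\epsilon^2)$. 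For $\epsilon<\tilde{\mathcal O}(D_0/n)$, I split the sum at $s_0$: the $s\le s_0$ part is $\tilde{\mathcal O}(s_0\cdot 4^{s_0})=\tilde{\mathcal O}(n^2)$, and the $s>s_0$ part, with $T_s=\Theta(n)$, equals $\tilde{\mathcal O}(n^2\sum_{s=s_0+1}^S s)=\tilde{\mathcal O}(n^2(s_0(S-s_0)+(S-s_0)^2))$. Plugging $S-s_0=\sqrt{D_0/(n\epsilon)}$ and applying AM–GM to the cross term $n^{3/2}\sqrt{D_0/\epsilon}\le \tfrac12(n^2+nD_0/\epsilon)$ collapses this to $\tilde{\mathcal O}(n^2+n/\epsilon)$, as claimed.

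\emph{Main obstacle.} The nontrivial step is the LO tally in the second regime: one has to recognize that $T_s$ \emph{saturates} at $T_{s_0}=\Theta(n)$ past epoch $s_0$, then perform the AM–GM bookkeeping above to replace the geometric mean by the arithmetic one so that the final answer is the clean $\tilde{\mathcal O}(n^2+n/\epsilon)$ rather than an uglier mixed expression. Everything else is direct substitution into Theorem \ref{theorem1} and elementary oracle counting.
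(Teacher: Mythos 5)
Your proposal is correct and follows the same standard counting argument the paper implicitly relies on (Corollary \ref{coro1} is stated without a separate proof): invert the two branches of Theorem \ref{theorem1} to get the epoch count $S$, observe that the per-epoch gradient/function-query cost is $\mathcal{O}(n)$ (resp.\ $\mathcal{O}(nd)$) because $T_s$ saturates at $T_{s_0}=\Theta(n)$, and bound the LO calls of each CondG invocation by $\mathcal{O}(D^2/\eta_{s,t})$ as in \citet{jaggi2013revisiting}, which after summing over epochs (including your AM--GM step in the small-$\epsilon$ regime) yields exactly $\tilde{\mathcal{O}}(1/\epsilon^2)$ and $\tilde{\mathcal{O}}(n^2+n/\epsilon)$. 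Your treatment of the zeroth-order smoothing error by choosing $\mu$ small enough is also the intended reading, since $\mu$ does not affect any oracle count.
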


From Table \ref{table1} it can be seen that the known best algorithms with lowest gradient query complexity for solving convex problems are CGS and STORC, whose results are $\mathcal{O}\left(n\epsilon^{-1/2}\right)$ and $\mathcal{O}\left(n\log\left(\epsilon^{-1}\right)+\epsilon^{-3/2}\right)$ respectively. CGS outperforms STORC when $\epsilon < n^{-1}$ and STORC takes the lead otherwise. But it is easy to verify that the gradient query complexity of ARCS is always lower that of CGS and STORC. The gradient query complexity of ARCS is $\tilde{\mathcal{O}}\left(n\log\left(\epsilon^{-1}\right)\right)$ when $\epsilon \geq 3D_0/n$ and $\tilde{\mathcal{O}}\left(n\log\left(n\right) + n^{1/2}\epsilon^{-1/2}\right)$ = $\tilde{\mathcal{O}}\left( n^{1/2}\epsilon^{-1/2}\right)$ otherwise. Thus ARCS outperforms all existing algorithms in terms of gradient query complexity.

However, Theorem \ref{theorem1} (Theorem \ref{theorem2} as well) implies that ARCS has a higher linear oracle complexity than CGS and STORC. To explain this, we make a comparison between ARCS and STORC since they are both accelerated variance-reduced stochastic conditional gradient sliding algorithms. For completeness we include STORC and its key theorems in the appendix. The key differences between ARCS and STORC lie in a) the choice of $\gamma_s$ and $\alpha_s$, b) the choice of $\{\underline{x}_t\}$, $\{x_t\}$ and $\{\bar{x}_t\}$, c) minibatch of stochastic gradients.

To be specific, a) the choice of $\gamma_s$ and $\alpha_s$ contributes most to the difference in convergence results. We have $\alpha\gamma = \mathcal{O}(L^{-1})$ for each inner iteration in both ARCS and STORC. For each epoch (\emph{i.e.}, $s$ is fixed), $\gamma$ and $\alpha$ in ARCS are constant while in STORC, $\alpha$ diminish with a rate of $\mathcal{O}(t^{-1})$. This adds to $\eta_{s,t}$ a factor of $\mathcal{O}(t^{-1})$ so that $\eta_{s,t}$ can be chosen $\mathcal{O}(t)$ times larger. Thus the linear oracle complexity is lowered down (the linear oracle complexity is proportional to $\eta_{s,t}^{-1}$ from \citealt{jaggi2013revisiting}). However, this comes with a price. The decrease of $\alpha$ requires a larger minibatch of stochastic gradients in each inner iteration to lower down the variance. Thus STORC has a higher gradient query complexity, which becomes even higher than CGS when $\epsilon < n^{-1}$. b) the choice of $\{\underline{x}_t\}$, $\{x_t\}$ and $\{\bar{x}_t\}$ leverages the acceleration technique and yields accelerated convergence rates for both ARCS and STORC. c) minibatch of stochastic gradients in STORC is required by the choice of $\gamma_s$ and $\alpha_s$ to lower down the variance of stochastic gradients in the analyses. The points discussed above also work on CGS. In fact, CGS is a deterministic conditional gradient sliding algorithm and it a) benefits from choice of $\gamma_s$ and $\alpha_s$ as STORC, b) maintains similar acceleration sequences $\{z^s\}$, $\{x^s\}$ and $\{y^s\}$ (see (\ref{cgs})). Next we give convergence results of our ARCS solving \emph{strongly-convex} problems.

\begin{theorem}[Strongly-convex]	\label{theorem2}
	Suppose Assumptions \ref{a2} and \ref{a3} hold. Denote $s_0=\lfloor\log n\rfloor+1$, set \[T_s=\begin{cases}
		2^{s-1},& s\leq s_0\\
		T_{s_0},& s> s_0\\
	\end{cases}, \; \alpha_s=\begin{cases}
		\frac{1}{2},& s\leq s_0\\[2pt]
		\min\left\{\sqrt{\frac{n}{4\varsigma}}, \frac{1}{2}\right\},& s> s_0\\
	\end{cases}\] \[p_s=\frac{1}{2}, \; \theta_t=\begin{cases}
		\Gamma_{t-1}-(1-\alpha_s-p_s)\Gamma_{t}, &t\leq T_s-1\\
		\Gamma_{t-1}, &t=T_{s}\\
	\end{cases}, \;\eta_{s,t}=\begin{cases}
		\frac{D_0}{sT_sL},& s\leq s_0\\[2pt]
		\frac{\left(\frac{4}{5}\right)^{s-s_0-1}D_0}{snL},& s>s_0 \textrm{ and } n\geq\varsigma\\[2pt]
		\frac{\left(\frac{1}{\Gamma_{T_{s_0}}}\right)^{s-s_0-1}D_0}{snL},& s>s_0 \textrm{ and } n<\varsigma\\
	\end{cases}\]
	where $\varsigma, \Gamma_{t}, D_0$ will be specified below for two cases respectively.
	
	$\bullet$ For the first-order case, set $\gamma_s=\frac{1}{3L\alpha_s}, \varsigma=\frac{3L}{4\tau}, \Gamma_{t}=\left(1+\tau\gamma_s\right)^{t}, D_0=4(f(\tilde{x}^0)-f(x^*))+3L\|x^0-x^*\|^2$. We have
	\[\begin{aligned}
		\mathbb{E}\left[f(\tilde{x}^{S})-f(x^*)\right]\leq \begin{cases}
			\frac{3D_0(\log S + 2)}{2^{S+1}},& s\leq s_0\\[2pt]
			\left(\frac{4}{5}\right)^{S-s_0}\frac{5D_0(\log S + 2)}{n},& s>s_0 \textrm{ and } n\geq\varsigma\\[2pt]
			\left(1+\frac{1}{2}\sqrt{\frac{n\tau}{3L}}\right)^{-(S-s_0)}\frac{5D_0(\log S + 2)}{n},& s>s_0 \textrm{ and } n<\varsigma\\
		\end{cases}
	\end{aligned}\]
	
	$\bullet$ For the zeroth-order case, set $\gamma_s=\frac{1}{12dL\alpha_s}, \varsigma=\frac{5L}{4\tau}, \Gamma_{t}=\left(1+\frac{\tau\gamma_s}{2}\right)^{t}, D_0=4(f(\tilde{x}^0)-f(x^*))+5L\|x^0-x^*\|^2$. We have
	\[\begin{aligned}
	\small 
		\mathbb{E}\left[f(\tilde{x}^{S})-f(x^*)\right] \leq\hspace{-0.08in}
		\begin{cases}
			\frac{5D_0(\log S + 2)}{2^{S+1}}+\frac{\mu^2L^2d\left(d+4\right)}{4\tau}+2\mu^2Ld,& s\leq s_0\\[2pt]
			\left(\frac{4}{5}\right)^{S-s_0}\frac{8D_0(\log S + 2)}{n}+\Delta_1^\mu,& s>s_0,\; n\geq\varsigma\\[2pt]
			\begin{aligned}
				\left(1+\frac{1}{4}\sqrt{\frac{n\tau}{5L}}\right)^{-(S-s_0)}&\frac{8D_0(\log S + 2)}{n}\\&+\Delta_2^\mu
			\end{aligned}
			,& s>s_0,\; n<\varsigma\\
		\end{cases}
	\end{aligned}\]
	where $\Delta_1^\mu = \frac{5\mu^2L^2d(d+4)}{\tau}+12\mu^2Ld, \Delta_2^\mu = \frac{5\mu^2L^2d(d+4)}{\tau}+4\mu^2Ld\left(1+2\sqrt{\frac{5L}{n\tau}}\right)$.
\end{theorem}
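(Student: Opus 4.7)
The plan is to follow the same scaffolding as the proof of Theorem~\ref{theorem1}, but to route the $\tau$-strong-convexity of $f$ through the quadratic penalty $\tfrac{\tau}{2}\|x-\underline{x}_t\|^2$ appearing in the CondG objective $h$. This extra term contracts the Euclidean piece of the Lyapunov function by $1/(1+\tau\gamma_s)$ per inner step in the first-order case, and by $1/(1+\tau\gamma_s/2)$ in the zeroth-order case, where the extra factor of $2$ must be reserved to absorb the bias of the coordinate-wise estimator. This is precisely the choice of $\Gamma_t$ stated in the theorem, and it is the only structural change from the convex analysis.

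\textbf{One-step inequality.} First I would derive a single-iteration bound of the form
\[
\gamma_s\mathbb{E}[f(\bar{x}_t)-f(x^*)] + \tfrac{1}{2}\mathbb{E}\|x_t-x^*\|^2 \leq (1-\alpha_s-p_s)\gamma_s[f(\bar{x}_{t-1})-f(x^*)] + p_s\gamma_s[f(\tilde{x})-f(x^*)] + \tfrac{1}{2(1+\tau\gamma_s)}\|x_{t-1}-x^*\|^2 + \mathcal{E}_t,
\]
where $\mathcal{E}_t$ collects (i) the CondG slack $\eta_{s,t}$ from the exit test $V_{\cdot}(u_t)\leq\eta_{s,t}$, (ii) the SVRG variance $\mathbb{E}\|G_t-\nabla f(\underline{x}_t)\|^2 \leq \tfrac{2L}{n}[f(\underline{x}_t)-f(x^*) + f(\tilde{x})-f(x^*)]$ obtained from $L$-smoothness of each $f_i$, and (iii) in the zeroth-order case, bias/variance of the coordinate-wise estimator contributing terms of order $\mu^2 L^2 d(d+4)$ and $\mu^2 L d$ that will eventually become $\Delta_1^\mu, \Delta_2^\mu$. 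The constants $\gamma_s=1/(3L\alpha_s)$ and $\gamma_s=1/(12dL\alpha_s)$ are calibrated so that the variance in (ii)--(iii) is absorbed into the descent coefficient of $f(\underline{x}_t)-f(x^*)$ (after invoking convexity in $\underline{x}_t = $ convex combination of $\bar{x}_{t-1}, x_{t-1}, \tilde{x}$).

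\textbf{Epoch-level recursion.} Next I would multiply the one-step bound by $\Gamma_t/\gamma_s$ and telescope over $t=1,\dots,T_s$. The weights $\theta_t=\Gamma_{t-1}-(1-\alpha_s-p_s)\Gamma_t$ are chosen exactly so that the $\bar{x}_{t-1}$ and $\bar{x}_t$ cross-terms telescope into $\Gamma_{T_s}[f(\tilde{x}^s)-f(x^*)]$ on the left, with $\tilde{x}^s$ equal to the $\theta_t$-weighted average of the $\bar{x}_t$. Using $p_s=\tfrac12$ to balance the $f(\tilde{x})$ term against $f(\tilde{x}^{s-1})-f(x^*)$, and the strong-convexity bound $\tfrac{\tau}{2}\|x^{s-1}-x^*\|^2\leq f(x^{s-1})-f(x^*)$ when folding in the initial Euclidean distance, one obtains an epoch-level recursion
\[
\mathbb{E}[f(\tilde{x}^s)-f(x^*)] \leq \tfrac{1}{\Gamma_{T_s}}\,C_s\,\mathbb{E}[f(\tilde{x}^{s-1})-f(x^*)] + R_s T_s\eta_{s,t} + \text{(zeroth-order bias)},
\]
with a mild constant $C_s$ and Euclidean radius factor $R_s\leq D^2 L$.

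\textbf{Three-regime unrolling and main obstacle.} Finally I would iterate this recursion across epochs in three regimes. For $s\leq s_0$ the analysis coincides with the convex case (since strong-convexity is not yet exploited by the choice $\alpha_s=\tfrac12$ and doubling $T_s=2^{s-1}$), and reusing the argument behind Theorem~\ref{theorem1} yields the $2^{-(S+1)}(\log S + 2)$ rate. For $s>s_0$ with $n\geq\varsigma$, $\alpha_s=\tfrac12$ and $T_s=T_{s_0}\asymp n$ give $\Gamma_{T_{s_0}}^{-1}=(1+\tau\gamma_s)^{-T_{s_0}}\leq 4/5$ by the choice $\varsigma=3L/(4\tau)$ (resp.\ $5L/(4\tau)$), producing the geometric $(4/5)^{S-s_0}$ factor. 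For $s>s_0$ with $n<\varsigma$, the condition-number-aware choice $\alpha_s=\sqrt{n/(4\varsigma)}$ produces the accelerated contraction $(1+\tfrac12\sqrt{n\tau/(3L)})^{-1}$ in the first-order case and its $1/4$-analog in the zeroth-order case; the geometrically shrinking $\eta_{s,t}$ guarantees that the $\eta$-slack does not destroy this rate. I expect the main obstacle to be the zeroth-order bookkeeping: the bias of $\hat{\nabla}_{coord}$ multiplies the primary descent, and the additional factor of $d$ in $\gamma_s$ together with the factor $2$ hidden in $1+\tau\gamma_s/2$ must be spent exactly in the right places so that the accumulated bias over all epochs collapses into the clean closed-form expressions $\Delta_1^\mu$ and $\Delta_2^\mu$ and not into anything worse.
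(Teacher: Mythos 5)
Your high-level plan is the paper's plan: a one-step bound in which the prox term of the CondG objective together with the $\tau$-strong convexity of $f$ yields the $(1+\tau\gamma_s)$ contraction, with half of it reserved in the zeroth-order case to absorb the estimator bias (this is Lemma~\ref{lemma9} with $c=\tfrac12$), followed by a $\theta_t$-weighted telescope with $\theta_t=\Gamma_{t-1}-(1-\alpha_s-p_s)\Gamma_t$ and a three-regime unrolling exactly as in Theorems~\ref{theorem2a}--\ref{theorem2c}. But two of the steps as you actually describe them would fail. First, the variance bound you invoke, $\mathbb{E}\|G_t-\nabla f(\underline{x}_t)\|^2\le \tfrac{2L}{n}\left[f(\underline{x}_t)-f(x^*)+f(\tilde{x})-f(x^*)\right]$, is false for the single-sample estimator of Algorithm~\ref{algo1}: there is no $1/n$ factor, and the $f(\cdot)-f(x^*)$ form of such bounds requires $\nabla f(x^*)=0$, which does not hold for a constrained minimizer. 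The bound that works is Lemma~\ref{lemma4}, $\mathbb{E}\|\delta_t-\mathbb{E}[\delta_t]\|^2\le 2L\left[f(\tilde{x})-f(\underline{x}_t)-\langle\nabla f(\underline{x}_t),\tilde{x}-\underline{x}_t\rangle\right]$ (with a factor $8L$ and an additive $12\mu^2L^2d$ in the zeroth-order case), and it is absorbed not ``into the descent coefficient of $f(\underline{x}_t)-f(x^*)$'' --- no such negative term is available in the decomposition --- but against the $p_s\,l_f(\underline{x}_t,\tilde{x})$ piece of the smoothness upper bound, via the condition $p_s-\frac{L\alpha_s\gamma_s}{1+\tau\gamma_s-L\alpha_s\gamma_s}>0$ and convexity of $f$; this is precisely where $p_s=\tfrac12$ and $\gamma_s\alpha_s=\Theta(1/L)$ (resp.\ $\Theta(1/(dL))$) are spent.

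Second, your epoch-level recursion is stated for the scalar potential $\mathbb{E}[f(\tilde{x}^s)-f(x^*)]$ alone, with the carried-over distance handled by $\tfrac{\tau}{2}\|x^{s-1}-x^*\|^2\le f(x^{s-1})-f(x^*)$. This does not close: $x^{s-1}$ is the \emph{last} iterate of epoch $s-1$, while $\tilde{x}^{s-1}$ is the $\theta$-weighted \emph{average}, and your recursion provides no control of $f(x^{s-1})-f(x^*)$, so the converted term cannot be folded back into the previous epoch's gap. The paper instead keeps the composite potential $\frac{\gamma_s}{\alpha_s}\sum_t\theta_t\,\mathbb{E}[f(\tilde{x}^s)-f(x^*)]+\frac{\Gamma_{T_s}}{2}\mathbb{E}\|x^s-x^*\|^2$ and contracts both components simultaneously: the function gap by the factor coming from $p_s=\tfrac12$, and the distance by $\Gamma_{T_{s_0}}\ge\tfrac54$ when $n\ge\varsigma$, or, in the accelerated regime $n<\varsigma$, by the inequality $\sum_{t=1}^{T_{s_0}}\theta_t\ge\Gamma_{T_{s_0}}\left[1-\alpha_s-p_s+p_s\sum_{t=1}^{T_{s_0}}\Gamma_{t-1}\right]$, which rests on $\alpha_s\ge\tau\gamma_sT_{s_0}$ and $(1+a)^b\le 1+2ab$. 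That inequality is the real content behind the $\left(1+\tfrac12\sqrt{n\tau/(3L)}\right)^{-1}$ and $\left(1+\tfrac14\sqrt{n\tau/(5L)}\right)^{-1}$ rates and is absent from your sketch, which simply asserts the accelerated contraction appears. With the correct variance lemma and the two-term Lyapunov function the rest of your outline matches the paper's argument.
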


\begin{corollary}
	With parameters set in Theorem \ref{theorem2}, for \emph{strongly-convex} problems, we have ($\tilde{\mathcal{O}}$ hides a logarithmic factor)
	
	$\bullet$ For the first-order case, the gradient query complexity can be bounded as \[
	\begin{aligned}
	    N_{GQO}= \begin{cases}
		\tilde{\mathcal{O}}\left(n\log\left(\frac{D_0}{\epsilon}\right)\right), &\epsilon\geq \tilde{\mathcal{O}}(\frac{D_0}{n}) \textrm{ or } n\geq \varsigma\\[2pt]
		\tilde{\mathcal{O}}\left(n\log n+\sqrt{\frac{nL}{\tau}}\log\left(\frac{D_0}{n\epsilon}\right)\right), &\epsilon< \tilde{\mathcal{O}}(\frac{D_0}{n}) \textrm{ and } n< \varsigma\\
	\end{cases}
	\end{aligned}
	\]
	$\bullet$ For the zeroth-order case, the function query complexity can be bounded as \[
	\begin{aligned}
	    N_{FQO}= \begin{cases}
		\tilde{\mathcal{O}}\left(nd\log\left(\frac{D_0}{\epsilon}\right)\right), &\epsilon\geq\tilde{\mathcal{O}}(\frac{D_0}{n}) \textrm{ or } n\geq\varsigma\\[2pt]
		\tilde{\mathcal{O}}\left(nd\log n+d\sqrt{\frac{nL}{\tau}}\log\left(\frac{D_0}{n\epsilon}\right)\right), &\epsilon<\tilde{\mathcal{O}}(\frac{D_0}{n}) \textrm{ and } n<\varsigma
	\end{cases}
	\end{aligned}
	\]
	$\bullet$ For both cases, the linear oracle complexity can be bounded as \[N_{LO} = \begin{cases}
	   \tilde{\mathcal{O}}\left(\frac{1}{\epsilon^2}\right), &\epsilon\geq\tilde{\mathcal{O}}(D_0/n) \textrm{ or } n\geq\varsigma\\[2pt]
		\tilde{\mathcal{O}}\left(n^2 + \frac{n}{\epsilon}\right), &\epsilon<\tilde{\mathcal{O}}(D_0/n) \textrm{ and } n<\varsigma
	\end{cases}\]
\end{corollary}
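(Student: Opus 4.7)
The plan is to invert the three convergence bounds stated in Theorem~\ref{theorem2} to determine the number $S$ of outer epochs required to drive $\mathbb{E}[f(\tilde{x}^S)-f(x^*)]$ below $\epsilon$, and then to tally the per-epoch cost of gradient/function queries and of linear-oracle calls. Because the rate in Theorem~\ref{theorem2} already splits into three regimes ($s\le s_0$; $s>s_0$ with $n\ge\varsigma$; $s>s_0$ with $n<\varsigma$), the final complexity inherits the same case split shown in the statement.

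For the query complexity I would proceed epoch by epoch. Each epoch uses one full sweep for $\tilde{g}$ plus $T_s$ inner iterations, and each inner iteration queries two component gradients $\nabla f_{i_t}(\underline{x}_t)$ and $\nabla f_{i_t}(\tilde{x})$; in the zeroth-order case every component-gradient query is inflated by a factor $d$ because of the coordinate-wise estimator~(\ref{coord_estimator}). Thus one epoch costs $\mathcal{O}(n+T_s)$ GQO calls, or $\mathcal{O}((n+T_s)d)$ FQO calls. Since $T_s=2^{s-1}$ for $s\le s_0$ and $T_s=T_{s_0}=\Theta(n)$ afterwards, the cumulative query cost up to any epoch $S\ge s_0$ is $\tilde{\mathcal{O}}(nS)$, respectively $\tilde{\mathcal{O}}(ndS)$. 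Inverting the three rates then yields $S=\mathcal{O}(\log(D_0/\epsilon))\le s_0$ whenever $\epsilon\ge\tilde{\mathcal{O}}(D_0/n)$; $S-s_0=\mathcal{O}(\log(D_0/(n\epsilon)))$ when $\epsilon<\tilde{\mathcal{O}}(D_0/n)$ and $n\ge\varsigma$ (from the $(4/5)^{S-s_0}$ factor); and $S-s_0=\mathcal{O}(\sqrt{L/(n\tau)}\log(D_0/(n\epsilon)))$ in the remaining case from the slower $(1+\tfrac{1}{2}\sqrt{n\tau/3L})^{-(S-s_0)}$. Multiplying each $S$ by the per-epoch cost and merging with the $\tilde{\mathcal{O}}(n\log n)$ (or $\tilde{\mathcal{O}}(nd\log n)$) already accumulated over $s\le s_0$ produces the four query-complexity branches in the statement.

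For the linear-oracle count I would invoke the standard CondG guarantee (cf.\ \citep{jaggi2013revisiting}) that solving the subproblem~(\ref{condg_problem}) to accuracy $\eta_{s,t}$ costs $\mathcal{O}(1/\eta_{s,t})$ linear-oracle calls, which is also what the CondG inner loop in Algorithm~\ref{algo2} achieves. Since $\eta_{s,t}$ in Theorem~\ref{theorem2} depends only on $s$, the per-epoch total is $T_s/\eta_{s,t}$. Plugging in the three branches of $\eta_{s,t}$ and summing the resulting geometric series in $s$ (dominated by its last term) reduces each branch to the claimed bound: using $2^S\sim D_0/\epsilon$ for $s\le s_0$ gives $\tilde{\mathcal{O}}(1/\epsilon^2)$; while $(5/4)^{S-s_0}\sim D_0/(n\epsilon)$ (resp.\ $\Gamma_{T_{s_0}}^{S-s_0}\sim D_0/(n\epsilon)$) turns the post-$s_0$ contribution into $\tilde{\mathcal{O}}(nL/\epsilon)$, which combines with the $\tilde{\mathcal{O}}(n^2)$ accumulated up to $s_0$ to yield $\tilde{\mathcal{O}}(n^2+n/\epsilon)$.

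The main obstacle will be the zeroth-order branch, where Theorem~\ref{theorem2} leaves residual additive error terms $\Delta_1^\mu,\Delta_2^\mu$ involving the smoothing parameter $\mu$. The plan is to pick $\mu$ small enough (for instance $\mu^2=\mathcal{O}(\epsilon\tau/(L^2d^2))$) so that $\mu^2 L^2 d(d+4)/\tau$ and $\mu^2 Ld$ are each $\mathcal{O}(\epsilon)$ and can be absorbed into the target accuracy; one must verify that this choice does not add hidden $d$-factors beyond those already displayed. A secondary book-keeping obstacle is juggling the two thresholds ($\epsilon$ versus $D_0/n$ and $n$ versus $\varsigma$) together with the different forms of $\Gamma_t$ and $\gamma_s$ between the first- and zeroth-order cases, but writing out one merged expression for $S$ per regime before computing the three oracle costs should avoid redundant algebra.
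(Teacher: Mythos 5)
Your proposal is correct and is exactly the intended derivation: the paper states this corollary without a separate proof, and the standard argument is precisely what you describe --- invert the three-regime rates of Theorem \ref{theorem2} to get the required number of epochs $S$, charge $\mathcal{O}(n+T_s)$ gradient queries (times $d$ for the coordinate-wise estimator) and $\mathcal{O}(T_s/\eta_{s,t})$ linear-oracle calls per epoch via the standard CondG gap bound, sum the geometric series, and pick $\mu^2=\mathcal{O}(\epsilon\tau/(L^2d^2))$ so the zeroth-order bias terms $\Delta_1^\mu,\Delta_2^\mu$ are absorbed into $\epsilon$ without extra $d$-factors. No gaps beyond the routine book-keeping you already flag.
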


From Table \ref{table2} it can be seen that the known best algorithms with lowest gradient query oracle complexity for solving \emph{strongly-convex} problems are CGS and STORC, whose results are $\mathcal{O}\left(nL^{1/2}\tau^{-1/2}\log\left(\epsilon^{-1}\right)\right)$ and $\mathcal{O}\left(\left(n+L^2\tau^{-2}\right)\log\left(\epsilon^{-1}\right)\right)$. CGS outperforms STORC when $n < L^{3/2}\tau^{-3/2}$ and STORC takes the lead otherwise. But it is easy to verify that the gradient query complexity of ARCS is always lower that of CGS and STORC. The gradient query complexity of ARCS is $\tilde{\mathcal{O}}\left(n\log\left(\epsilon^{-1}\right)\right)$ when $\epsilon \geq 5D_0/n$ or $n\geq 3L/4\tau$ and $\tilde{\mathcal{O}}\left(n\log\left(n\right) + n^{1/2}L^{1/2}\tau^{-1/2}\log\left(\epsilon^{-1}\right)\right)$ = $\tilde{\mathcal{O}}\left(n^{1/2}L^{1/2}\tau^{-1/2}\log\left(\epsilon^{-1}\right)\right)$ otherwise. Thus ARCS outperforms all existing algorithms in terms of gradient query complexity. But the linear oracle complexity of ARCS is higher than that of CGS and STORC, which is discussed after Corollary \ref{coro1}.

\section{Experiments}
In this section, we validate the effectiveness of our ARCS with experiments on different machine learning tasks. We conduct two experiments on ARCS and other compared algorithms listed in Table \ref{table1} with five real-world datasets. Specifically, the first experiment is the low-rank matrix completion task, and the second experiment addresses the sparsity-constrained logistic regression problem.

\subsection{Low-Rank Matrix Completion Problem}
In this experiment, we intend to recover a low rank matrix by solving the following matrix completion problem:
\begin{equation}	\label{matrix}
	\begin{aligned}
		\min_{\|X\|_*\leq R}\sum_{(i, j)\in\Omega}\left(X_{i,j}-Y_{i,j}\right)^2
	\end{aligned}
\end{equation}
where $\|\cdot\|_*$ denotes the nuclear norm. $Y\in\mathbb{R}^{d_1\times d_2}$ is a matrix whose elements were partly observed, and $\Omega$ denotes the set of subscripts of observed elements. Following \citet{gu2019asynchronous}, we use the low-rank matrix completion problem to achieve image recovery such that $Y$ in (\ref{matrix}) is the matrix of an incomplete gray-scale image \footnote{The gray-scale images can be found at \url{https://homepages.cae.wisc.edu/~ece533/images/}}, and the solution $X$ is a low rank matrix of the complete image we get. Specifically, we choose five images, which are Barbara ($512\times512$ pixels), Cameraman ($256\times256$ pixels), Goldhill ($512\times512$ pixels), Lena ($512\times512$ pixels) and Mountain ($640\times480$ pixels). To get incomplete images, we eliminate 30\% of the pixels in each of them. Note that for the matrix completion problem (\ref{matrix}) the zeroth-order coordinate-wise gradient estimator (\ref{coord_estimator}) happens to be the true gradient, and the number of function query to construct a coordinate estimator of gradient is $2d$ times of the number of gradient query to construct a true gradient. Thus the figures of results for \emph{zeroth-order} case are exactly the same as that for the \emph{first-order} case, except that the $x$-axis is slightly different. The parameters are set according to Theorem \ref{theorem2} since the problem is quadratic. For the three variance-reduced algorithms, \emph{i.e.}, ARCS, STORC and SPIDER CGS, we use a mini batch of 256 and for SCGS, we set the mini batch according to \citep[Algo.~4]{lan2016conditional} since a mini batch of 256 leads to poor performance of SCGS. The results are shown in Figure \ref{mat}, where (a)-(e) are results for the \emph{first-order} case and (f)-(j) are results for the \emph{zeroth-order} case. It can be seen that our ARCS outperform all other algorithms compared in terms of gradient/function query complexity.

\begin{figure}
	\vspace{-9pt}
	\hspace{-10pt}
	\begin{minipage}[t]{0.5\linewidth}
	    \centering
    	\includegraphics[width = \textwidth]{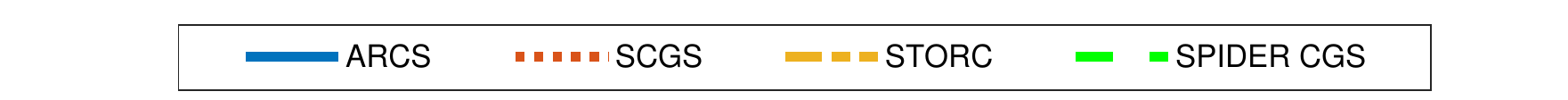}
	\end{minipage}
	\hspace{-12pt}
	\begin{minipage}[t]{0.5\linewidth}
	    \centering
    	\includegraphics[width = \textwidth]{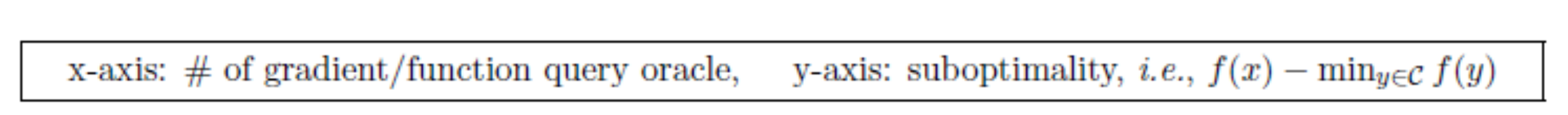}
	\end{minipage}
	\vspace{-14pt}
\end{figure}

\begin{figure}[tb!]
	\centering
	\hspace{-10pt}
	\subfigure[Barbara]{
		\begin{minipage}[t]{0.19\linewidth}
			\centering
			\includegraphics[width = 1\textwidth]{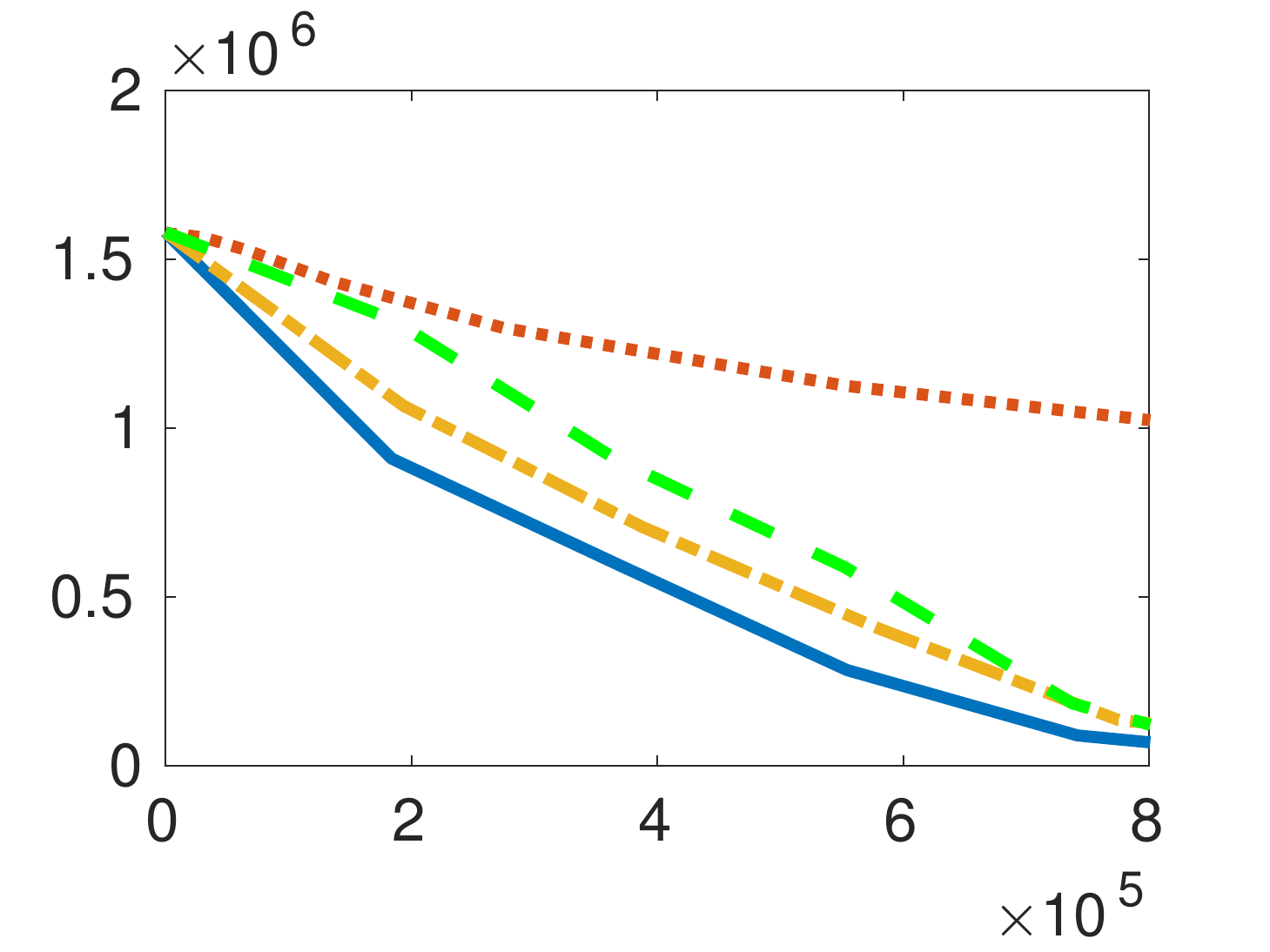}
		\end{minipage}
	}
	\hspace{-12pt}
	\subfigure[Cameraman]{
		\begin{minipage}[t]{0.19\linewidth}
			\centering
			\includegraphics[width = 1\textwidth]{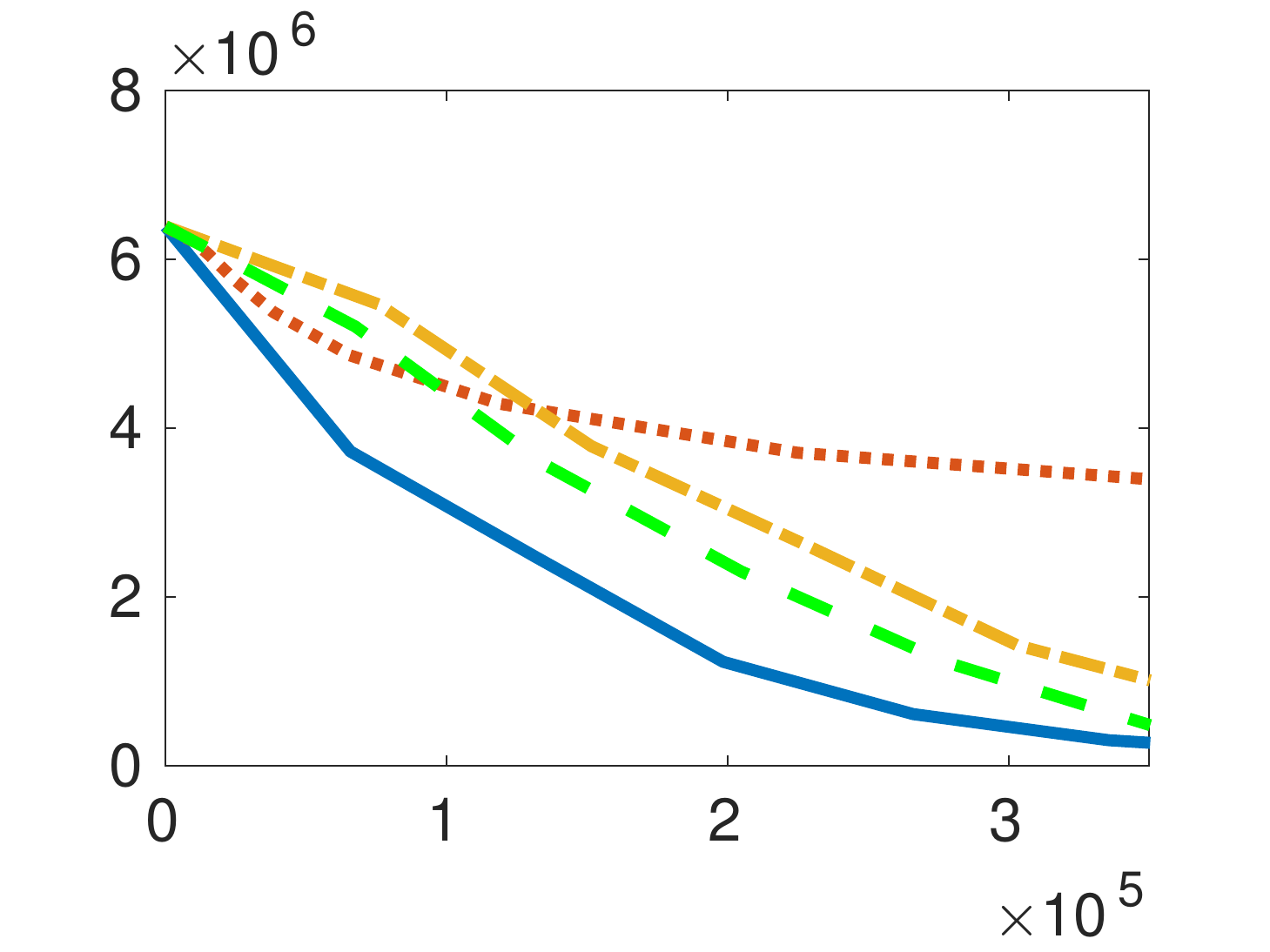}
		\end{minipage}
	}
	\hspace{-12pt}
	\subfigure[Goldhill]{
		\begin{minipage}[t]{0.19\linewidth}
			\centering
			\includegraphics[width = 1\textwidth]{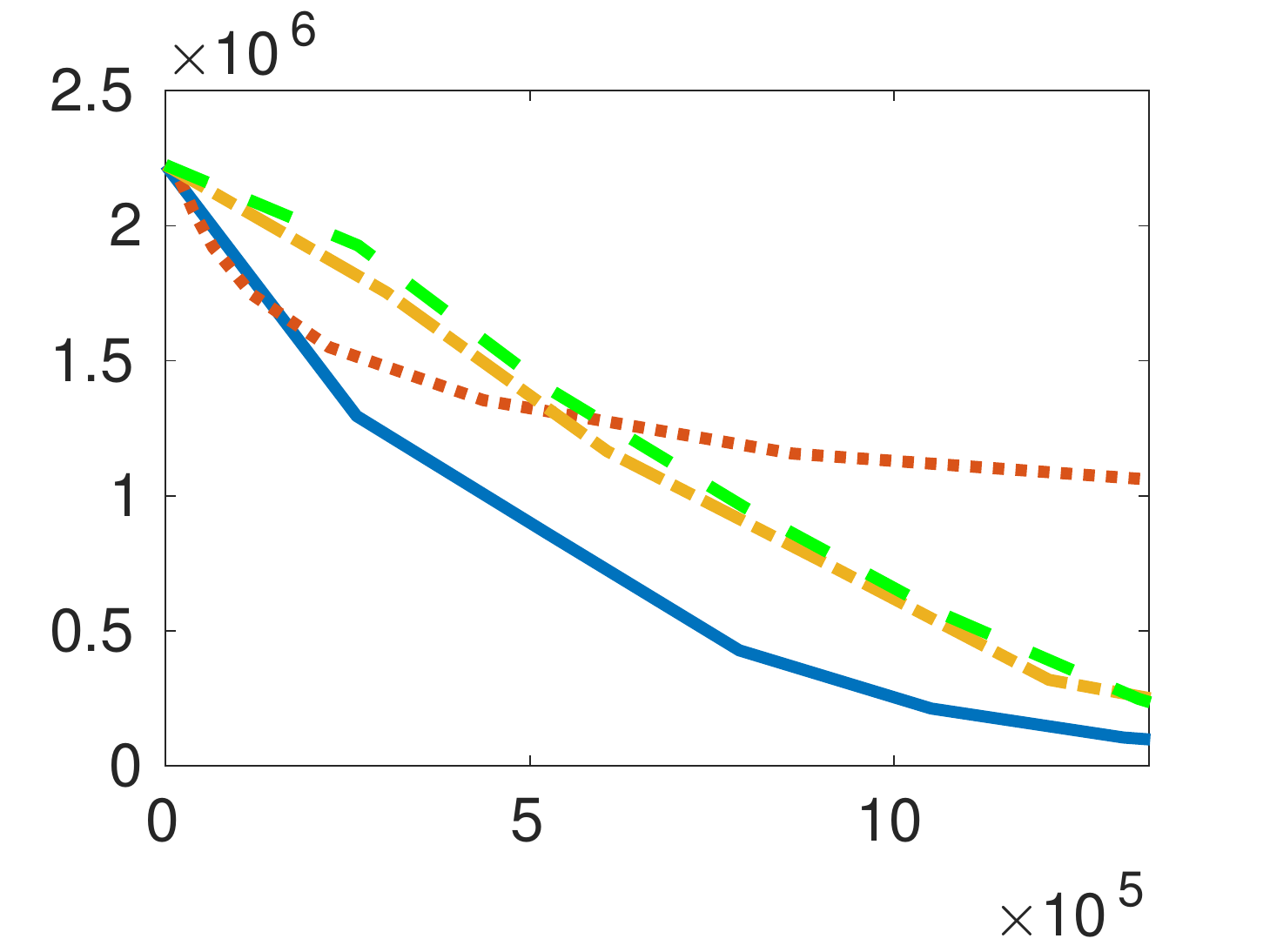}
		\end{minipage}
	}
	\hspace{-12pt}
	\subfigure[Lena]{
		\begin{minipage}[t]{0.19\linewidth}
			\centering
			\includegraphics[width = 1\textwidth]{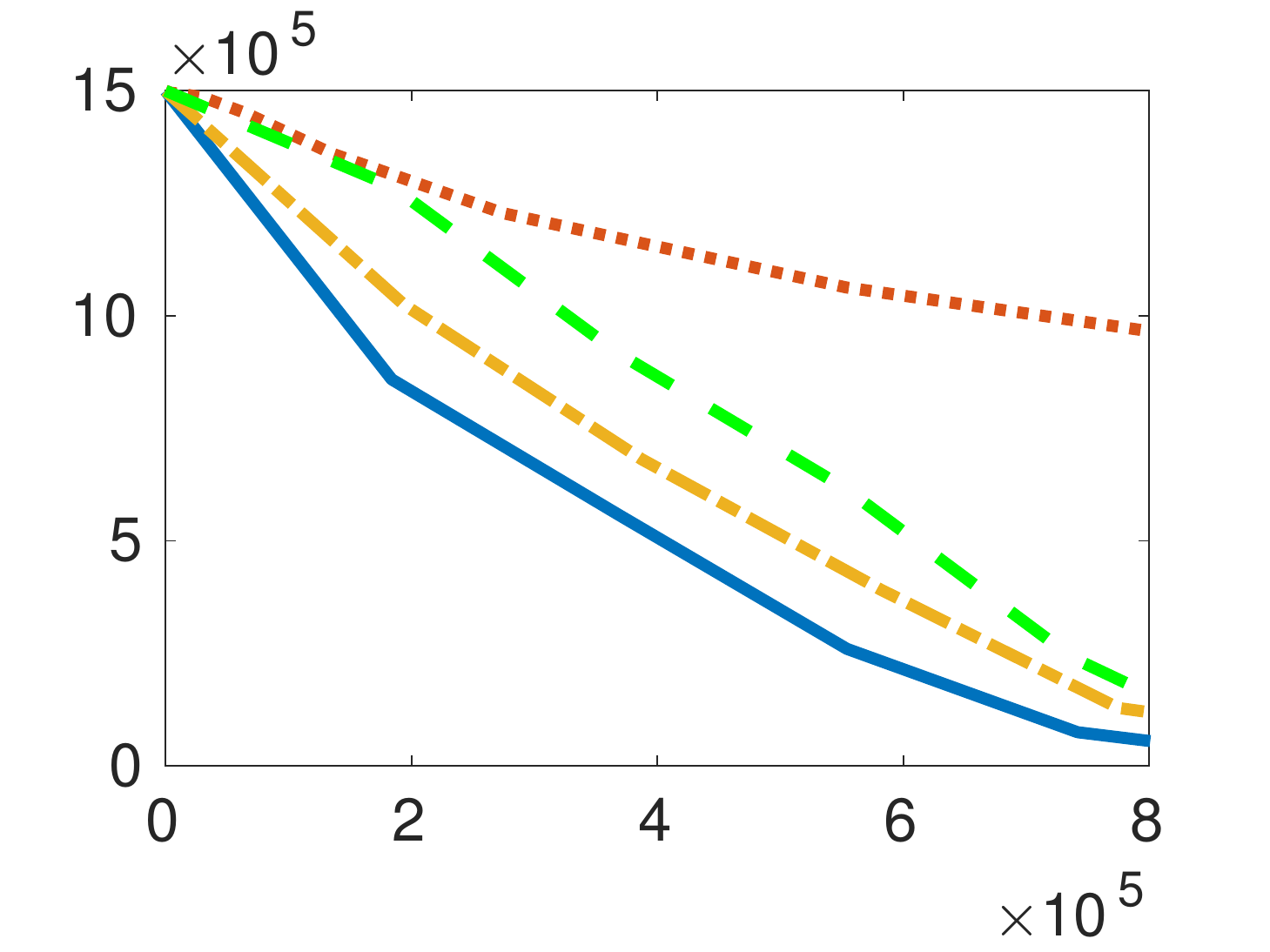}
		\end{minipage}
	}
	\hspace{-12pt}
	\subfigure[Mountain]{
		\begin{minipage}[t]{0.19\linewidth}
			\centering
			\includegraphics[width = 1\textwidth]{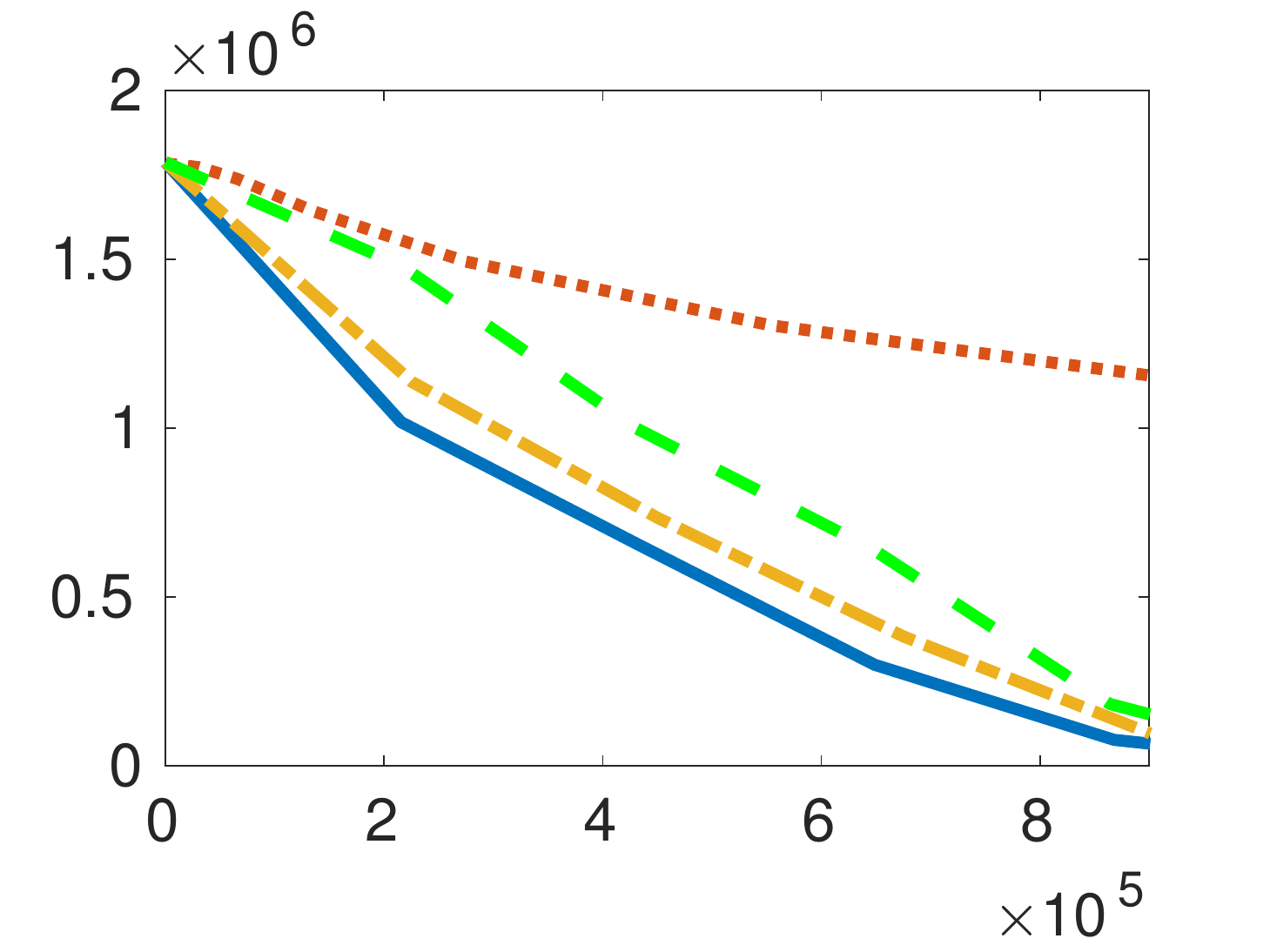}
		\end{minipage}
	}
	\hspace{-10pt}
\vspace{-5pt}
	
	\hspace{-10pt}
	\subfigure[Barbara]{
		\begin{minipage}[t]{0.19\linewidth}
			\centering
			\includegraphics[width = 1\textwidth]{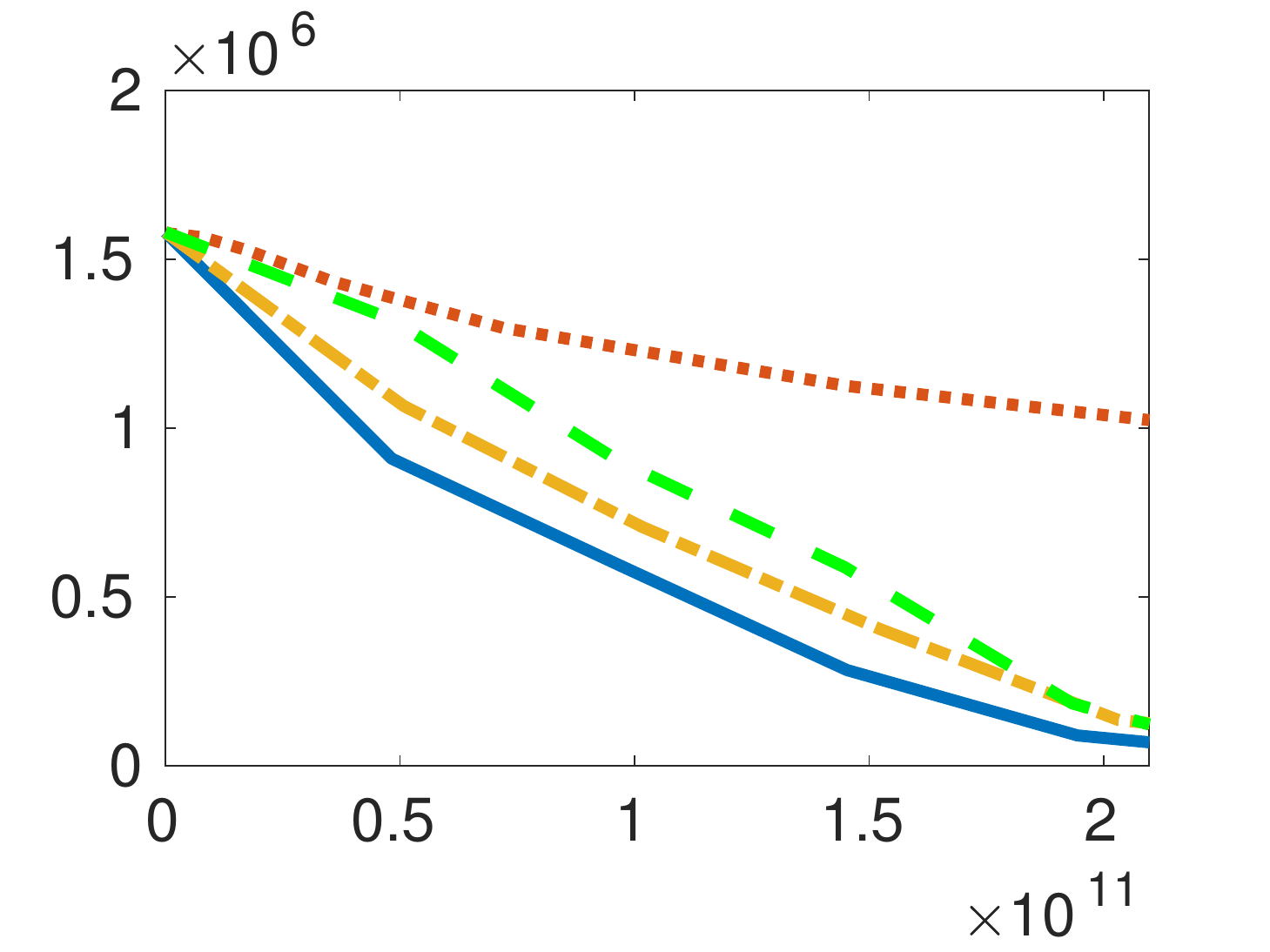}
		\end{minipage}
	}
	\hspace{-12pt}
	\subfigure[Cameraman]{
		\begin{minipage}[t]{0.19\linewidth}
			\centering
			\includegraphics[width = 1\textwidth]{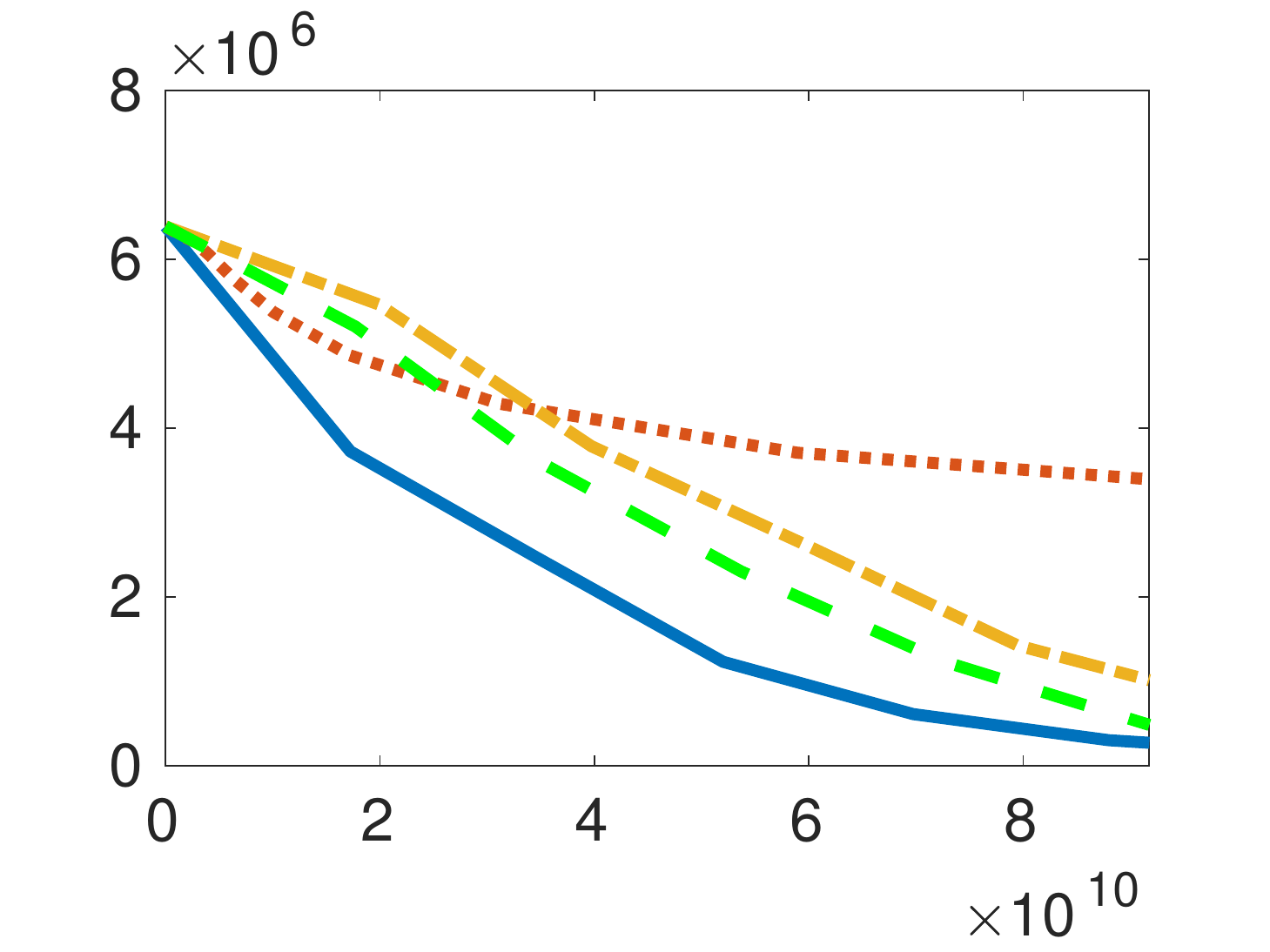}
		\end{minipage}
	}
	\hspace{-12pt}
	\subfigure[Goldhill]{
		\begin{minipage}[t]{0.19\linewidth}
			\centering
			\includegraphics[width = 1\textwidth]{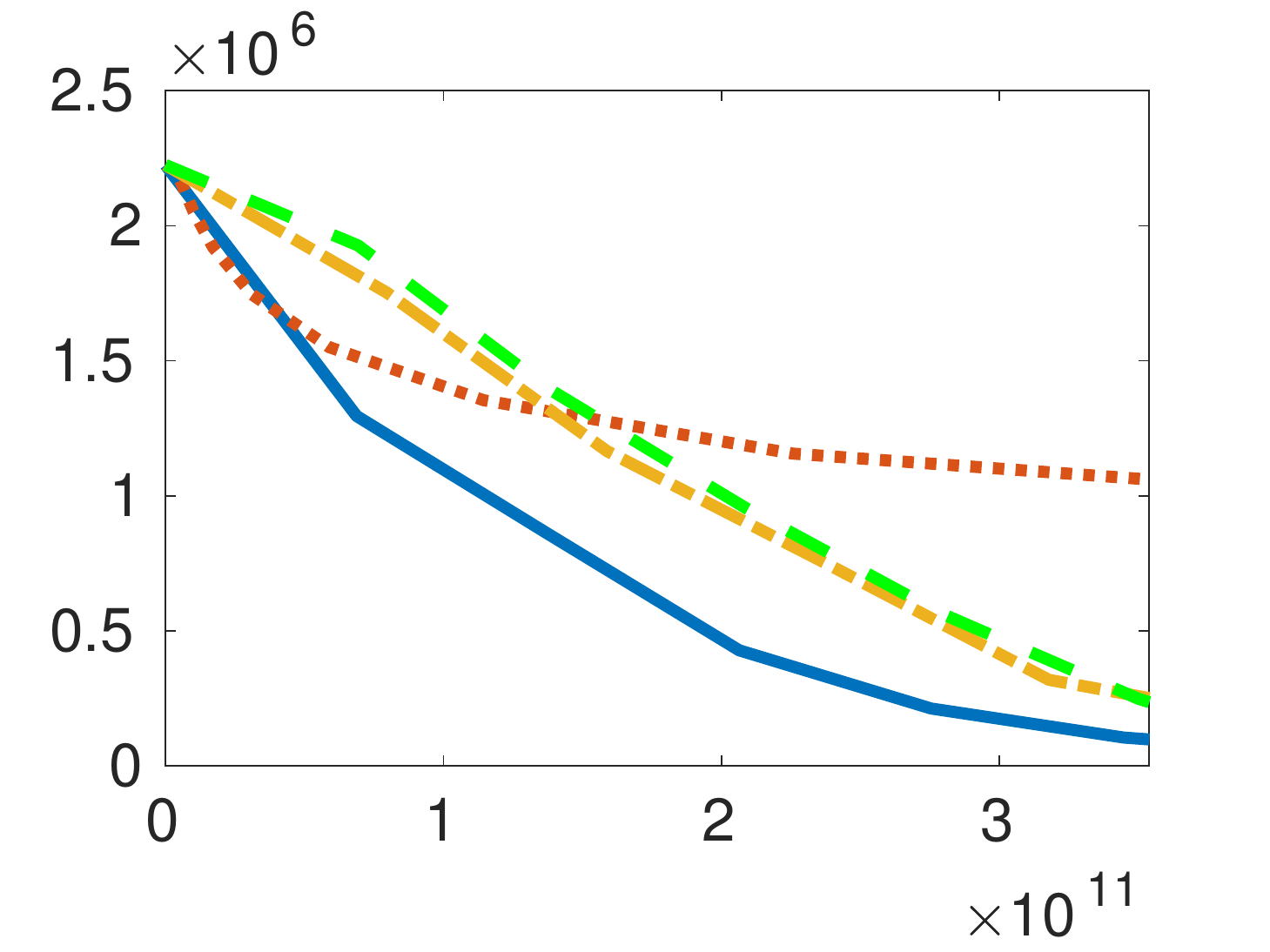}
		\end{minipage}
	}
	\hspace{-12pt}
	\subfigure[Lena]{
		\begin{minipage}[t]{0.19\linewidth}
			\centering
			\includegraphics[width = 1\textwidth]{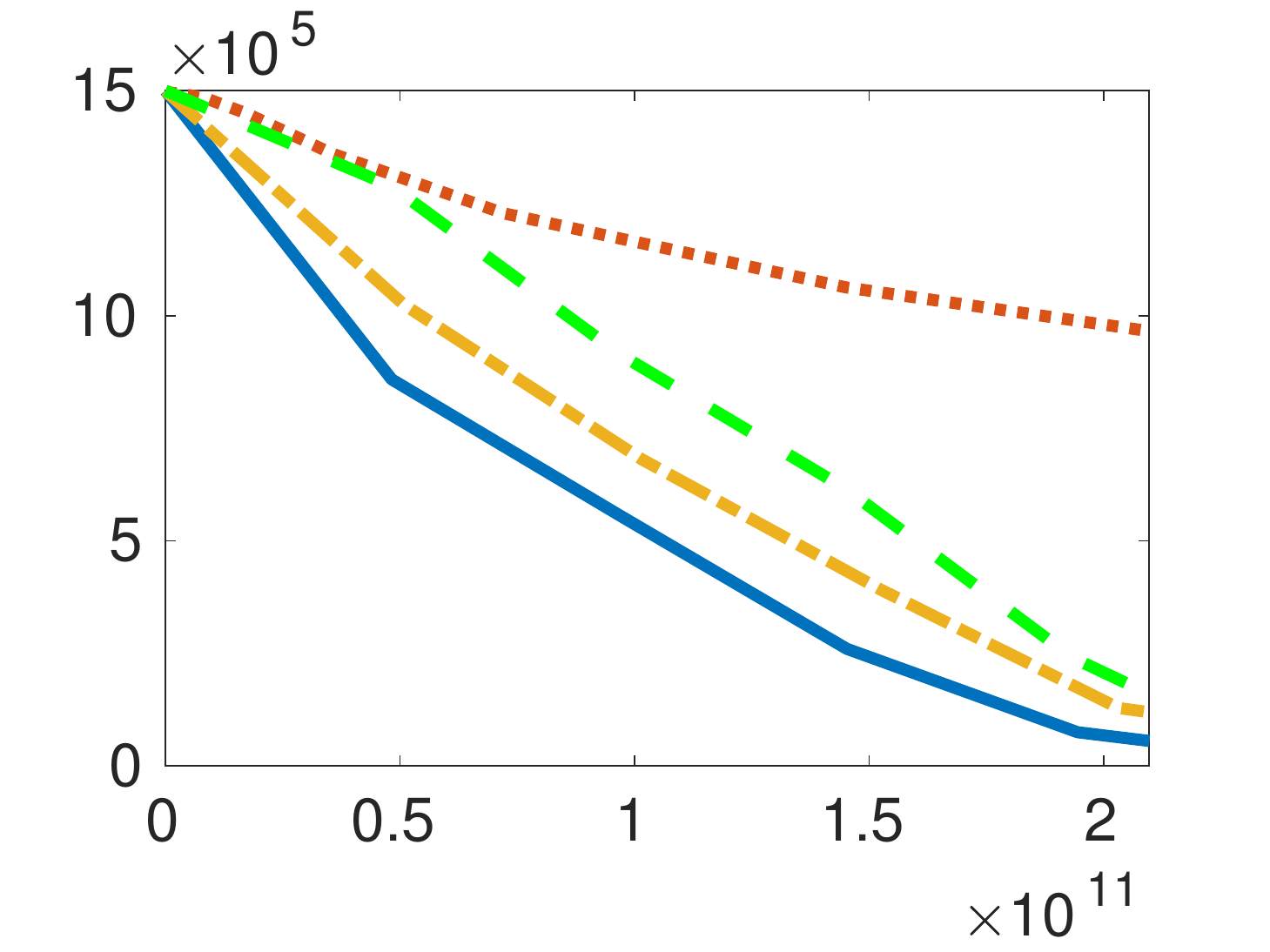}
		\end{minipage}
	}
	\hspace{-12pt}
	\subfigure[Mountain]{
		\begin{minipage}[t]{0.19\linewidth}
			\centering
			\includegraphics[width = 1\textwidth]{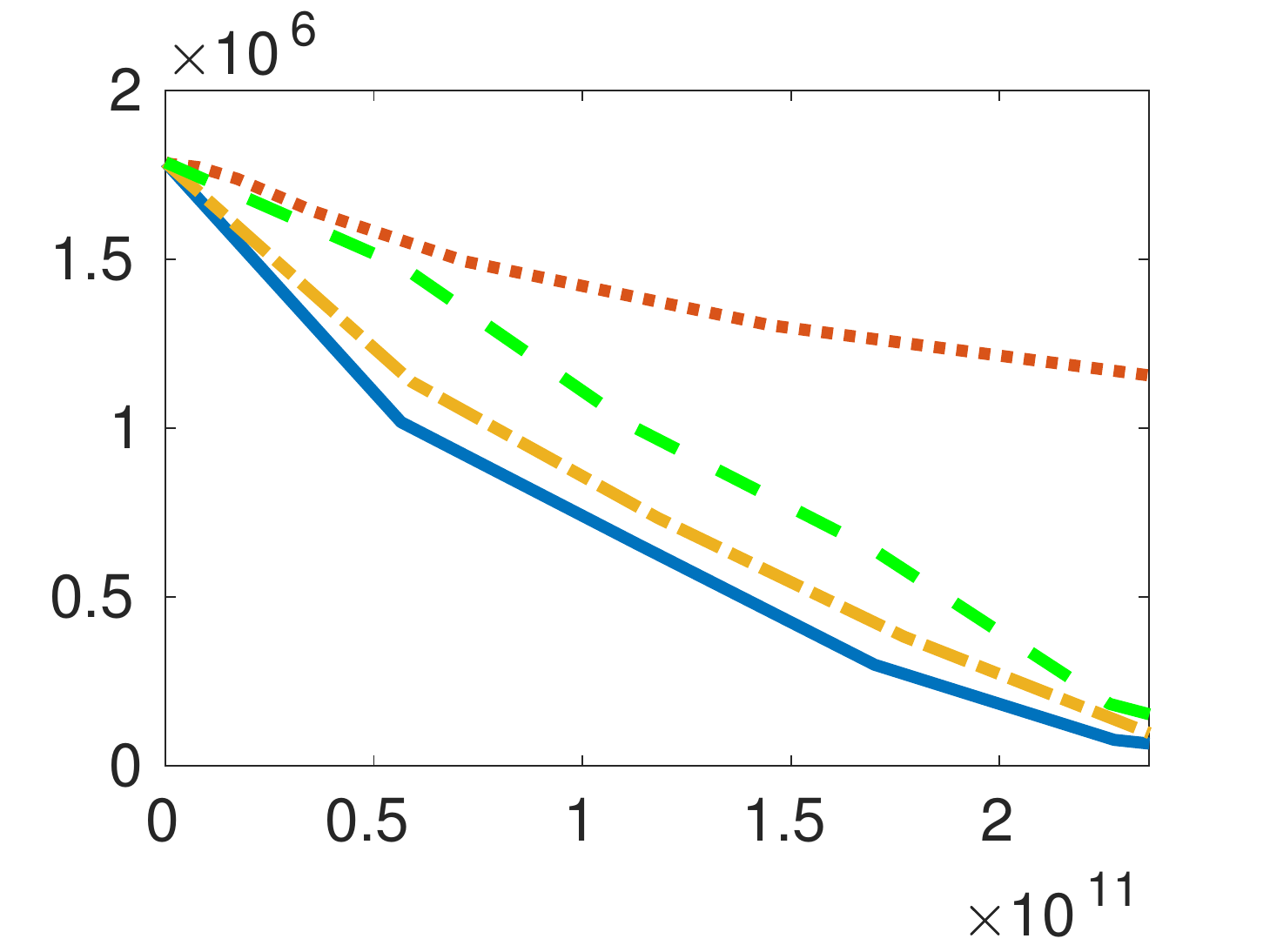}
		\end{minipage}
	}
	\hspace{-10pt}
\vspace{-5pt}
	\caption{Low-rank matrix completion problem. (a)-(e) are results for the \emph{first-order} case and (f)-(j) are results for the \emph{zeroth-order} case. The x-axis represents number of gradient query oracle for (a) - (e) and number of function query oracle for (f) - (j); the y-axis represents suboptimality, \textrm{i.e.}, $f(x)-\min_{y\in\mathcal{C}}f(y)$. The curves of the first-order case and the zeroth-order case look the same since the coordinate-wise gradient estimator equals the true gradient.}
	\label{mat}
\vspace{-8pt}
\end{figure}

\subsection{Sparsity-Constrained Logistic Regression}
In this experiment, we focus on the sparsity-constrained logistic regression:
\begin{equation}
	\min_{\|x\|_1\leq r}\frac{1}{n}\sum_{i=1}^{n} -\left(y_i\log\sigma(-x^Ta_i)+(1-y_i)\log\sigma(x^Ta_i)\right)
	\nonumber
\end{equation}
where $\sigma(z) = 1/\left(1+\exp(-z)\right)$ denotes the sigmoid function, $a_i \in\mathbb{R}^d$ denotes the data and $y_i\in\{0, 1\}$ denotes the corresponding label. We conduct the experiment on five LIBSVM \citep{chang2011libsvm} datasets: a9a ($n=32,561, d=123$), ijcnn1 ($n=49,990, d=22$), mushrooms ($n=8,124, d=112$), phishing ($n=11,055, d=68$) and w8a ($n=49,749, d=300$). We set the parameters according to Theorem \ref{theorem1}. For all the four algorithms, we use a mini batch of 256. The results are shown in Figure \ref{logi}, where (a)-(e) are results for the \emph{first-order} case and (f)-(j) are results for the \emph{zeroth-order} case. For some datasets, our ARCS is slower than SCGS at first but outperforms SCGS later. This corresponds to the gradient query complexity presented in Table \ref{table1}. For the first-order case, the gradient query complexity of ARCS has a dependence on $n$ and $\epsilon^{-1/2}$, while that of SCGS only has a dependence on $\epsilon^{-3}$. At the beginning, $\epsilon$ is relatively big, $\epsilon^{-3}$ is relatively small and $n$ is relatively big, thus the gradient query complexity of ARCS is higher than that of SCGS. When $\epsilon$ diminishes, the gradient query complexity of ARCS gradually becomes lower than that of SCGS.

\begin{figure}
	\vspace{-9pt}
	\hspace{-10pt}
	\begin{minipage}[t]{0.5\linewidth}
	    \centering
    	\includegraphics[width = \textwidth]{figures/legend.pdf}
	\end{minipage}
	\hspace{-12pt}
	\begin{minipage}[t]{0.5\linewidth}
	    \centering
    	\includegraphics[width = \textwidth]{figures/label.pdf}
	\end{minipage}
	\vspace{-14pt}
\end{figure}

\begin{figure}
	\centering
	\hspace{-10pt}
	\subfigure[A9a]{
		\begin{minipage}[t]{0.19\linewidth}
			\centering
			\includegraphics[width = 1\textwidth]{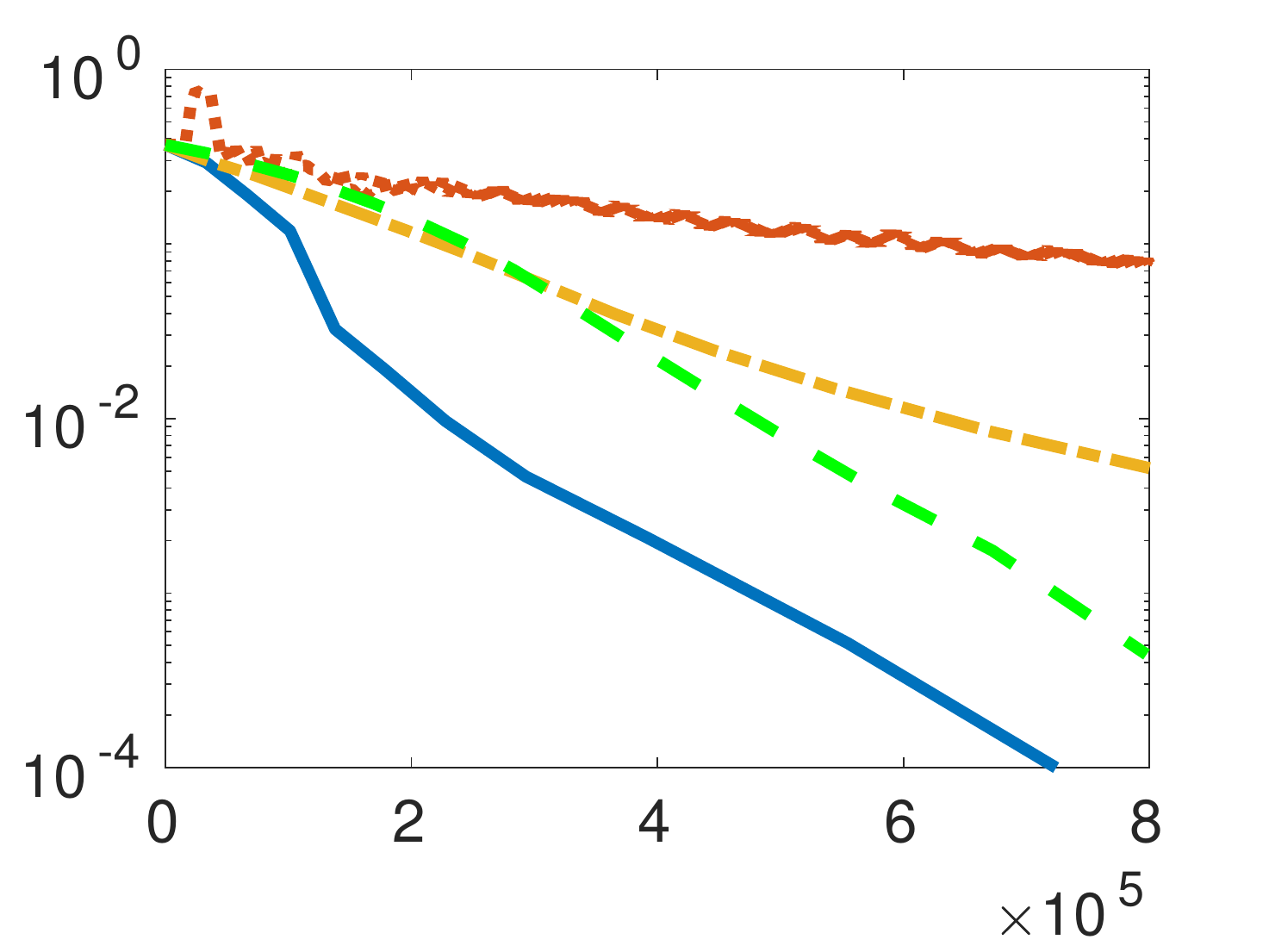}
		\end{minipage}
	}
	\hspace{-12pt}
	\subfigure[Ijcnn1]{
		\begin{minipage}[t]{0.19\linewidth}
			\centering
			\includegraphics[width = 1\textwidth]{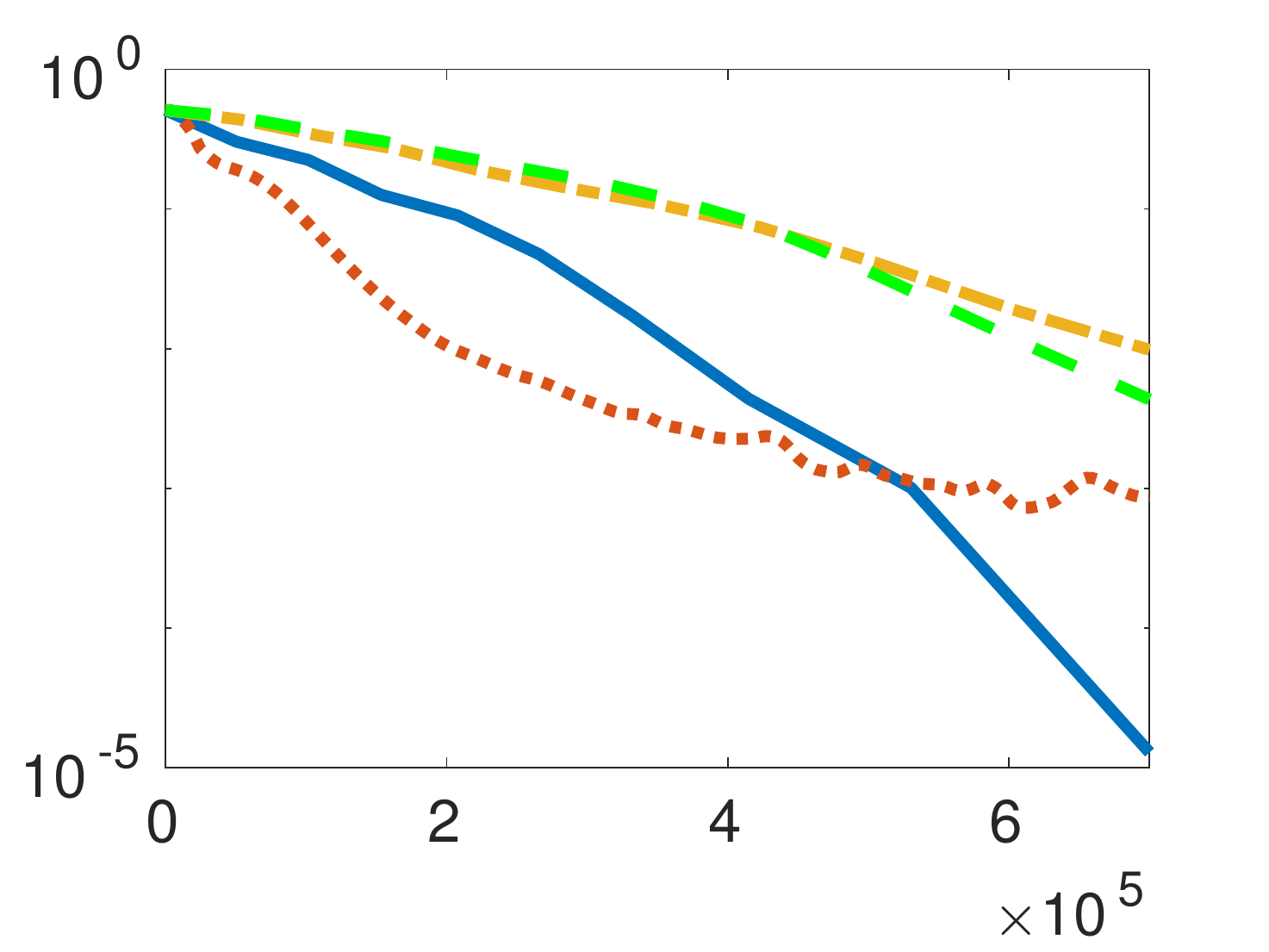}
		\end{minipage}
	}
	\hspace{-12pt}
	\subfigure[Mushrooms]{
		\begin{minipage}[t]{0.19\linewidth}
			\centering
			\includegraphics[width = 1\textwidth]{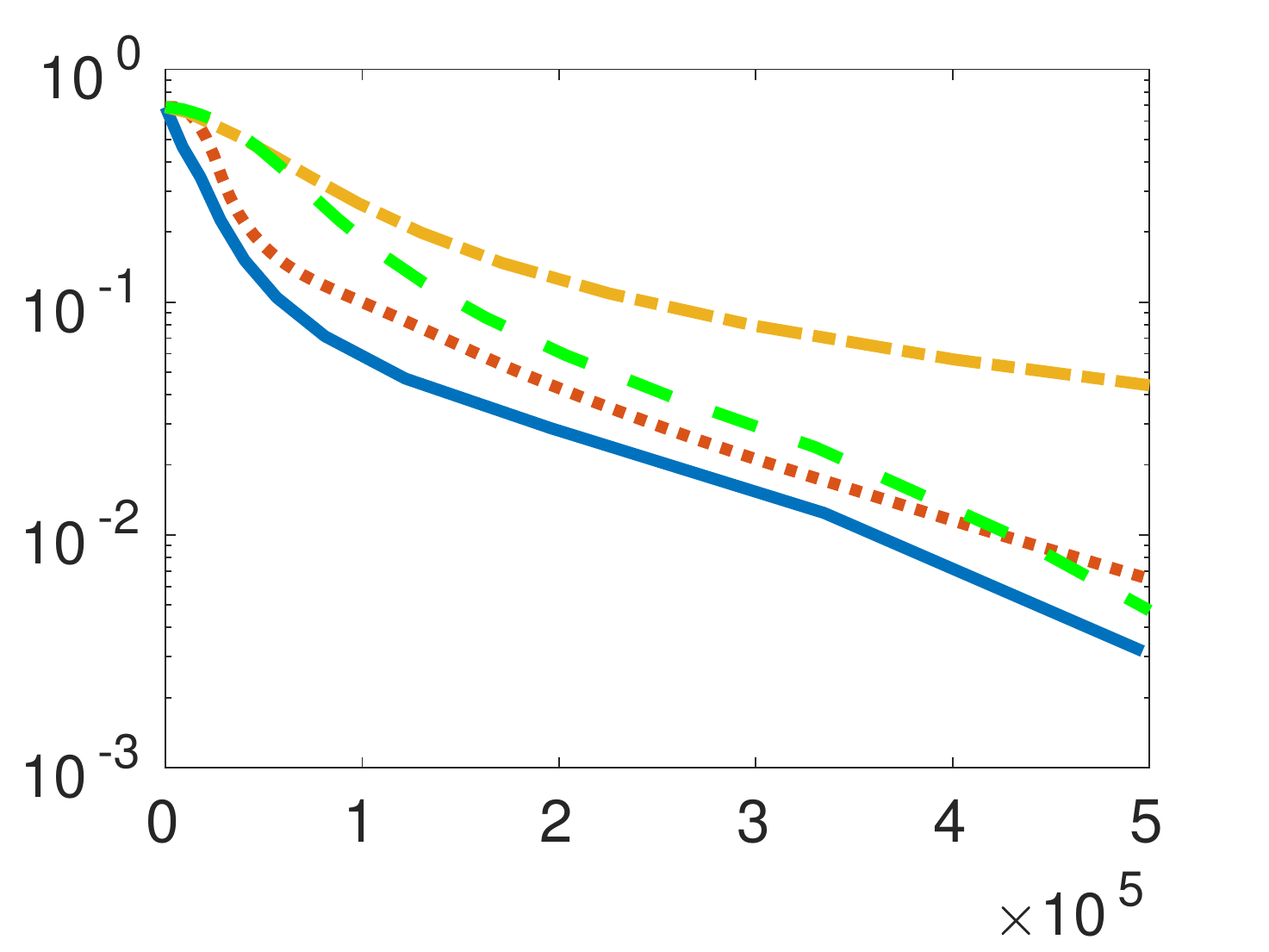}
		\end{minipage}
	}
	\hspace{-12pt}
	\subfigure[Phishing]{
		\begin{minipage}[t]{0.19\linewidth}
			\centering
			\includegraphics[width = 1\textwidth]{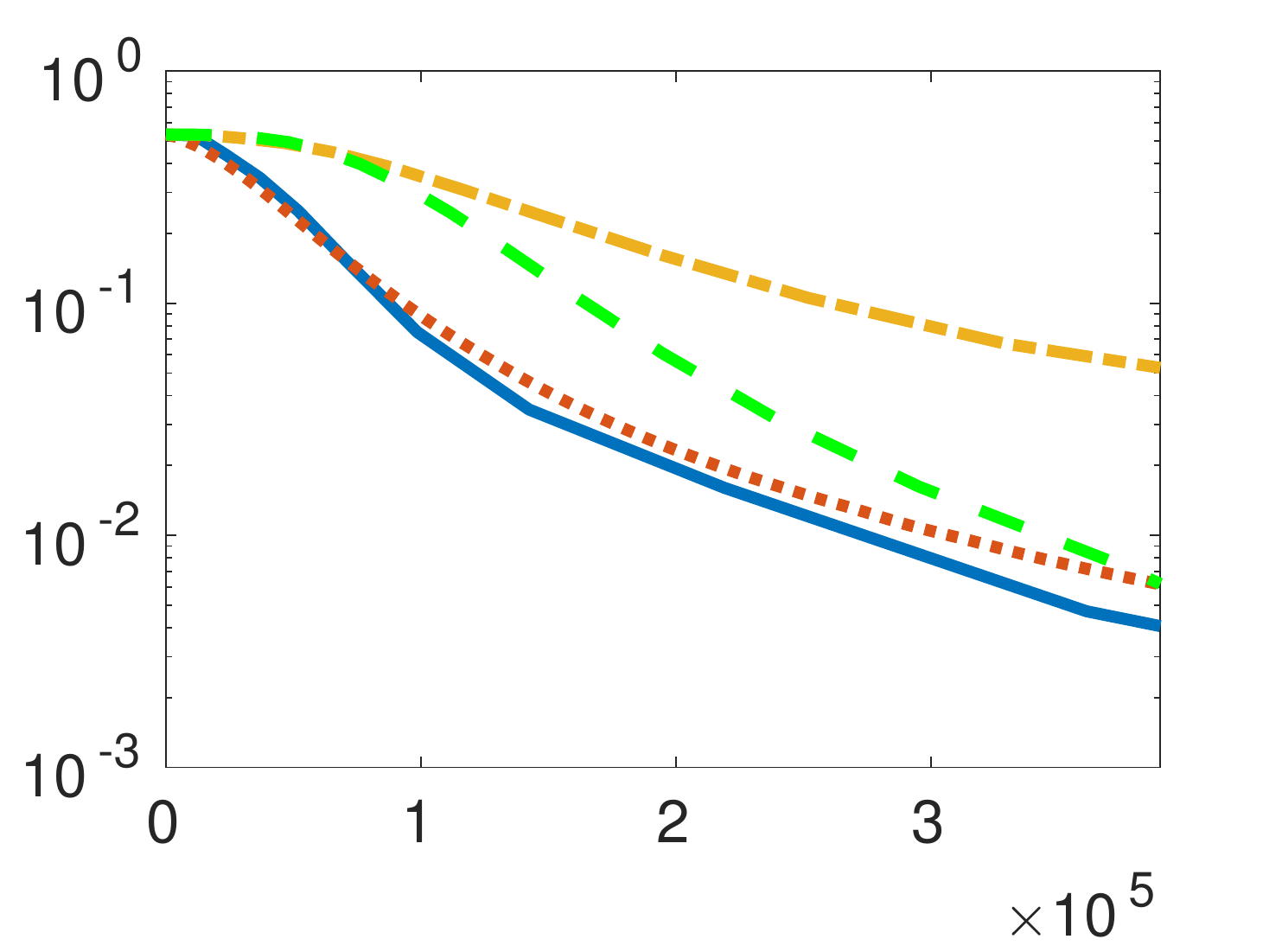}
		\end{minipage}
	}
	\hspace{-12pt}
	\subfigure[W8a]{
		\begin{minipage}[t]{0.19\linewidth}
			\centering
			\includegraphics[width = 1\textwidth]{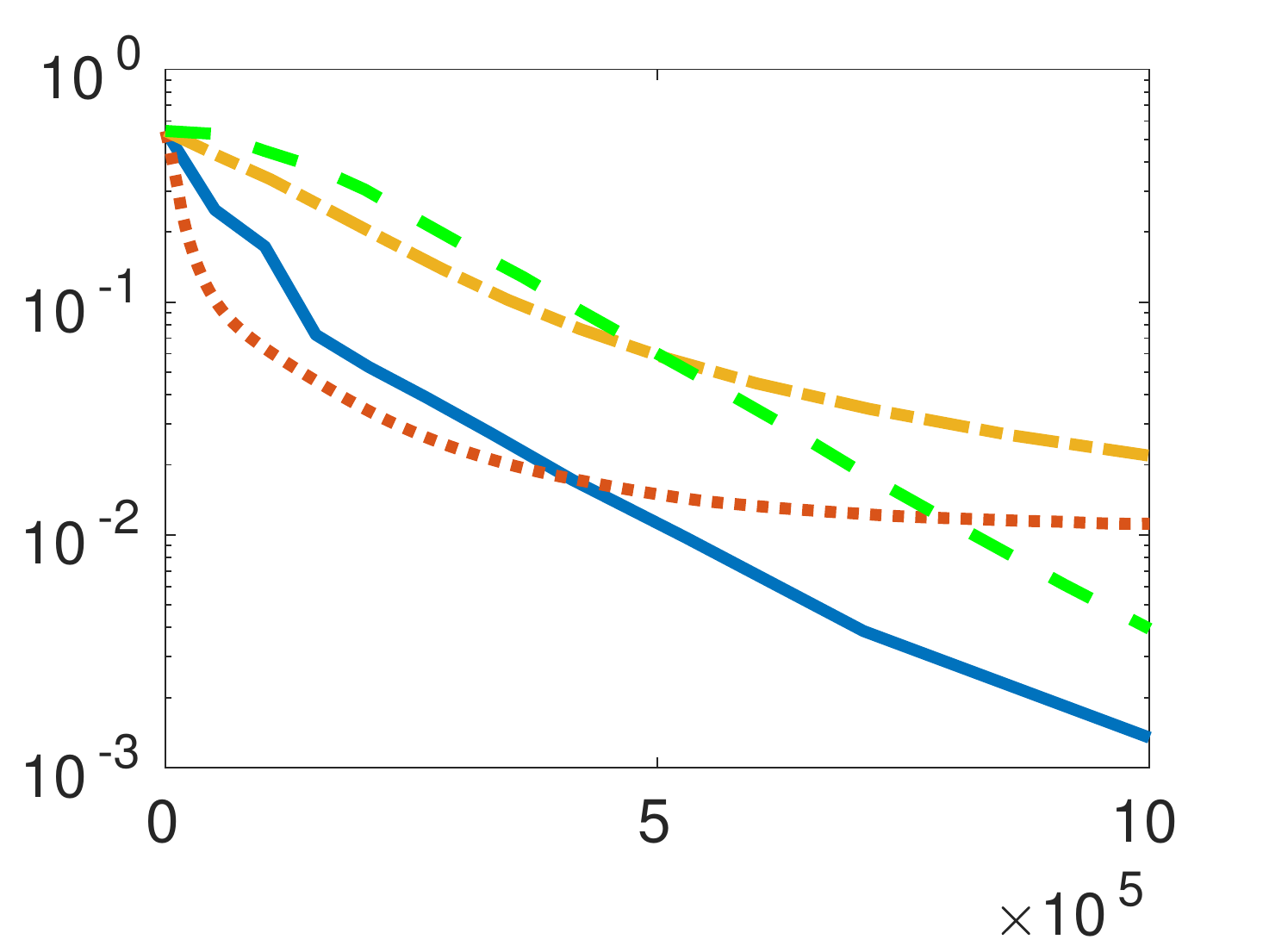}
		\end{minipage}
	}
	\hspace{-10pt}
\vspace{-6pt}
	
	\hspace{-10pt}
	\subfigure[A9a]{
		\begin{minipage}[t]{0.19\linewidth}
			\centering
			\includegraphics[width = 1\textwidth]{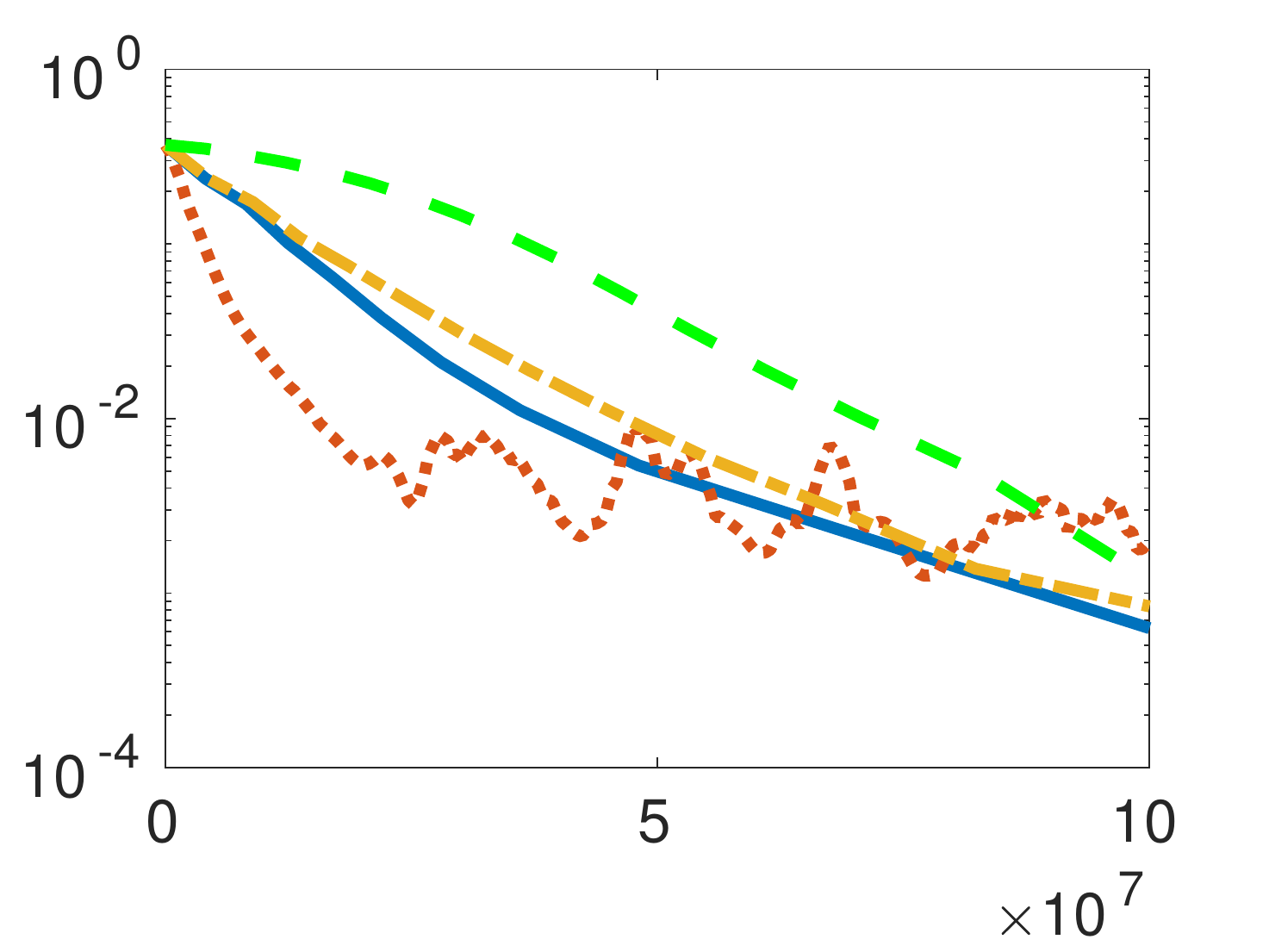}
		\end{minipage}
	}
	\hspace{-12pt}
	\subfigure[Ijcnn1]{
		\begin{minipage}[t]{0.19\linewidth}
			\centering
			\includegraphics[width = 1\textwidth]{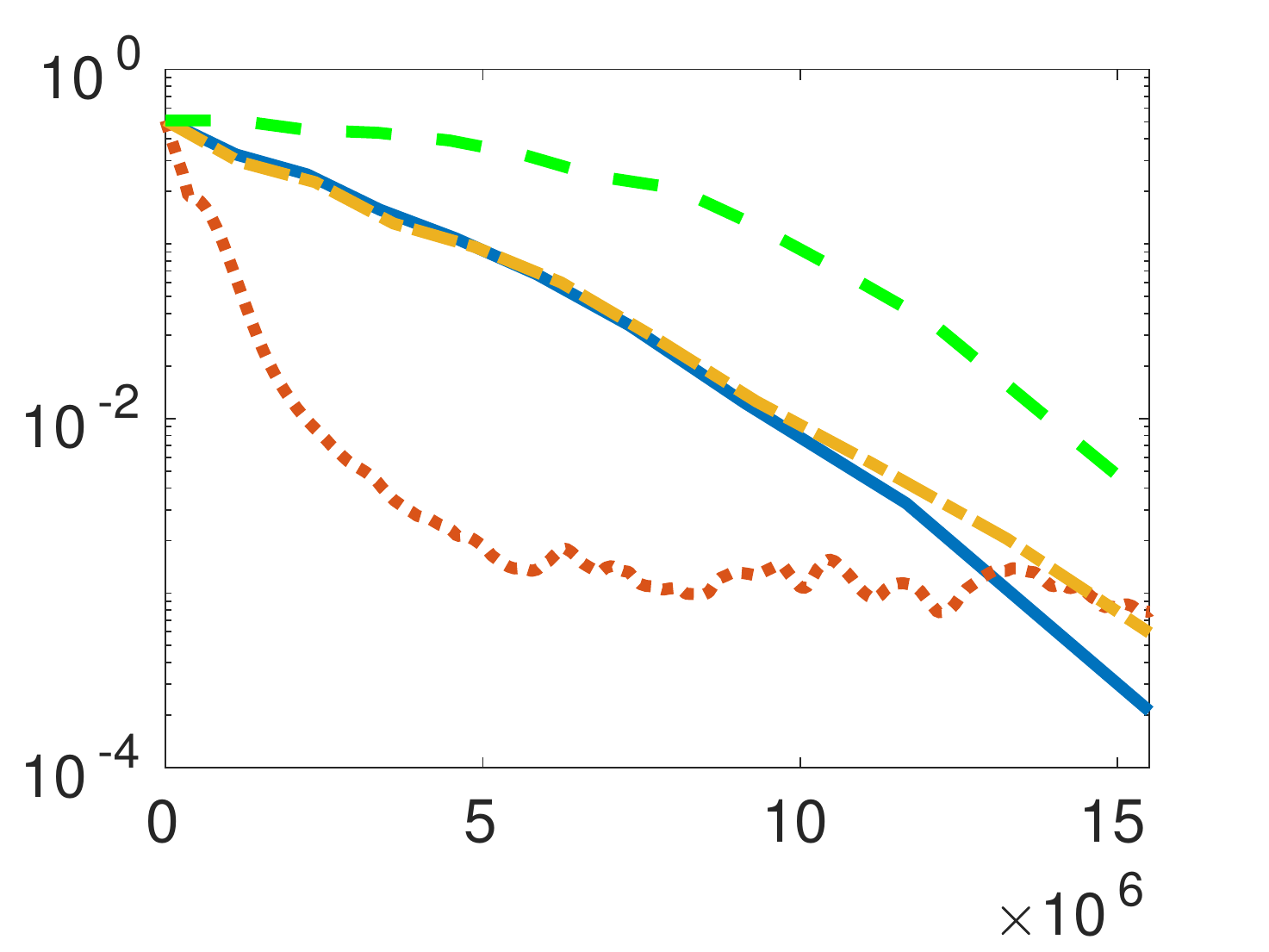}
		\end{minipage}
	}
	\hspace{-12pt}
	\subfigure[Mushrooms]{
		\begin{minipage}[t]{0.19\linewidth}
			\centering
			\includegraphics[width = 1\textwidth]{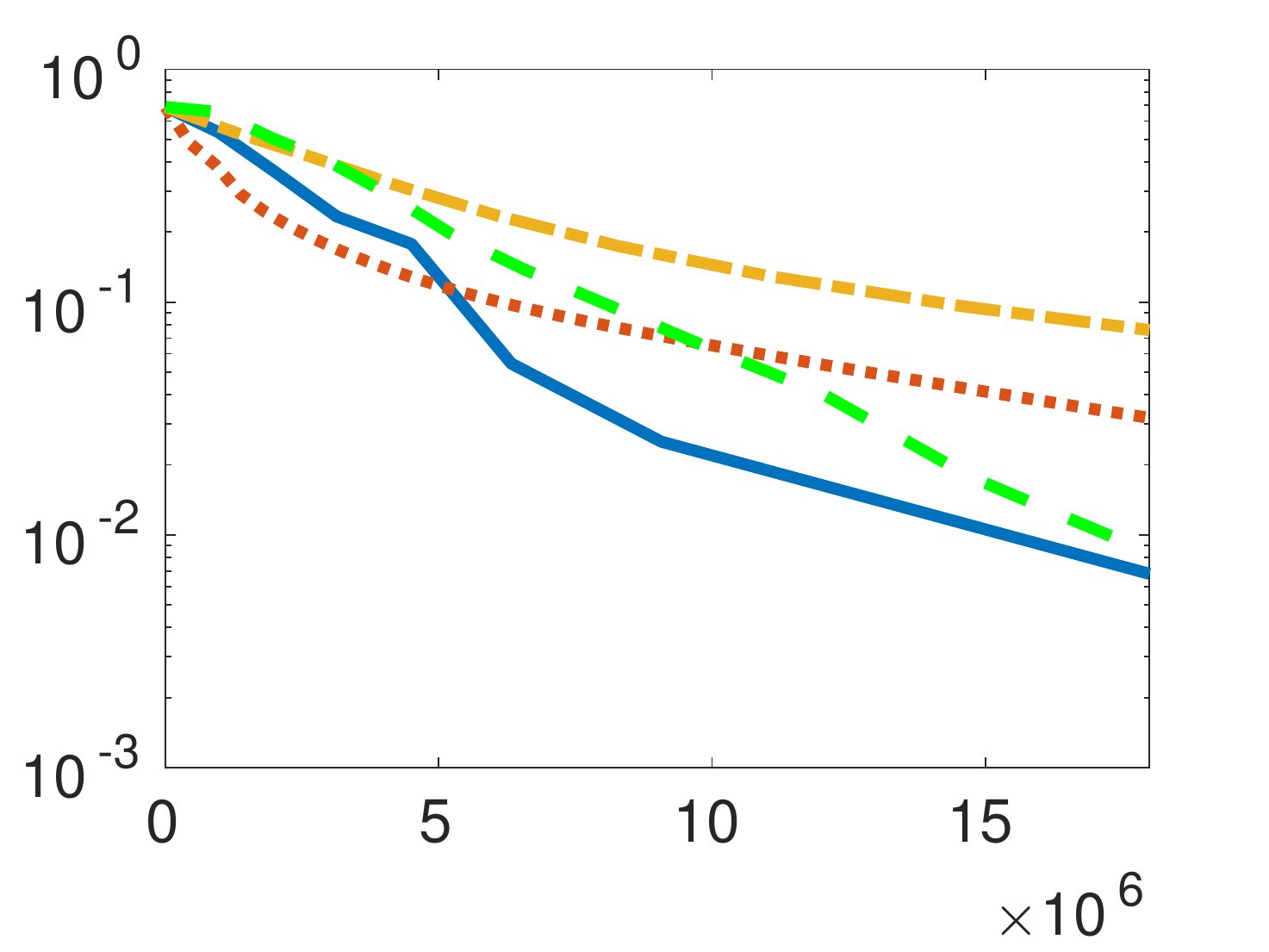}
		\end{minipage}
	}
	\hspace{-12pt}
	\subfigure[Phishing]{
		\begin{minipage}[t]{0.19\linewidth}
			\centering
			\includegraphics[width = 1\textwidth]{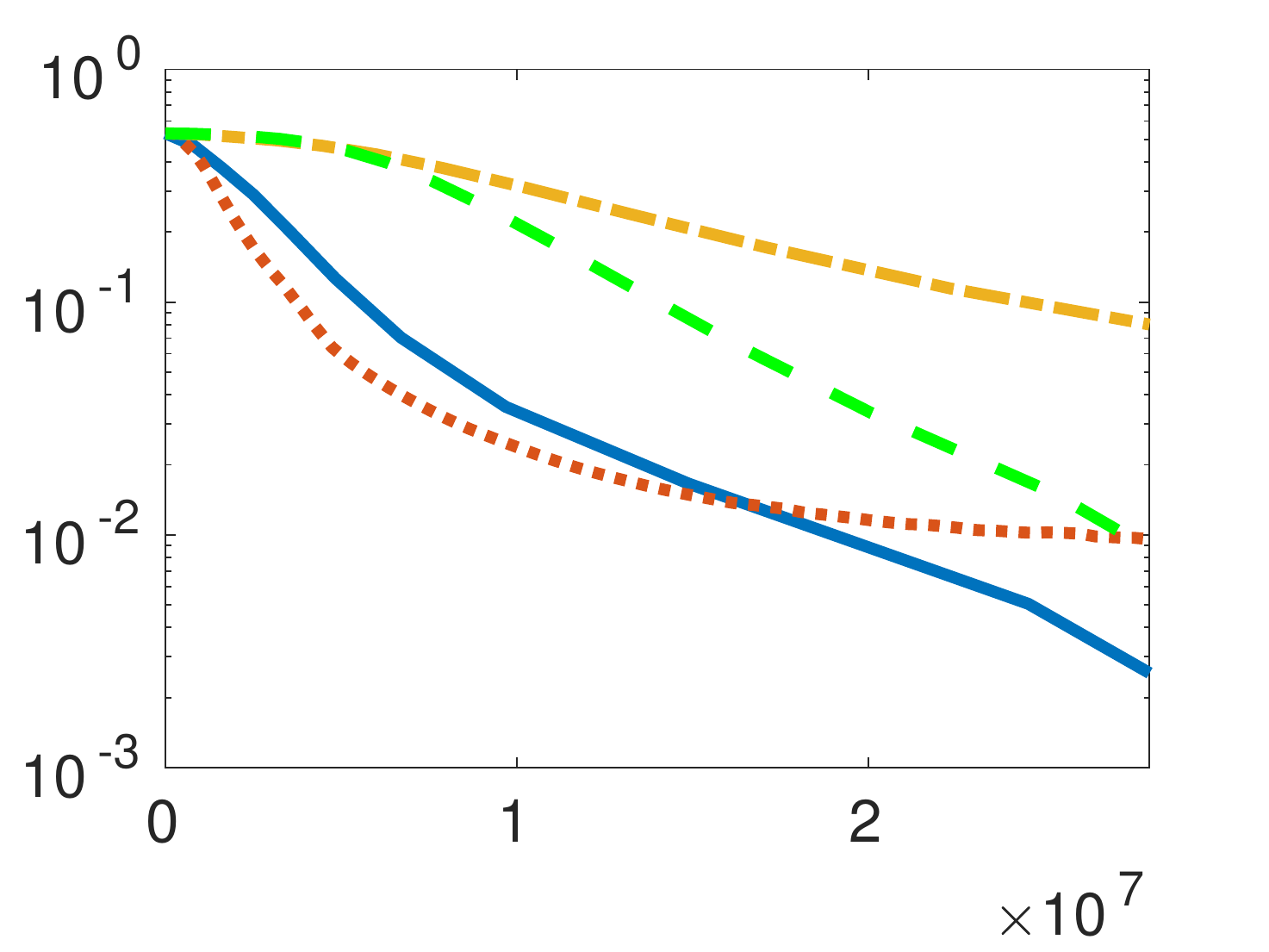}
		\end{minipage}
	}
	\hspace{-12pt}
	\subfigure[W8a]{
		\begin{minipage}[t]{0.19\linewidth}
			\centering
			\includegraphics[width = 1\textwidth]{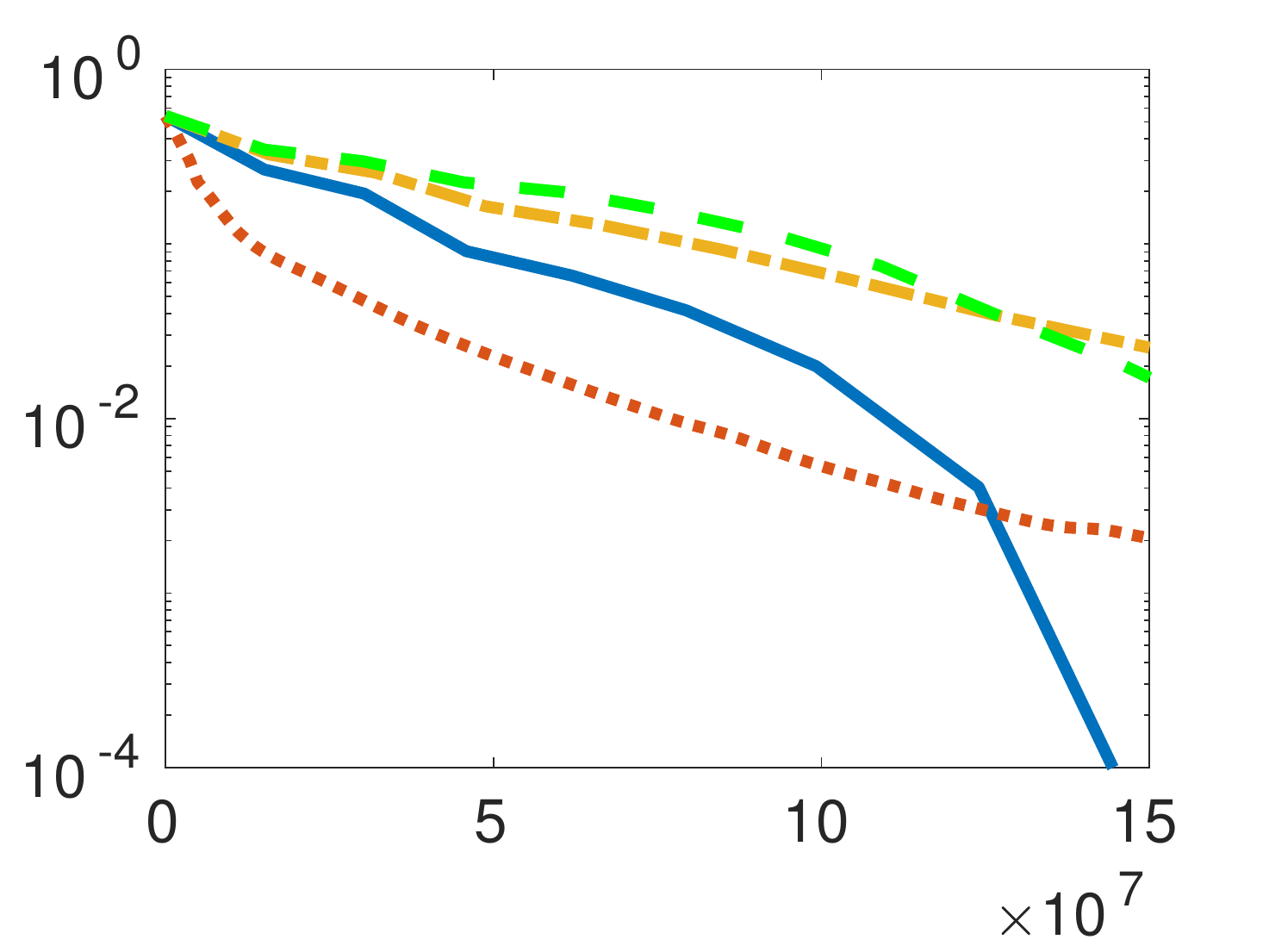}
		\end{minipage}
	}
	\hspace{-10pt}
	\vspace{-6pt}
\caption{Sparsity-constrained logistic regression. (a)-(e) are results for the \emph{first-order} case and (f)-(j) are results for the \emph{zeroth-order} case. The x-axis represents number of gradient query oracle for (a) - (e) and number of function query oracle for (f) - (j); the y-axis represents suboptimality, \textrm{i.e.}, $f(x)-\min_{y\in\mathcal{C}}f(y)$.}
	\label{logi}
\end{figure}

\section{Conclusion}
In this paper, we proposed an Accelerated variance-Reduced Conditional gradient Sliding (ARCS) algorithm for solving constrained finite-sum problems, which combines the variance-reduction technique and a novel momentum with conditional gradient sliding algorithm. Then We give the convergence results of our ARCS under convex and strongly-convex setting. Our ARCS can be used in either first-order (where gradient query oracle is available) or zeroth-order (where function query oracle is available) optimization. In first-order optimization, it outperforms all existing conditional gradient type algorithms with respect to gradient query complexity. In zeroth-order optimization, it is the first conditional gradient sliding type algorithm for convex problems. Finally we conduct numerical experiments with real-world datasets to show the superiority of our ARCS.

\vskip 0.2in
\bibliography{main}

\appendix







\section{Fundamental Lemmas}	\label{app_a}
For simplicity, we denote
	\begin{equation}
	\begin{aligned}
	&l_{f}(z,x) := f(z)+\langle\nabla f(z), x-z\rangle\\
	&\delta_t:= G_t-\nabla f(\underline{x}_t)\\
	&x_{t-1}^+:= \frac{1}{1+\tau\gamma_s}(x_{t-1}+\tau\gamma_s\underline{x}_t)\\
	\end{aligned}
	\end{equation}
	
With the above notations, we have
	\begin{equation}
	\begin{aligned}
	\bar{x}_t-\underline{x}_t =& (1-\alpha_s-p_s)\bar{x}_{t-1}+\alpha_sx_t+p_s\tilde{x}-\underline{x}_t\\
	\overset{\textrm{\ding{172}}}{=}& \alpha_sx_t+\frac{1}{1+\tau\gamma_s}\left[\left(1+\tau\gamma_s(1-\alpha_s)\right)\underline{x}_t-\alpha_sx_{t-1}\right]-\underline{x}_t = \alpha_s(x_t-x_{t-1}^+)
	\end{aligned}
	\end{equation}
	where \ding{172} comes from the definition of $\underline{x}_t$ in Algorithm \ref{algo1}.
	
	\begin{lemma}	\label{lemma1}
		For any $x\in\mathcal{C}$, we have
		\begin{equation}
		\begin{aligned}
		&\gamma_s\left[l_f(\underline{x}_t, x_t)-l_f(\underline{x}_t, x)\right] \\
		\leq& \frac{\gamma_s\tau}{2}\|x-x_t\|^2+\frac{1}{2}\|x-x_{t-1}\|^2-\frac{1+\gamma_s\tau}{2}\|x-x_t\|^2-\frac{1+\gamma_s\tau}{2}\|x_t-x_{t-1}^+\|^2-\gamma_s\langle\delta_t, x_t-x\rangle+\eta_{s,t}
		\end{aligned}
		\end{equation}
	\end{lemma}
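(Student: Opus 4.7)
The plan is to reduce the lemma to the approximate-optimality guarantee for the CondG output and then unpack it with two polarization identities.

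First, I would expand the LHS. By definition $l_f(\underline{x}_t, x_t) - l_f(\underline{x}_t, x) = \langle \nabla f(\underline{x}_t), x_t - x\rangle$, and substituting $\nabla f(\underline{x}_t) = G_t - \delta_t$ splits the LHS as $\gamma_s\langle G_t, x_t - x\rangle - \gamma_s\langle \delta_t, x_t - x\rangle$. The second piece exactly cancels the $-\gamma_s\langle \delta_t, x_t - x\rangle$ term on the RHS, so I only need to bound $\gamma_s\langle G_t, x_t - x\rangle$ by the remaining quadratic-in-$x$ expression.

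Next, I would invoke the guarantee of Algorithm \ref{algo2}: since $x_t = \textrm{CondG}(G_t, x_{t-1}, \underline{x}_t, \gamma_s, \tau, \eta_{s,t})$, its output satisfies $\langle \nabla h(x_t), x_t - x\rangle \leq \eta_{s,t}$ for every $x \in \mathcal{C}$, where $h(u) = \gamma_s\langle G_t, u\rangle + \tfrac{\gamma_s\tau}{2}\|u - \underline{x}_t\|^2 + \tfrac{1}{2}\|u - x_{t-1}\|^2$. Computing $\nabla h(x_t) = \gamma_s G_t + \gamma_s\tau(x_t - \underline{x}_t) + (x_t - x_{t-1})$, the key algebraic observation is $\gamma_s\tau(x_t - \underline{x}_t) + (x_t - x_{t-1}) = (1+\gamma_s\tau)(x_t - x_{t-1}^+)$, which is immediate from the convex-combination definition of $x_{t-1}^+$. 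The CondG condition therefore rewrites as $\gamma_s\langle G_t, x_t - x\rangle + (1+\gamma_s\tau)\langle x_t - x_{t-1}^+, x_t - x\rangle \leq \eta_{s,t}$.

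Finally, I would polarize via $\langle a-b, a-c\rangle = \tfrac{1}{2}(\|a-b\|^2 + \|a-c\|^2 - \|b-c\|^2)$ with $a = x_t$, $b = x_{t-1}^+$, $c = x$. This immediately produces the $-\tfrac{1+\gamma_s\tau}{2}\|x_t - x_{t-1}^+\|^2$ term on the RHS; to handle the leftover $+\tfrac{1+\gamma_s\tau}{2}\|x_{t-1}^+ - x\|^2$, I would use that $x_{t-1}^+$ is the $\bigl(\tfrac{1}{1+\gamma_s\tau}, \tfrac{\gamma_s\tau}{1+\gamma_s\tau}\bigr)$-convex combination of $x_{t-1}$ and $\underline{x}_t$, so the strong-convexity/parallelogram identity gives $(1+\gamma_s\tau)\|x_{t-1}^+ - x\|^2 = \|x_{t-1} - x\|^2 + \gamma_s\tau\|\underline{x}_t - x\|^2 - \tfrac{\gamma_s\tau}{1+\gamma_s\tau}\|x_{t-1} - \underline{x}_t\|^2$. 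Dropping the non-positive last term and collecting the $\|\cdot - x\|^2$ blocks together with the $-\tfrac{1+\gamma_s\tau}{2}\|x - x_t\|^2$ contribution from the polarization reproduces the stated bound.

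The hard part is really the identity $\gamma_s\tau(x_t - \underline{x}_t) + (x_t - x_{t-1}) = (1+\gamma_s\tau)(x_t - x_{t-1}^+)$: it is what allows the more elaborate subproblem $h$ in (\ref{condg_problem}), which contains the extra $\tfrac{\gamma_s\tau}{2}\|\cdot - \underline{x}_t\|^2$ term absent from the simpler CGS form $h_s'$ in (\ref{cgsH}), to still yield a clean single-inner-product certificate via the reweighted ``effective anchor'' $x_{t-1}^+$. Once this is observed, the remaining manipulations are routine polarization algebra and the only bookkeeping subtlety is keeping track of which quadratic terms combine to cancel with which.
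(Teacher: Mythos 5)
Your argument is correct and, at bottom, it is the paper's own proof reorganized: both rest on the CondG termination certificate $\langle\nabla h(x_t),x_t-x\rangle\le\eta_{s,t}$, the $(1+\tau\gamma_s)$-quadratic structure of $h$, and the definition of $x_{t-1}^+$, and both end up discarding exactly the same non-positive quantity $-\tfrac{\gamma_s\tau}{2(1+\gamma_s\tau)}\|x_{t-1}-\underline{x}_t\|^2$. The only difference is bookkeeping: the paper passes through function values, writing $h(x_t)-h(x)+\tfrac{1+\tau\gamma_s}{2}\|x-x_t\|^2\le\eta_{s,t}$ and then lower-bounding $\tfrac{\gamma_s\tau}{2}\|x_t-\underline{x}_t\|^2+\tfrac12\|x_t-x_{t-1}\|^2$ by $\tfrac{1+\gamma_s\tau}{2}\|x_t-x_{t-1}^+\|^2$ (the $Q_1$ computation in (\ref{l1eq5})--(\ref{l1eq9})), whereas you compute $\nabla h(x_t)$ explicitly, use the anchor identity $\gamma_s\tau(x_t-\underline{x}_t)+(x_t-x_{t-1})=(1+\gamma_s\tau)(x_t-x_{t-1}^+)$, and finish with polarization plus the convex-combination identity for $\|x_{t-1}^+-x\|^2$; the two computations are algebraically equivalent, and yours is arguably the cleaner packaging.

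One point to correct in your final step: what your derivation (like the paper's own (\ref{l1eq2})) actually produces is the bound whose first term is $\tfrac{\gamma_s\tau}{2}\|x-\underline{x}_t\|^2$, not the $\tfrac{\gamma_s\tau}{2}\|x-x_t\|^2$ printed in the lemma, and the two are not interchangeable, so ``collecting the blocks'' does not literally reproduce the displayed inequality. This traces to a typo in the statement (introduced when the paper passes from (\ref{l1eq2}) to (\ref{l1eq3})): the version with $\|x-\underline{x}_t\|^2$ is the one actually consumed in the proof of Lemma \ref{lemma2} (see (\ref{l2eq1}), where strong convexity is used as $l_f(\underline{x}_t,x)+\tfrac{\tau}{2}\|x-\underline{x}_t\|^2\le f(x)$), so your proof establishes precisely the inequality the paper needs; just state your conclusion with $\underline{x}_t$ rather than asserting the literal displayed bound.
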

	\begin{proof}
		From Algorithm \ref{algo1}, we have $\langle\nabla h(x_t), x_t-x\rangle \leq \eta_{s,t}$ for any $x\in\mathcal{C}$. Observe that $h$ is $(1+\tau\gamma_s)$-strongly convex, then we have
		\begin{equation}	\label{l1eq1}
			h(x_t)-h(x)+\frac{1+\tau\gamma_s}{2}\|x-x_t\|^2\leq\langle\nabla h(x_t), x_t-x\rangle\leq\eta_{s,t}, \; \forall\, x\in C
		\end{equation}
		which is
		\begin{equation}	\label{l1eq2}
		\begin{aligned}
		&\gamma_s\left[\langle G_t, x_t\rangle+\frac{\tau}{2}\|x_t-\underline{x}_t\|^2\right]+\frac{1}{2}\|x_t-x_{t-1}\|^2 \\
		\leq& \gamma_s\left[\langle G_t, x\rangle+\frac{\tau}{2}\|x-\underline{x}_t\|^2\right]+\frac{1}{2}\|x-x_{t-1}\|^2-\frac{1+\tau\gamma_s}{2}\|x-x_t\|^2+\eta_{s,t}, \; \forall\, x\in C
		\end{aligned}
		\end{equation}
		Rearranging the terms, we get
		\begin{equation}	\label{l1eq3}
		\begin{aligned}
		&\gamma_s\left[\langle G_t, x_t-x\rangle+\frac{\tau}{2}\|x_t-\underline{x}_t\|^2\right]+\frac{1}{2}\|x_t-x_{t-1}\|^2 \\
		\leq& \frac{\gamma_s\tau}{2}\|x-x_t\|^2+\frac{1}{2}\|x-x_{t-1}\|^2-\frac{1+\tau\gamma_s}{2}\|x-x_t\|^2+\eta_{s,t}
		\end{aligned}
		\end{equation}
		With the notation of $l_f(\cdot, \cdot)$, we have
		\begin{equation}	\label{l1eq4}
		\begin{aligned}
		\langle G_t, x_t-x\rangle =& \langle \nabla f(\underline{x}_t), x_t-x\rangle+\langle\delta_t, x_t-x\rangle = l_f(\underline{x}_t, x_t)-l_f(\underline{x}_t, x)+\langle\delta_t, x_t-x\rangle\\
		\end{aligned}
		\end{equation}
		Also we have
		\begin{equation}	\label{l1eq5}
		\begin{aligned}
		&\frac{\gamma_s\tau}{2}\|x_t-\underline{x}_t\|^2+\frac{1}{2}\|x_t-x_{t-1}\|^2\\
		=&\frac{\gamma_s\tau}{2}\|x_t\|^2-\langle x_t, \gamma_s\tau\underline{x}_t\rangle +\frac{\gamma_s\tau}{2}\|\underline{x}_t\|^2+\frac{1}{2}\|x_t\|^2-\langle x_t, x_{t-1}\rangle +\frac{1}{2}\|x_{t-1}\|^2\\
		=&\frac{1+\gamma_s\tau}{2}\|x_t\|^2-(1+\gamma_s\tau)\langle x_t, \frac{1}{1+\gamma_s\tau}(\gamma_s\tau\underline{x}_t+x_{t-1})\rangle+\underbrace{\frac{\gamma_s\tau}{2}\|\underline{x}_t\|^2+\frac{1}{2}\|x_{t-1}\|^2}_{Q_1}\\
		\end{aligned}
		\end{equation}
		For the term $Q_1$, we have
		\begin{equation}	\label{l1eq6}
		\begin{aligned}
		2(1+\gamma_s\tau)Q_1=&(1+\gamma_s\tau)\left(\gamma_s\tau\|\underline{x}_t\|^2+\|x_{t-1}\|^2\right)\\
		=&\gamma_s\tau\|\underline{x}_t\|^2+\gamma_s\tau\|x_{t-1}\|^2+\left(\gamma_s^2\tau^2\|\underline{x}_t\|^2+\|x_{t-1}\|^2\right)\\
		=& \gamma_s\tau\|\underline{x}_t\|^2+\gamma_s\tau\|x_{t-1}\|^2-2\gamma_s\tau\langle \underline{x}_t, x_{t-1}\rangle+\|\gamma_s\tau\underline{x}_t+x_{t-1}\|^2
		\end{aligned}
		\end{equation}
		With the notation of $x_{t-1}^+$, we have
		\begin{equation}	\label{l1eq7}
		2(1+\gamma_s\tau)Q_1 \geq \|\gamma_s\tau\underline{x}_t+x_{t-1}\|^2 = (1+\gamma_s\tau)^2\|x_{t-1}^+\|^2
		\end{equation}
		which is
		\begin{equation}	\label{l1eq8}
		Q_1\geq \frac{1+\gamma_s\tau}{2}\|x_{t-1}^+\|^2
		\end{equation}
		Plugging (\ref{l1eq8}) into (\ref{l1eq5}), we get
		\begin{equation}	\label{l1eq9}
		\begin{aligned}
		&\frac{\gamma_s\tau}{2}\|x_t-\underline{x}_t\|^2+\frac{1}{2}\|x_t-x_{t-1}\|^2\\
		\geq& \frac{1+\gamma_s\tau}{2}\|x_t\|^2-(1+\gamma_s\tau)\langle x_t, \frac{1}{1+\gamma_s\tau}(\gamma_s\tau\underline{x}_t+x_{t-1})\rangle+\frac{1+\gamma_s\tau}{2}\|x_{t-1}^+\|^2 = \frac{1+\gamma_s\tau}{2}\|x_t-x_{t-1}^+\|^2
		\end{aligned}
		\end{equation}
		Plugging (\ref{l1eq4}) and (\ref{l1eq9}) into (\ref{l1eq3}) and rearranging the terms, we get
		\begin{equation}
		\begin{aligned}
		&\gamma_s\left[l_f(\underline{x}_t, x_t)-l_f(\underline{x}_t, x)\right] \\
		\leq& \frac{\gamma_s\tau}{2}\|x-x_t\|^2+\frac{1}{2}\|x-x_{t-1}\|^2-\frac{1+\tau\gamma_s}{2}\|x-x_t\|^2+\eta_{s,t}-\frac{1+\gamma_s\tau}{2}\|x_t-x_{t-1}^+\|^2-\gamma_s\langle\delta_t, x_t-x\rangle
		\end{aligned}
		\end{equation}
		Then we complete the proof.
	\end{proof}

	\begin{lemma}	\label{lemma2}
		Suppose $f$ is $\tau$-strongly ($\tau \geq 0$) convex and each $f_{i\in [n]}$ is $L$-smooth. Conditioning on $x_1, ..., x_{t-1}$
		
		$\bullet$ For the first-order case, assume that $\alpha_s\in[0, 1], p_s\in[0, 1]$ and $\gamma_s>0$ satisfy
		\[1+\tau\gamma_s-L\alpha_s\gamma_s >0, \; 1-\alpha_s-p_s\geq0, \qquad p_s-\frac{L\alpha_s\gamma_s}{1+\tau\gamma_s-L\alpha_s\gamma_s}>0\]
		Then we have
		\[\begin{aligned}
		&\frac{\gamma_s}{\alpha_s}\mathbb{E}\left[f(\bar{x}_t)-f(x)\right]+\frac{1+\tau\gamma_s}{2}\mathbb{E}\left[\|x-x_t\|^2\right]\\
		\leq&\frac{\gamma_s(1-\alpha_s-p_s)}{\alpha_s}\left[f(\bar{x}_{t-1})-f(x)\right]+\frac{\gamma_sp_s}{\alpha_s}\left[f(\tilde{x})-f(x)\right]+\frac{1}{2}\|x-x_{t-1}\|^2+\eta_{s,t}
		\end{aligned}\]
		
		$\bullet$ For the zeroth-order case, assume that $\alpha_s\in[0, 1], p_s\in[0, 1]$ and $\gamma_s>0$ satisfy
		\[1+\tau\gamma_s-L\alpha_s\gamma_s >0, \; 1-\alpha_s-p_s\geq0, \qquad p_s-\frac{4\alpha_s\gamma_sL}{1+\tau\gamma_s-L\alpha_s\gamma_s}>0\]
		Then we have
		\[\begin{aligned}
		&\frac{\gamma_s}{\alpha_s}\mathbb{E}\left[f(\bar{x}_t)-f(x)\right]+\frac{1+\tau\gamma_s}{2}\mathbb{E}\left[\|x-x_t\|^2\right]\\
		\leq&\frac{\gamma_s(1-\alpha_s-p_s)}{\alpha_s}\left[f(\bar{x}_{t-1})-f(x)\right]+\frac{\gamma_sp_s}{\alpha_s}\left[f(\tilde{x})-f(x)\right]+\frac{1}{2}\|x-x_{t-1}\|^2+\eta_{s,t}\\
		&-\gamma_s\langle \hat{\nabla}_{coord}f(\underline{x}_t)-\nabla f(\underline{x}_t), x_t-x\rangle+\frac{6\gamma_s^2\mu^2L^2d}{1+\tau\gamma_s-L\alpha_s\gamma_s}\\
		\end{aligned}\]
	\end{lemma}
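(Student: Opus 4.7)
The plan is to combine $L$-smoothness of $f$ at $\underline x_t$ with Lemma~\ref{lemma1}, then control the stochastic cross-term coming from $\delta_t = G_t - \nabla f(\underline x_t)$ using variance reduction.

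I would begin with $L$-smoothness, $f(\bar x_t) \le l_f(\underline x_t, \bar x_t) + \frac{L}{2}\|\bar x_t - \underline x_t\|^2$, and exploit the already-derived identity $\bar x_t - \underline x_t = \alpha_s(x_t - x_{t-1}^+)$ together with the linearity $l_f(\underline x_t, \bar x_t) = (1-\alpha_s-p_s)\,l_f(\underline x_t, \bar x_{t-1}) + \alpha_s\,l_f(\underline x_t, x_t) + p_s\,l_f(\underline x_t, \tilde x)$. Convexity lets me replace $l_f(\underline x_t, \bar x_{t-1})$ by $f(\bar x_{t-1})$, but I would deliberately keep $l_f(\underline x_t, \tilde x)$ unexpanded, since the nonnegative slack $\frac{\gamma_s p_s}{\alpha_s}[f(\tilde x) - l_f(\underline x_t, \tilde x)]$ is precisely what will swallow the variance. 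After subtracting $f(x)$, multiplying by $\gamma_s/\alpha_s$, and decomposing $l_f(\underline x_t, x_t) - f(x)$ as $[l_f(\underline x_t, x_t) - l_f(\underline x_t, x)] + [l_f(\underline x_t, x) - f(x)]$, I would feed Lemma~\ref{lemma1} into the first bracket and $\tau$-strong convexity into the second. The $\pm\frac{\gamma_s\tau}{2}\|x - \underline x_t\|^2$ contributions cancel, leaving the target RHS of the lemma plus three residuals: $-\frac{\gamma_s p_s}{\alpha_s}(f(\tilde x) - l_f(\underline x_t, \tilde x))$, $-\frac{1+\gamma_s\tau - L\alpha_s\gamma_s}{2}\|x_t - x_{t-1}^+\|^2$, and $-\gamma_s\langle\delta_t, x_t - x\rangle$.

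Taking conditional expectation over $i_t$ and splitting $x_t - x = (x_t - x_{t-1}^+) + (x_{t-1}^+ - x)$, the second summand is orthogonal in expectation to the mean-zero part of $\delta_t$ because $x_{t-1}^+$ is measurable before $i_t$ is drawn. For the first-order case $\mathbb{E}[\delta_t] = 0$, while for the zeroth-order case $\mathbb{E}[\delta_t] = \hat{\nabla}_{coord} f(\underline x_t) - \nabla f(\underline x_t)$, which reappears verbatim as the extra inner-product term in the statement. For the centered piece I would apply Young's inequality, $-\gamma_s\langle\delta_t - \mathbb{E}[\delta_t], x_t - x_{t-1}^+\rangle \le \frac{\lambda}{2}\|x_t - x_{t-1}^+\|^2 + \frac{\gamma_s^2}{2\lambda}\|\delta_t - \mathbb{E}[\delta_t]\|^2$, choosing $\lambda \le 1+\gamma_s\tau - L\alpha_s\gamma_s$ so the quadratic term is absorbed by the negative $\|x_t - x_{t-1}^+\|^2$ term inherited from Lemma~\ref{lemma1}.

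The crux is bounding $\mathbb{E}[\|\delta_t - \mathbb{E}[\delta_t]\|^2]$ by a multiple of $f(\tilde x) - l_f(\underline x_t, \tilde x)$ so it can be absorbed by the slack. For the first-order case the standard SVRG trick gives $\mathbb{E}[\|\delta_t\|^2] \le \mathbb{E}[\|\nabla f_{i_t}(\underline x_t) - \nabla f_{i_t}(\tilde x)\|^2] \le 2L[f(\tilde x) - l_f(\underline x_t, \tilde x)]$ via the convex--smooth gradient inequality $\|\nabla f_i(x) - \nabla f_i(y)\|^2 \le 2L[f_i(y) - f_i(x) - \langle\nabla f_i(x), y - x\rangle]$. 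Taking $\lambda = 1+\gamma_s\tau - L\alpha_s\gamma_s$, the net coefficient on $f(\tilde x) - l_f(\underline x_t, \tilde x)$ becomes $\frac{\gamma_s}{\alpha_s}\bigl[\frac{L\alpha_s\gamma_s}{1+\tau\gamma_s - L\alpha_s\gamma_s} - p_s\bigr]$, nonpositive exactly under the stated hypothesis on $p_s$, finishing the first-order bound.

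For the zeroth-order case I would decompose $\hat{\nabla}_{coord} f_i(x) = \nabla f_i(x) + b_i(x)$ and use a Taylor expansion with $L$-smoothness to show $\|b_i(x)\|^2 = O(\mu^2 L^2 d)$. Applying $\|a+b\|^2 \le 2\|a\|^2 + 2\|b\|^2$ to the increment $\hat{\nabla}_{coord} f_i(\underline x_t) - \hat{\nabla}_{coord} f_i(\tilde x)$ and reusing the first-order variance bound on the smooth-gradient part then yields $\mathbb{E}[\|\delta_t - \mathbb{E}[\delta_t]\|^2] \le 4L[f(\tilde x) - l_f(\underline x_t, \tilde x)] + O(\mu^2 L^2 d)$. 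The factor $4$ forces the stronger condition $p_s > \frac{4L\alpha_s\gamma_s}{1+\tau\gamma_s - L\alpha_s\gamma_s}$ and requires $\lambda = \frac{1+\gamma_s\tau - L\alpha_s\gamma_s}{2}$ in Young's step; the $O(\mu^2 L^2 d)$ residual becomes the stated $\frac{6\gamma_s^2\mu^2 L^2 d}{1+\tau\gamma_s - L\alpha_s\gamma_s}$ term. The main obstacle is matching all numerical constants so that Young's parameter, the variance bound, and the $p_s$ threshold align exactly, particularly in the zeroth-order case where the smoothing bias contributes an irreducible additive residual rather than being absorbed.
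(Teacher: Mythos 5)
Your proposal follows essentially the same route as the paper's proof: $L$-smoothness plus the three-sequence decomposition with $\bar{x}_t-\underline{x}_t=\alpha_s(x_t-x_{t-1}^+)$, Lemma \ref{lemma1}, keeping the slack $\frac{\gamma_sp_s}{\alpha_s}\left[f(\tilde{x})-l_f(\underline{x}_t,\tilde{x})\right]$, splitting $x_t-x$ at $x_{t-1}^+$ so the bias $\mathbb{E}[\delta_t]=\hat{\nabla}_{coord}f(\underline{x}_t)-\nabla f(\underline{x}_t)$ survives as the extra inner product, Young's inequality against the $\|x_t-x_{t-1}^+\|^2$ term, and the SVRG-type variance bound (the paper's Lemmas \ref{lemma4}, \ref{lemma6}, \ref{lemma7}). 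The only divergence is constant bookkeeping in the zeroth-order case: the paper uses $\mathbb{E}\|\delta_t-\mathbb{E}[\delta_t]\|^2\leq 8L\left(f(\tilde{x})-l_f(\underline{x}_t,\tilde{x})\right)+12\mu^2L^2d$ together with absorption by the full coefficient $\frac{1+\tau\gamma_s-L\alpha_s\gamma_s}{2}$, which reproduces exactly the stated $4\alpha_s\gamma_sL$ threshold and $\frac{6\gamma_s^2\mu^2L^2d}{1+\tau\gamma_s-L\alpha_s\gamma_s}$ residual, whereas your tighter $4L$ variance bound with full absorption gives even smaller constants, so the lemma as stated follows either way.
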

	\begin{proof}
		Since $f$ is $\tau$-strongly convex and $L$-smooth, then we have
		\begin{equation}	\label{l2eq7}
		\begin{aligned}
		f(\bar{x}_t)\leq&l_{f}(\underline{x}_t,\bar{x}_t)+\frac{L}{2}\|\bar{x}_t-\underline{x}_t\|^2\\
		\overset{\textrm{\ding{172}}}{=}&(1-\alpha_s-p_s)l_{f}(\underline{x}_t,\bar{x}_{t-1})+\alpha_sl_{f}(\underline{x}_t,x_t)+p_sl_{f}(\underline{x}_t,\tilde{x})+\frac{L\alpha_s^2}{2}\|x_t-x_{t-1}^+\|^2
		\end{aligned}
		\end{equation}
		where \ding{172} comes from the definition of $\bar{x}_t$ in Algorithm \ref{algo1}. Plugging Lemma \ref{lemma1} into (\ref{l2eq7}), we get
		\begin{equation}	\label{l2eq1}
		\begin{aligned}
		f(\bar{x}_t)\leq&(1-\alpha_s-p_s)l_{f}(\underline{x}_t,\bar{x}_{t-1})\\
		&+\alpha_s\left[l_{f}(\underline{x}_t,x)+\frac{\tau}{2}\|x-\underline{x}_t\|^2+\frac{1}{2\gamma_s}\|x-x_{t-1}\|^2-\frac{1+\tau\gamma_s}{2\gamma_s}\|x-x_t\|^2+\frac{\eta_{s,t}}{\gamma_s}\right]\\
		&+p_sl_{f}(\underline{x}_t,\tilde{x})-\frac{\alpha_s}{2\gamma_s}\left(1+\tau\gamma_s-L\alpha_s\gamma_s\right)\|x_t-x_{t-1}^+\|^2-\alpha_s\langle\delta_t, x_t-x\rangle\\
		\overset{\textrm{\ding{172}}}{\leq}& (1-\alpha_s-p_s)f(\bar{x}_{t-1})+\alpha_s\left[f(x)+\frac{1}{2\gamma_s}\|x-x_{t-1}\|^2-\frac{1+\tau\gamma_s}{2\gamma_s}\|x-x_t\|^2+\frac{\eta_{s,t}}{\gamma_s}\right]\\
		&+p_sl_{f}(\underline{x}_t,\tilde{x})-\frac{\alpha_s}{2\gamma_s}\left(1+\tau\gamma_s-L\alpha_s\gamma_s\right)\|x_t-x_{t-1}^+\|^2-\alpha_s\langle\delta_t, x_t-x_{t-1}^+\rangle-\alpha_s\langle\delta_t, x_{t-1}^+-x\rangle\\
		\end{aligned}
		\end{equation}
		where \ding{172} comes from the fact that $f$ is $\tau$-strongly convex. Now we give proof to the first-order and zeroth-order case respectively.
		
		\textbf{First-order Case}: Using the fact that $b\langle u, v\rangle-a\|v\|^2/2\leq b^2\|u\|^2/(2a)$, we have from (\ref{l2eq1})
		\begin{equation}	\label{l2eq4}
		\begin{aligned}
		f(\bar{x}_t)\leq& (1-\alpha_s-p_s)f(\bar{x}_{t-1})+\alpha_s\left[f(x)+\frac{1}{2\gamma_s}\|x-x_{t-1}\|^2-\frac{1+\tau\gamma_s}{2\gamma_s}\|x-x_t\|^2+\frac{\eta_{s,t}}{\gamma_s}\right]\\
		&+\underbrace{p_sl_{f}(\underline{x}_t,\tilde{x})+\frac{\alpha_s\gamma_s\|\delta_t\|^2}{2(1+\tau\gamma_s-L\alpha_s\gamma_s)}-\alpha_s\langle\delta_t, x_{t-1}^+-x\rangle}_{Q_2}\\
		\end{aligned}
		\end{equation}
		Using Lemma \ref{lemma4}, we can bound $Q_2$ as
		\begin{equation}	\label{l2eq2}
		\begin{aligned}
		&\mathbb{E}\left[p_sl_{f}(\underline{x}_t,\tilde{x})+\frac{\alpha_s\gamma_s\|\delta_t\|^2}{2(1+\tau\gamma_s-L\alpha_s\gamma_s)}-\alpha_s\langle\delta_t, x_{t-1}^+-x\rangle\right] = \mathbb{E}\left[p_sl_{f}(\underline{x}_t,\tilde{x})+\frac{\alpha_s\gamma_s\|\delta_t\|^2}{2(1+\tau\gamma_s-L\alpha_s\gamma_s)}\right]\\
		\overset{\textrm{\ding{172}}}{\leq}&p_sl_{f}(\underline{x}_t,\tilde{x})+\frac{L\alpha_s\gamma_s}{1+\tau\gamma_s-L\alpha_s\gamma_s}\left(f(\tilde{x})-l_f(\underline{x}_t, \tilde{x})\right)\\
		=&\left(p_s-\frac{L\alpha_s\gamma_s}{2(1+\tau\gamma_s-L\alpha_s\gamma_s)}\right)l_{f}(\underline{x}_t,\tilde{x})+\frac{L\alpha_s\gamma_s}{1+\tau\gamma_s-L\alpha_s\gamma_s}f(\tilde{x}) \overset{\textrm{\ding{173}}}{\leq} p_sf(\tilde{x})
		\end{aligned}
		\end{equation}
		where \ding{172} comes from Lemma \ref{lemma4} and  \ding{173} comes from the assumption that $p_s-\frac{L\alpha_s\gamma_s}{1+\tau\gamma_s-L\alpha_s\gamma_s}>p_s-\frac{2L\alpha_s\gamma_s}{1+\tau\gamma_s-L\alpha_s\gamma_s}>0$ and the convexity of $f$. Plugging (\ref{l2eq2}) into (\ref{l2eq4}), we get
		\begin{equation}
		\begin{aligned}
		&\mathbb{E}\left[f(\bar{x}_t)+\frac{\alpha_s(1+\tau\gamma_s)}{2\gamma_s}\|x-x_t\|^2\right]\\
		\leq&(1-\alpha_s-p_s)f(\bar{x}_{t-1})+\alpha_sf(x)+p_sf(\tilde{x})+\frac{\alpha_s}{2\gamma_s}\|x-x_{t-1}\|^2+\frac{\alpha_s}{\gamma_s}\eta_{s,t}
		\end{aligned}
		\end{equation}
		Subtracting both sides with $f(x)$ and then multiplying both sides with $\frac{\gamma_s}{\alpha_s}$, we get
		\begin{equation}
		\begin{aligned}
		&\frac{\gamma_s}{\alpha_s}\mathbb{E}\left[f(\bar{x}_t)-f(x)\right]+\frac{1+\tau\gamma_s}{2}\mathbb{E}\left[\|x-x_t\|^2\right]\\
		\leq&\frac{\gamma_s(1-\alpha_s-p_s)}{\alpha_s}\left[f(\bar{x}_{t-1})-f(x)\right]+\frac{\gamma_sp_s}{\alpha_s}\left[f(\tilde{x})-f(x)\right]+\frac{1}{2}\|x-x_{t-1}\|^2+\eta_{s,t}
		\end{aligned}
		\end{equation}
		Then we get the desired result for the first-order case. Next we give proof to the zeroth-order case.
		
		\textbf{Zeroth-order Case}: We can rewrite (\ref{l2eq1}) as
		\begin{equation}	\label{l2eq6}
		\begin{aligned}
		f(\bar{x}_t)\leq& (1-\alpha_s-p_s)f(\bar{x}_{t-1})+\alpha_s\left[f(x)+\frac{1}{2\gamma_s}\|x-x_{t-1}\|^2-\frac{1+\tau\gamma_s}{2\gamma_s}\|x-x_t\|^2+\frac{\eta_{s,t}}{\gamma_s}\right]\\
		&+p_sl_{f}(\underline{x}_t,\tilde{x})-\frac{\alpha_s}{2\gamma_s}\left(1+\tau\gamma_s-L\alpha_s\gamma_s\right)\|x_t-x_{t-1}^+\|^2-\alpha_s\langle\delta_t-\mathbb{E}\left[\delta_t\right], x_t-x_{t-1}^+\rangle\\
		&-\alpha_s\langle\mathbb{E}\left[\delta_t\right], x_t-x_{t-1}^+\rangle-\alpha_s\langle\delta_t, x_{t-1}^+-x\rangle\\
		\overset{\textrm{\ding{172}}}{\leq}& (1-\alpha_s-p_s)f(\bar{x}_{t-1})+\alpha_s\left[f(x)+\frac{1}{2\gamma_s}\|x-x_{t-1}\|^2-\frac{1+\tau\gamma_s}{2\gamma_s}\|x-x_t\|^2+\frac{\eta_{s,t}}{\gamma_s}\right]\\
		&+p_sl_{f}(\underline{x}_t,\tilde{x})+\frac{\alpha_s\gamma_s\|\delta_t-\mathbb{E}\left[\delta_t\right]\|^2}{2(1+\tau\gamma_s-L\alpha_s\gamma_s)}-\alpha_s\langle\mathbb{E}\left[\delta_t\right], x_t-x_{t-1}^+\rangle-\alpha_s\langle\delta_t, x_{t-1}^+-x\rangle\\
		\end{aligned}
		\end{equation}
		where \ding{172} comes from the fact that $b\langle u, v\rangle-a\|v\|^2/2\leq b^2\|u\|^2/(2a)$. Using Lemma \ref{lemma4}, we have
		\begin{equation}	\label{l2eq3}
		\begin{aligned}
		&\mathbb{E}\left[p_sl_{f}(\underline{x}_t,\tilde{x})+\frac{\alpha_s\gamma_s\|\delta_t-\mathbb{E}\left[\delta_t\right]\|^2}{2(1+\tau\gamma_s-L\alpha_s\gamma_s)}\right]\\
		\overset{\textrm{\ding{172}}}{\leq}& p_sl_f(\underline{x}_t, \tilde{x})+\frac{4\alpha_s\gamma_sL}{1+\tau\gamma_s-L\alpha_s\gamma_s}\left(f(\tilde{x})-l_f(\underline{x}_t, \tilde{x})\right)+\frac{6\alpha_s\gamma_s\mu^2L^2d}{1+\tau\gamma_s-L\alpha_s\gamma_s}\\
		=&\left(p_s-\frac{4\alpha_s\gamma_sL}{1+\tau\gamma_s-L\alpha_s\gamma_s}\right)l_f(\underline{x}_t, \tilde{x})+\frac{4\alpha_s\gamma_sL}{1+\tau\gamma_s-L\alpha_s\gamma_s}f(\tilde{x})+\frac{6\alpha_s\gamma_s\mu^2L^2d}{1+\tau\gamma_s-L\alpha_s\gamma_s}\\
		\overset{\textrm{\ding{173}}}{\leq}& p_sf(\tilde{x})+\frac{6\alpha_s\gamma_s\mu^2L^2d}{1+\tau\gamma_s-L\alpha_s\gamma_s}\\
		\end{aligned}
		\end{equation}
		where \ding{172} comes from Lemma \ref{lemma4} and \ding{173} comes from the assumption that $p_s-\frac{4\alpha_s\gamma_sdL}{1+\tau\gamma_s-L\alpha_s\gamma_s}>0$ and the convexity of $f$. Also we have
		\begin{equation}	\label{l2eq5}
		\begin{aligned}
		&\mathbb{E}\left[-\alpha_s\langle\mathbb{E}\left[\delta_t\right], x_t-x_{t-1}^+\rangle-\alpha_s\langle\delta_t, x_{t-1}^+-x\rangle\right] = -\alpha_s\langle\hat{\nabla}_{coord}f(\underline{x}_t)-\nabla f(\underline{x}_t), x_t-x\rangle
		\end{aligned}
		\end{equation}
		which comes from Lemma \ref{lemma4}. Plugging (\ref{l2eq3}), (\ref{l2eq5}) into (\ref{l2eq6}), we get
		\begin{equation}
		\begin{aligned}
		&\mathbb{E}\left[f(\bar{x}_t)+\frac{\alpha_s(1+\tau\gamma_s)}{2\gamma_s}\|x-x_t\|^2\right]\\
		\leq&(1-\alpha_s-p_s)f(\bar{x}_{t-1})+\alpha_sf(x)+p_sf(\tilde{x})+\frac{\alpha_s}{2\gamma_s}\|x-x_{t-1}\|^2+\frac{\alpha_s}{\gamma_s}\eta_{s,t}\\
		&-\alpha_s\langle\hat{\nabla}_{coord}f(\underline{x}_t)-\nabla f(\underline{x}_t), x_t-x\rangle+\frac{6\alpha_s\gamma_s\mu^2L^2d}{1+\tau\gamma_s-L\alpha_s\gamma_s}\\
		\end{aligned}
		\end{equation}
		Subtracting both sides with $f(x)$ and then multiplying both sides with $\frac{\gamma_s}{\alpha_s}$, we get
		\begin{equation}
		\begin{aligned}
		&\frac{\gamma_s}{\alpha_s}\mathbb{E}\left[f(\bar{x}_t)-f(x)\right]+\frac{1+\tau\gamma_s}{2}\mathbb{E}\left[\|x-x_t\|^2\right]\\
		\leq&\frac{\gamma_s(1-\alpha_s-p_s)}{\alpha_s}\left[f(\bar{x}_{t-1})-f(x)\right]+\frac{\gamma_sp_s}{\alpha_s}\left[f(\tilde{x})-f(x)\right]+\frac{1}{2}\|x-x_{t-1}\|^2+\eta_{s,t}\\
		&-\gamma_s\langle \hat{\nabla}_{coord}f(\underline{x}_t)-\nabla f(\underline{x}_t), x_t-x\rangle+\frac{6\gamma_s^2\mu^2L^2d}{1+\tau\gamma_s-L\alpha_s\gamma_s}\\
		\end{aligned}
		\end{equation}
		Then we get the desired result for the zeroth-order case. Then we complete the proof.
	\end{proof}

\section{Proof of Theorem  \ref{theorem1}}	\label{app_b}
	\begin{lemma}	\label{lemma8}
		Suppose Assumption \ref{assum1} holds. Denote $\mathcal{L}_s = \frac{\gamma_s}{\alpha_s}+(T_s-1)\frac{\gamma_s(\alpha_s+p_s)}{\alpha_s}, \mathcal{R}_s = \frac{\gamma_s}{\alpha_s}(1-\alpha_s)+(T_s-1)\frac{\gamma_sp_s}{\alpha_s}$. Set $\theta_t =\begin{cases}
		\frac{\gamma_s}{\alpha_s}(\alpha_s+p_s), &t\leq T_s-1\\
		\frac{\gamma_s}{\alpha_s}, &t=T_s\\
		\end{cases}$
		
		$\bullet$ For the first-order case, assume that $\alpha_s\in[0, 1], p_s\in[0, 1]$ and $\gamma_s>0$ satisfy
		\[1+\tau\gamma_s-L\alpha_s\gamma_s >0, \; 1-\alpha_s-p_s\geq0, \qquad p_s-\frac{L\alpha_s\gamma_s}{1+\tau\gamma_s-L\alpha_s\gamma_s}>0\] Then we have
		\[\begin{aligned}
		\mathcal{L}_s\mathbb{E}\left[f(\tilde{x}^s)-f(x)\right] \leq \mathcal{R}_s\mathbb{E}\left[f(\tilde{x}^{s-1})-f(x)\right]+\mathbb{E}\left[\frac{1}{2}\|x-x^{s-1}\|^2-\frac{1}{2}\|x-x^s\|^2\right] +\sum_{t=1}^{T_s}\eta_{s,t}\\
		\end{aligned}\]
		
		$\bullet$ For the zeroth-order case, assume that $\alpha_s\in[0, 1], p_s\in[0, 1]$ and $\gamma_s>0$ satisfy
		\[1+\tau\gamma_s-L\alpha_s\gamma_s >0, \; 1-\alpha_s-p_s\geq0, \qquad p_s-\frac{4\alpha_s\gamma_sL}{1+\tau\gamma_s-L\alpha_s\gamma_s}>0\] Then we have
		\[\begin{aligned}
		\mathcal{L}_s\mathbb{E}\left[f(\tilde{x}^s)-f(x)\right] \leq& \mathcal{R}_s\mathbb{E}\left[f(\tilde{x}^{s-1})-f(x)\right]+\mathbb{E}\left[\frac{1}{2}\|x-x^{s-1}\|^2-\frac{1}{2}\|x-x^s\|^2\right]+\sum_{t=1}^{T_s}\eta_{s,t}\\
		&+\gamma_sT_sD\mu L\sqrt{d}+T_s\frac{6\gamma_s^2\mu^2L^2d}{1-L\alpha_s\gamma_s}
		\end{aligned}\]
	\end{lemma}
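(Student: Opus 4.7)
The strategy is to derive Lemma \ref{lemma8} by summing the per-iteration bound of Lemma \ref{lemma2} across the inner loop $t=1,\dots,T_s$ of epoch $s$, with $\tilde{x}=\tilde{x}^{s-1}$ fixed throughout. The crucial observation is that under Assumption \ref{assum1} we have $\tau=0$, so $1+\tau\gamma_s=1$ and the coefficients of $\|x-x_t\|^2$ on both sides of Lemma \ref{lemma2} equal $1/2$. Summing without any extra weights, these norm terms telescope directly, leaving only $\tfrac{1}{2}\|x-x^{s-1}\|^2 - \tfrac{1}{2}\|x-x^s\|^2$, since $x_0=x^{s-1}$ and $x_{T_s}=x^s$ by construction of Algorithm \ref{algo1}.

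Next I would carry out the bookkeeping for the $f$-terms. Write $A_t := f(\bar{x}_t)-f(x)$. The LHS of Lemma \ref{lemma2} contributes $\frac{\gamma_s}{\alpha_s}A_t$ while the RHS contributes $\frac{\gamma_s(1-\alpha_s-p_s)}{\alpha_s}A_{t-1}$. After summing and cancelling, the coefficient of $A_t$ for $1\le t\le T_s-1$ becomes $\frac{\gamma_s(\alpha_s+p_s)}{\alpha_s}$, and at $t=T_s$ it is $\frac{\gamma_s}{\alpha_s}$; these are precisely the prescribed $\theta_t$. The boundary term $A_0=f(\tilde{x}^{s-1})-f(x)$ carries coefficient $\frac{\gamma_s(1-\alpha_s-p_s)}{\alpha_s}$, and combining with the $T_s$ copies of $\frac{\gamma_s p_s}{\alpha_s}[f(\tilde{x})-f(x)]$ from the RHS yields $\frac{\gamma_s(1-\alpha_s)}{\alpha_s} + (T_s-1)\frac{\gamma_s p_s}{\alpha_s} = \mathcal{R}_s$. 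Since $\sum_{t=1}^{T_s}\theta_t = \mathcal{L}_s$, convexity of $f$ applied to $\tilde{x}^s = \sum_t\theta_t\bar{x}_t/\mathcal{L}_s$ gives $\mathcal{L}_s[f(\tilde{x}^s)-f(x)] \le \sum_t\theta_t A_t$, producing the first-order claim after taking expectations.

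For the zeroth-order case, Lemma \ref{lemma2} carries two additional per-step quantities that must be summed. I would bound each inner product $-\gamma_s\langle\hat{\nabla}_{coord}f(\underline{x}_t)-\nabla f(\underline{x}_t), x_t-x\rangle$ via Cauchy-Schwarz, using the standard coordinate-wise bias estimate $\|\hat{\nabla}_{coord}f(y)-\nabla f(y)\|\le\mu L\sqrt{d}$ (a direct consequence of $L$-smoothness applied to each coordinate) together with the diameter bound $\|x_t-x\|\le D$ from Assumption \ref{a3}, which after summation yields at most $\gamma_s T_s D\mu L\sqrt{d}$. The deterministic term $\frac{6\gamma_s^2\mu^2 L^2 d}{1+\tau\gamma_s-L\alpha_s\gamma_s}$, with $\tau=0$, contributes $T_s$ copies of $\frac{6\gamma_s^2\mu^2 L^2 d}{1-L\alpha_s\gamma_s}$, matching the stated bound. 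The main obstacle is purely algebraic: verifying that the coefficient shift between consecutive applications of Lemma \ref{lemma2} produces exactly $\theta_t$ and that the residual $t=0$ and $t=T_s$ boundary terms combine into $\mathcal{R}_s$ and $\theta_{T_s}$ respectively. No new analytic ideas are required beyond this regrouping and the bias estimate for the coordinate-wise estimator.
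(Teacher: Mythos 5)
Your proposal is correct and follows essentially the same route as the paper's proof: sum the per-iteration bound of Lemma \ref{lemma2} over $t=1,\dots,T_s$, regroup the coefficients into $\theta_t$, $\mathcal{L}_s$, $\mathcal{R}_s$ with the telescoping norm terms, apply Jensen's inequality to $\tilde{x}^s$, and in the zeroth-order case bound the bias term by Cauchy--Schwarz together with the coordinate-estimator error bound (Lemma \ref{lemma7}) and the diameter bound of Assumption \ref{a3}. The only cosmetic difference is that you justify the bias estimate directly from smoothness and set $\tau=0$ outright, whereas the paper keeps $\tau$ generic and simply drops the nonnegative $\tau\gamma_s$ contributions; both are fine.
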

	\begin{proof}
		Define \[\Delta_t = \begin{cases}
		0,\; &\textrm{ for the first-order case}\\
		-\gamma_s\langle \hat{\nabla}_{coord}f(\underline{x}_t)-\nabla f(\underline{x}_t), x_t-x\rangle+\frac{6\gamma_s^2\mu^2L^2d}{1+\tau\gamma_s-L\alpha_s\gamma_s}, \;&\textrm{ for the zeroth-order case}\\
		\end{cases}\]
		From Lemma \ref{lemma2}, we have
		\begin{equation}
		\begin{aligned}
		\frac{\gamma_s}{\alpha_s}\mathbb{E}\left[f(\bar{x}_t)-f(x)\right] \leq&\frac{\gamma_s(1-\alpha_s-p_s)}{\alpha_s}\mathbb{E}\left[f(\bar{x}_{t-1})-f(x)\right]+\frac{\gamma_sp_s}{\alpha_s}\mathbb{E}\left[f(\tilde{x})-f(x)\right]\\
		&+\mathbb{E}\left[\frac{1}{2}\|x-x_{t-1}\|^2-\frac{1}{2}\|x-x_t\|^2\right]+\eta_{s,t}+\Delta_t\\
		\end{aligned}
		\end{equation}
		Summing the above inequality over $t= 1, ..., T_s$, with the definition of $\theta_t$, we have
		\begin{equation}
		\begin{aligned}
		&\sum_{t=1}^{T_s}\theta_t\mathbb{E}\left[f(\bar{x}_t)-f(x)\right]\\
		\leq& \left[\frac{\gamma_s}{\alpha_s}(1-\alpha_s)+(T_s-1)\frac{\gamma_sp_s}{\alpha_s}\right]\mathbb{E}\left[f(\tilde{x})-f(x)\right]+\mathbb{E}\left[\frac{1}{2}\|x-x_0\|^2-\frac{1}{2}\|x-x_{T_s}\|^2\right]+\sum_{t=1}^{T_s}\eta_{s,t}+\sum_{t=1}^{T_s}\Delta_t\\
		\end{aligned}
		\end{equation}
		Note that $\tilde{x}^s = \sum_{t=1}^{T}(\theta_t\bar{x}_t)/\sum_{t=1}^T\theta_t, x^s = x_{T_s}, x^{s-1} = x_0, \tilde{x}^{s-1}=\tilde{x}$, the convexity of $f$ and Jensen's inequality, we have
		\begin{equation}
		\begin{aligned}
		&\sum_{t=1}^{T_s}\theta_t\mathbb{E}\left[f(\tilde{x}^s)-f(x)\right]\\
		\leq& \left[\frac{\gamma_s}{\alpha_s}(1-\alpha_s)+(T_s-1)\frac{\gamma_sp_s}{\alpha_s}\right]\mathbb{E}\left[f(\tilde{x}^{s-1})-f(x)\right]+\mathbb{E}\left[\frac{1}{2}\|x-x^{s-1}\|^2-\frac{1}{2}\|x-x^s\|^2\right]\\
		&+\sum_{t=1}^{T_s}\eta_{s,t}+\sum_{t=1}^{T_s}\Delta_t\\
		\end{aligned}
		\end{equation}
		Denote $\mathcal{L}_s = \frac{\gamma_s}{\alpha_s}+(T_s-1)\frac{\gamma_s(\alpha_s+p_s)}{\alpha_s}, \mathcal{R}_s = \frac{\gamma_s}{\alpha_s}(1-\alpha_s)+(T_s-1)\frac{\gamma_sp_s}{\alpha_s}$, we have
		\begin{equation}	\label{l8eq3}
		\begin{aligned}
		\mathcal{L}_s\mathbb{E}\left[f(\tilde{x}^s)-f(x)\right] \leq \mathcal{R}_s\mathbb{E}\left[f(\tilde{x}^{s-1})-f(x)\right]+\mathbb{E}\left[\frac{1}{2}\|x-x^{s-1}\|^2-\frac{1}{2}\|x-x^s\|^2\right] +\sum_{t=1}^{T_s}\eta_{s,t}+\sum_{t=1}^{T_s}\Delta_t\\
		\end{aligned}
		\end{equation}
		\textbf{First-order Case}: Using the definition of $\Delta_t$ and (\ref{l8eq3}), we have
		\begin{equation}
		\begin{aligned}
		\mathcal{L}_s\mathbb{E}\left[f(\tilde{x}^s)-f(x)\right] \leq \mathcal{R}_s\mathbb{E}\left[f(\tilde{x}^{s-1})-f(x)\right]+\mathbb{E}\left[\frac{1}{2}\|x-x^{s-1}\|^2-\frac{1}{2}\|x-x^s\|^2\right] +\sum_{t=1}^{T_s}\eta_{s,t}\\
		\end{aligned}
		\end{equation}
		Then we get the desired result for first-order case.

		\textbf{Zeroth-order Case}: Using the definition of $\Delta_t$ and (\ref{l8eq3}), we have
		\begin{equation}	\label{l8eq1}
		\begin{aligned}
		\mathcal{L}_s\mathbb{E}\left[f(\tilde{x}^s)-f(x)\right] \leq& \mathcal{R}_s\mathbb{E}\left[f(\tilde{x}^{s-1})-f(x)\right]+\mathbb{E}\left[\frac{1}{2}\|x-x^{s-1}\|^2-\frac{1}{2}\|x-x^s\|^2\right]+\sum_{t=1}^{T_s}\eta_{s,t}\\
		&\underbrace{-\gamma_s\sum_{t=1}^{T_s}\mathbb{E}\left[\langle\hat{\nabla}_{coord}f(\underline{x}_t)-\nabla f(\underline{x}_t), x_t-x\rangle\right]}_{Q_3}+T_s\frac{6\gamma_s^2\mu^2L^2d}{1-L\alpha_s\gamma_s}
		\end{aligned}
		\end{equation}
		Next we have
		\begin{equation}	\label{l8eq2}
		\begin{aligned}
		Q_3=		&-\gamma_s\sum_{t=1}^{T_s}\mathbb{E}\left[\langle\hat{\nabla}_{coord}f(\underline{x}_t)-\nabla f(\underline{x}_t), \; x_t-x\rangle\right]\\
		\overset{\textrm{\ding{172}}}{\leq}& \gamma_s\sum_{t=1}^{T_s}\mathbb{E}\left[\|\hat{\nabla}_{coord}f(\underline{x}_t)-\nabla f(\underline{x}_t)\|\cdot \|x_t-x\|\right] \overset{\textrm{\ding{173}}}{\leq}	\gamma_sT_sD\mu L\sqrt{d}
		\end{aligned}
		\end{equation}
		where \ding{172} comes from Cauchy-Schwartz inequality and \ding{173} comes from Lemma \ref{lemma7} and Assumption \ref{assum1}. Plugging (\ref{l8eq2}) into (\ref{l8eq1}), we get
		\begin{equation}
		\begin{aligned}
		\mathcal{L}_s\mathbb{E}\left[f(\tilde{x}^s)-f(x)\right]
		\leq& \mathcal{R}_s\mathbb{E}\left[f(\tilde{x}^{s-1})-f(x)\right]+\mathbb{E}\left[\frac{1}{2}\|x-x^{s-1}\|^2-\frac{1}{2}\|x-x^s\|^2\right]+\sum_{t=1}^{T_s}\eta_{s,t}\\
		&+\gamma_sT_sD\mu L\sqrt{d}+T_s\frac{6\gamma_s^2\mu^2L^2d}{1-L\alpha_s\gamma_s}
		\end{aligned}
		\end{equation}
		Then we get the desired result for zeroth-order case. Then we complete the proof.
	\end{proof}

	\begin{proof}[ of Theorem \ref{theorem1}]
		We give proof to the first-order case and zeroth-order case respectively.
		
		\textbf{First-order Case}: Lemma \ref{lemma8} implies
		\begin{equation}
		\begin{aligned}
		\mathcal{L}_s\mathbb{E}\left[f(\tilde{x}^s)-f(x)\right]
		\leq \mathcal{R}_s\mathbb{E}\left[f(\tilde{x}^{s-1})-f(x)\right]+\mathbb{E}\left[\frac{1}{2}\|x-x^{s-1}\|^2-\frac{1}{2}\|x-x^s\|^2\right] +\sum_{t=1}^{T_s}\eta_{s,t}\\
		\end{aligned}
		\end{equation}
		Summing the above inequality over $s = 1, ..., S$, and set $x=x^*=\arg\min_{x\in\mathcal{C}}f(x)$, we have
		\begin{equation}	\label{t1eq4}
		\begin{aligned}
		&\mathcal{L}_S\mathbb{E}\left[f(\tilde{x}^S)-f(x^*)\right]+\sum_{s=1}^{S-1}\left(\mathcal{L}_s-\mathcal{R}_{s+1}\right)\mathbb{E}\left[f(\tilde{x})-f(x^*)\right]\\
		\leq& \mathcal{R}_1\mathbb{E}\left[f(\tilde{x}^0)-f(x^*)\right]+\mathbb{E}\left[\frac{1}{2}\|x^*-x^0\|^2-\frac{1}{2}\|x^*-x^S\|^2\right] +\sum_{s=1}^S\sum_{t=1}^{T_s}\eta_{s,t}\\
		\end{aligned}
		\end{equation}
		Next we prove $\mathcal{L}_s-\mathcal{R}_{s+1}\leq0$ for all $s= 1, ..., S-1$. When $s< s_0$, we have $\alpha_{s+1}=\alpha_s=\frac{1}{2}, \gamma_{s+1}=\gamma_s, p_{s+1}=p_s=\frac{1}{2}, T_{s+1}=2T_s$, thus
		\begin{equation}
		\begin{aligned}
		\mathcal{L}_s-\mathcal{R}_{s+1} =&\frac{\gamma_s}{\alpha_s}+(T_s-1)\frac{\gamma_s(\alpha_s+p_s)}{\alpha_s}-\left[\frac{\gamma_{s+1}}{\alpha_{s+1}}(1-\alpha_{s+1})+(T_{s+1}-1)\frac{\gamma_{s+1}p_{s+1}}{\alpha_{s+1}}\right]\\ =&\frac{\gamma_s}{\alpha_s}\left[1+(T_s-1)(\alpha_s+p_s)-(1-\alpha_s)-(2T_s-1)p_s\right] = \frac{\gamma_s}{\alpha_s}\left[T_s(\alpha_s-p_s)\right] = 0
		\end{aligned}
		\end{equation}
		When $s\geq s_0$, we have $\alpha_s=\frac{2}{s-s_0+4}, \gamma_s=\frac{1}{3L\alpha_s}, p_{s+1}=p_s=\frac{1}{2}, T_{s+1}=T_s$, thus
		\begin{equation}
		\begin{aligned}
		\mathcal{L}_s-\mathcal{R}_{s+1}  =&\frac{\gamma_s}{\alpha_s}+(T_s-1)\frac{\gamma_s(\alpha_s+p_s)}{\alpha_s}-\left[\frac{\gamma_{s+1}}{\alpha_{s+1}}(1-\alpha_{s+1})+(T_{s+1}-1)\frac{\gamma_{s+1}p_{s+1}}{\alpha_{s+1}}\right]\\= &\frac{\gamma_s}{\alpha_s}-\frac{\gamma_{s+1}}{\alpha_{s+1}}(1-\alpha_{s+1})+(T_{s_0}-1)\left[\frac{\gamma_s(\alpha_s+p_s)}{\alpha_s}-\frac{\gamma_{s+1}p_{s+1}}{\alpha_{s+1}}\right]\\
		=&\frac{1}{12L}+\frac{(T_{s_0}-1)(2(s-s_0+4)-1)}{24L}\geq0
		\end{aligned}
		\end{equation}
		Thus $\mathcal{L}_s-\mathcal{R}_{s+1}\geq0$ for $s=1, ..., S-1$. Note that $\mathcal{R}_1=\frac{2}{3L}$. Plugging this inequality into (\ref{t1eq4}), we get
		\begin{equation}
		\begin{aligned}
		\mathcal{L}_S\mathbb{E}\left[f(\tilde{x}^S)-f(x^*)\right] \leq&\mathcal{L}_S\mathbb{E}\left[f(\tilde{x}^S)-f(x^*)\right]+\sum_{s=1}^{S-1}\left(\mathcal{L}_s-\mathcal{R}_{s+1}\right)\mathbb{E}\left[f(\tilde{x})-f(x^*)\right]\\
		\leq& \mathcal{R}_1\mathbb{E}\left[f(\tilde{x}^0)-f(x^*)\right]+\mathbb{E}\left[\frac{1}{2}\|x^*-x^0\|^2-\frac{1}{2}\|x^*-x^S\|^2\right] +\sum_{s=1}^S\sum_{t=1}^{T_s}\eta_{s,t}\\
		\leq& \frac{2}{3L}\left[f(\tilde{x}^0)-f(x^*)\right]+\frac{1}{2}\|x^*-x^0\|^2 +\sum_{s=1}^S\sum_{t=1}^{T_s}\eta_{s,t} \overset{\textrm{\ding{172}}}{\leq} \frac{D_0}{6L} +\sum_{s=1}^S\sum_{t=1}^{T_s}\eta_{s,t}\\
		\end{aligned}
		\end{equation}
		where \ding{172} comes from the definition of $D_0$ that $D_0=4(f(\tilde{x}^0)-f(x^*))+3L\|x^0-x^*\|^2$.
		\begin{itemize}
			\item If $S\leq s_0$, then $\mathcal{L}_s = \frac{2^{S+1}}{3L}$, we have
			\begin{equation}	\label{t1eq5}
			\mathbb{E}\left[f(\tilde{x}^S)-f(x^*)\right]\leq \frac{D_0}{2^{S+2}}+\frac{3L}{2^{S+1}}\sum_{s=1}^S\sum_{t=1}^{T_s}\eta_{s,t}
			\end{equation}
			With the choice of $\eta_{s,t}$, we have
			\begin{equation}
			\sum_{s=1}^{S}\sum_{t=1}^{T_s}\eta_{s,t} \overset{\textrm{\ding{172}}}{\leq} \sum_{s=1}^{S}\frac{D_0}{sL}\leq\frac{D_0(\log S + 1)}{L}
			\end{equation}
			where \ding{172} comes from $\sum_{k=1}^{n} \frac{1}{k}\leq \log n + 1$. Plugging this inequality into (\ref{t1eq5}) we get
			\begin{equation}
			\mathbb{E}\left[f(\tilde{x}^S)-f(x^*)\right]\leq \frac{3D_0(\log S + 2)}{2^{S+1}}
			\end{equation}
			
			\item If $S>s_0$, we have
			\begin{equation}
			\begin{aligned}
			\mathcal{L}_S=&\frac{1}{3L\alpha_S^2}\left[1+(T_S-1)(\alpha_S+\frac{1}{2})\right]\\
			=&\frac{(S-s_0+4)(T_{s_0}-1)}{6L}+\frac{(S-s_0+4)^2(T_{s_0}+1)}{24L}
			\overset{\textrm{\ding{172}}}{\geq}&\frac{(S-s_0+4)^2n}{48L}
			\end{aligned}
			\end{equation}
			where \ding{172} comes from $T_{s_0}=2^{\lfloor\log_2n\rfloor+1-1}\geq n/2$. Then we have
			\begin{equation}	\label{t1eq6}
			\begin{aligned}
			\mathbb{E}\left[f(\tilde{x}^S)-f(x^*)\right]\leq \frac{8D_0}{n(S-s_0+4)^2}+\frac{48L}{n(S-s_0+4)^2}\sum_{s=1}^S\sum_{t=1}^{T_s}\eta_{s,t}
			\end{aligned}
			\end{equation}
			With the choice of $\eta_{s,t}$, we have
			\begin{equation}
				\sum_{s=1}^{S}\sum_{t=1}^{T_s}\eta_{s,t} \leq \sum_{s=1}^{S}\frac{D_0}{sL}\leq\frac{D_0(\log S + 1)}{L}
			\end{equation}
			Plugging this inequality into (\ref{t1eq6}) we get
			\begin{equation}
			\begin{aligned}
			\mathbb{E}\left[f(\tilde{x}^S)-f(x^*)\right]\leq\frac{48D_0(\log S + 2)}{n(S-s_0+4)^2}
			\end{aligned}
			\end{equation}
			Then we get the desired result for the first-order case.
		\end{itemize}
		
		\textbf{Zeroth-order Case}: From Lemma \ref{lemma8}, we get
		\begin{equation}
		\begin{aligned}
		\mathcal{L}_s\mathbb{E}\left[f(\tilde{x}^s)-f(x)\right] \leq& \mathcal{R}_s\mathbb{E}\left[f(\tilde{x}^{s-1})-f(x)\right]+\mathbb{E}\left[\frac{1}{2}\|x-x^{s-1}\|^2-\frac{1}{2}\|x-x^s\|^2\right]+\sum_{t=1}^{T_s}\eta_{s,t}\\
		&+\gamma_sT_sD\mu L\sqrt{d}+T_s\frac{6\gamma_s^2\mu^2L^2d}{1-L\alpha_s\gamma_s}\\
		\end{aligned}
		\end{equation}
		Summing the above inequality over $s=1, ..., S$ and set $x=x^*=\arg\min_{x\in\mathcal{C}}f(x)$, we have
		\begin{equation}	\label{t1eq1}
		\begin{aligned}
		&\mathcal{L}_S\mathbb{E}\left[f(\tilde{x}^S)-f(x^*)\right]+\sum_{s=1}^{S-1}\left(\mathcal{L}_s-\mathcal{R}_{s+1}\right)\mathbb{E}\left[f(\tilde{x})-f(x^*)\right]\\
		\leq& \mathcal{R}_1\mathbb{E}\left[f(\tilde{x}^0)-f(x^*)\right]+\mathbb{E}\left[\frac{1}{2}\|x^*-x^0\|^2-\frac{1}{2}\|x^*-x^S\|^2\right]+\sum_{s=1}^{S}\sum_{t=1}^{T_s}\eta_{s,t}\\
		&+\sum_{s=1}^{S}\gamma_sT_sD\mu L\sqrt{d}+\sum_{s=1}^{S}T_s\frac{6\gamma_s^2\mu^2L^2d}{1-L\alpha_s\gamma_s}\\
		\end{aligned}
		\end{equation}
		Next we prove $\mathcal{L}_s-\mathcal{R}_{s+1}\leq0$ for all $s= 1, ..., S-1$. When $s\leq s_0$, we have
		\begin{equation}
		\begin{aligned}
		\mathcal{L}_s-\mathcal{R}_{s+1} = &\frac{\gamma_s}{\alpha_s}\left[1+(T_s-1)(\alpha_s+p_s)-(1-\alpha_s)(2T_s-1)p_s\right] = \frac{\gamma_s}{\alpha_s}\left[T_s(\alpha_s-p_s)\right] = 0
		\end{aligned}
		\end{equation}
		When $s>s_0$, $\frac{\gamma_s}{\alpha_s} = \frac{1}{5L\alpha_s^2}= \frac{(s-s_0+4)^2}{20L}$, we have
		\begin{equation}
		\begin{aligned}
		\mathcal{L}_s-\mathcal{R}_{s+1}  =&\frac{\gamma_s}{\alpha_s}+(T_s-1)\frac{\gamma_s(\alpha_s+p_s)}{\alpha_s}-\left[\frac{\gamma_{s+1}}{\alpha_{s+1}}(1-\alpha_{s+1})+(T_{s+1}-1)\frac{\gamma_{s+1}p_{s+1}}{\alpha_{s+1}}\right]\\= &\frac{\gamma_s}{\alpha_s}-\frac{\gamma_{s+1}}{\alpha_{s+1}}(1-\alpha_{s+1})+(T_{s_0}-1)\left[\frac{\gamma_s(\alpha_s+p_s)}{\alpha_s}-\frac{\gamma_{s+1}p_{s+1}}{\alpha_{s+1}}\right]\\
		=&\frac{1}{20L}+\frac{(T_{s_0}-1)(2(s-s_0+4)-1)}{40L}\geq0
		\end{aligned}
		\end{equation}
		Thus $\mathcal{L}_s-\mathcal{R}_{s+1}\geq0$ for $s=1, ..., S-1$. Plugging this inequality into (\ref{t1eq1}), we get
		\begin{equation}
		\begin{aligned}
		&\mathcal{L}_S\mathbb{E}\left[f(\tilde{x}^S)-f(x^*)\right] \leq\mathcal{L}_S\mathbb{E}\left[f(\tilde{x}^S)-f(x^*)\right]+\sum_{s=1}^{S-1}\left(\mathcal{L}_s-\mathcal{R}_{s+1}\right)\mathbb{E}\left[f(\tilde{x})-f(x^*)\right]\\
		\leq& \mathcal{R}_1\mathbb{E}\left[f(\tilde{x}^0)-f(x^*)\right]+\mathbb{E}\left[\frac{1}{2}\|x^*-x^0\|^2-\frac{1}{2}\|x^*-x^S\|^2\right]+\sum_{s=1}^{S}\sum_{t=1}^{T_s}\eta_{s,t} +\sum_{s=1}^{S}\gamma_sT_sD\mu L\sqrt{d}+\sum_{s=1}^{S}T_s\frac{6\gamma_s^2\mu^2L^2d}{1-L\alpha_s\gamma_s}\\
		\leq&\frac{2}{5L}\left[f(\tilde{x}^0)-f(x^*)\right]+\frac{1}{2}\|x^*-x^0\|^2+\sum_{s=1}^{S}\sum_{t=1}^{T_s}\eta_{s,t} +\sum_{s=1}^{S}\gamma_sT_sD\mu L\sqrt{d}+\sum_{s=1}^{S}T_s\frac{6\gamma_s^2\mu^2L^2d}{1-\frac{1}{5}}\\
		\overset{\textrm{\ding{172}}}{\leq}&\frac{D_0}{10L}+\sum_{s=1}^{S}\sum_{t=1}^{T_s}\eta_{s,t} +\sum_{s=1}^{S}\gamma_sT_sD\mu L\sqrt{d}+\sum_{s=1}^{S}T_s\frac{\gamma_s}{\alpha_s}\frac{3\mu^2Ld}{2}\\
		=&\frac{D_0}{10L}+\sum_{s=1}^{S}\sum_{t=1}^{T_s}\eta_{s,t} +\sum_{s=1}^{S}\gamma_sT_sD\mu L\sqrt{d}+\sum_{s=1}^{S}T_s\frac{\gamma_s}{\alpha_s}\frac{3\mu^2Ld}{2}\\
		\end{aligned}
		\end{equation}
		where \ding{172} comes from the definition of $D_0$ that $D_0 = 4(f(\tilde{x}^0)-f(x^*))+5L\|x^0-x^*\|^2$.
		\begin{itemize}
			\item If $S\leq s_0$, then $\mathcal{L}_S = \frac{2^{S+1}}{5L}, \alpha_S = \frac{1}{2}, \gamma_S = \frac{2}{5L}$. We have
			\begin{equation}	\label{t1eq2}
			\begin{aligned}
			&\mathbb{E}\left[f(\tilde{x}^S)-f(x^*)\right]\\
			\leq&\frac{D_0}{2^{S+2}}+\frac{5L}{2^{S+1}}\sum_{s=1}^{S}\sum_{t=1}^{T_s}\eta_{s,t}+D\mu L\sqrt{d}+3\mu^2Ld\\
			\end{aligned}
			\end{equation}
			With the choice of $\eta_{s,t}$, we have
			\begin{equation}
				\sum_{s=1}^{S}\sum_{t=1}^{T_s}\eta_{s,t} \leq \sum_{s=1}^{S}\frac{D_0}{sL}\leq\frac{D_0(\log S + 1)}{L}
			\end{equation}
			Plugging this inequality into (\ref{t1eq2}) we get
			\begin{equation}
			\begin{aligned}
			\mathbb{E}\left[f(\tilde{x}^S)-f(x^*)\right]\leq \frac{5D_0(\log S + 2)}{2^{S+1}}+D\mu L\sqrt{d}+3\mu^2Ld
			\end{aligned}
			\end{equation}
			
			\item If $S>s_0$, then $\mathcal{L}_s-\mathcal{R}_s = \gamma_sT_s>0$ for $s>s_0$, and $\sum_{s=1}^{s_0}2^{s-1} = 2^{s_0}-1\leq 2n$. We have
			\begin{equation}
			\begin{aligned}
			\mathcal{L}_S=&\frac{1}{5L\alpha_S^2}\left[1+(T_S-1)(\alpha_S+\frac{1}{2})\right]\\
			=&\frac{(S-s_0+4)(T_{s_0}-1)}{10L}+\frac{(S-s_0+4)^2(T_{s_0}+1)}{40L}
			\overset{\textrm{\ding{172}}}{\geq}&\frac{(S-s_0+4)^2n}{80L}
			\end{aligned}
			\end{equation}
			where \ding{172} comes from $T_{s_0}=2^{\lfloor\log_2n\rfloor+1-1}\geq n/2$. And
			\begin{equation}
			\begin{aligned}
			\sum_{s=1}^{S}T_s\frac{\gamma_s}{\alpha_s} =& \sum_{s=1}^{s_0}\frac{2^{s+1}}{5L}+\sum_{s=s_0+1}^S\frac{(s-s_0+4)^2}{20L}2^{s_0-1}\overset{\textrm{\ding{172}}}{\leq} \frac{8n}{5L}+\frac{n(S-s_0+4)^3}{20L} \leq \frac{n(S-s_0+4)^3}{10L}
			\end{aligned}
			\end{equation}
			where \ding{172} comes from $\sum_{i=1}^ni^2\leq n^3$. And
			\begin{equation}
			\begin{aligned}
			\sum_{s=1}^{S}\gamma_sT_s \leq \frac{1}{2}\sum_{s=1}^{S}T_s\frac{\gamma_s}{\alpha_s}\leq \frac{n(S-s_0+4)^3}{20L}
			\end{aligned}
			\end{equation}
			Thus
			\begin{equation}	\label{t1eq3}
			\begin{aligned}
			\mathbb{E}\left[f(\tilde{x}^S)-f(x^*)\right] \leq&\frac{8D_0}{n(S-s_0+4)^2}+\frac{80L}{n(S-s_0+4)^2}\sum_{s=1}^{S}\sum_{t=1}^{T_s}\eta_{s,t}+8(S-s_0+4)D\mu L\sqrt{d}\\
			&+12(S-s_0+4)\mu^2Ld\\
			\end{aligned}
			\end{equation}
			With the choice of $\eta_{s,t}$, we have
			\begin{equation}
				\sum_{s=1}^{S}\sum_{t=1}^{T_s}\eta_{s,t} \leq \sum_{s=1}^{S}\frac{D_0}{sL}\leq\frac{D_0(\log S + 1)}{L}
			\end{equation}
			Plugging this inequality into (\ref{t1eq3}) we get
			\begin{equation}
			\begin{aligned}
			\mathbb{E}\left[f(\tilde{x}^S)-f(x^*)\right] \leq&\frac{80D_0(\log S + 2)}{n(S-s_0+4)^2}+8(S-s_0+4)D\mu L\sqrt{d} +12(S-s_0+4)\mu^2Ld
			\end{aligned}
			\end{equation}
		\end{itemize}
		Then we get the desired result for the zeroth-order case. Then we complete the proof.
	\end{proof}

\section{Proof of Theorem  \ref{theorem2}}	\label{app_c}
	Theorem \ref{theorem2} is a direct result of Theorem \ref{theorem2a}, Theorem \ref{theorem2b} and Theorem \ref{theorem2c}. First we give a refined version of Lemma $\ref{lemma2}$.
	\begin{lemma}	\label{lemma9}
	Suppose Assumption \ref{assum2} holds. Conditioning on $x_1, ..., x_{t-1}$
	
	$\bullet$ For the first-order case, assume that $\alpha_s\in[0, 1], p_s\in[0, 1]$ and $\gamma_s>0$ satisfy
	\[1+\tau\gamma_s-L\alpha_s\gamma_s >0, \; 1-\alpha_s-p_s\geq0, \qquad p_s-\frac{L\alpha_s\gamma_s}{1+\tau\gamma_s-L\alpha_s\gamma_s}>0\]
	Then we have
	\[\begin{aligned}
	&\frac{\gamma_s}{\alpha_s}\mathbb{E}\left[f(\bar{x}_t)-f(x)\right]+\frac{1+\tau\gamma_s}{2}\mathbb{E}\left[\|x-x_t\|^2\right]\\
	\leq&\frac{\gamma_s(1-\alpha_s-p_s)}{\alpha_s}\left[f(\bar{x}_{t-1})-f(x)\right]+\frac{\gamma_sp_s}{\alpha_s}\left[f(\tilde{x})-f(x)\right]+\frac{1}{2}\|x-x_{t-1}\|^2+\eta_{s,t}
	\end{aligned}\]
	
	$\bullet$ For the zeroth-order case, assume that $\alpha_s\in[0, 1], p_s\in[0, 1]$ and $\gamma_s>0$ satisfy
	\[1+\tau\gamma_s-L\alpha_s\gamma_s >0, \; 1-\alpha_s-p_s\geq0, \qquad p_s-\frac{4\alpha_s\gamma_sdL}{1+\tau\gamma_s-L\alpha_s\gamma_s}>0\]
	Then we have
	\[\begin{aligned}
	&\frac{\gamma_s}{\alpha_s}\mathbb{E}\left[f(\bar{x}_t)-f(x)\right]+\frac{1+(1-c)\tau\gamma_s}{2}\mathbb{E}\left[\|x-x_t\|^2\right]\\
	\leq&\frac{\gamma_s(1-\alpha_s-p_s)}{\alpha_s}\left[f(\bar{x}_{t-1})-f(x)\right]+\frac{\gamma_sp_s}{\alpha_s}\left[f(\tilde{x})-f(x)\right]+\frac{1}{2}\|x-x_{t-1}\|^2+\eta_{s,t} +\frac{\gamma_s\mu^2L^2d}{2c\tau}+\frac{6\gamma_s^2\mu^2L^2d}{1+\tau\gamma_s-L\alpha_s\gamma_s}\\
	\end{aligned}\]
	\end{lemma}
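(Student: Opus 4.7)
The plan is to adapt the proof of Lemma \ref{lemma2} essentially verbatim, with one structural refinement in the zeroth-order case that leverages strong convexity ($\tau>0$) to partially absorb the bias incurred by the coordinate-wise gradient estimator. For the first-order case the statement is identical to the first-order portion of Lemma \ref{lemma2} (strong convexity of $f$ enters only through the $\tau$-strong convexity inequality already invoked there, so no change is required), and the same derivation gives the stated bound immediately.

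For the zeroth-order case, I would follow the proof of Lemma \ref{lemma2} up through the inequality
\[
\mathbb{E}\bigl[f(\bar{x}_t)-f(x)\bigr] + \frac{\alpha_s(1+\tau\gamma_s)}{2\gamma_s}\mathbb{E}\bigl[\|x-x_t\|^2\bigr]
\]
being bounded by the right-hand side that already contains the bias term $-\alpha_s\langle \hat{\nabla}_{coord}f(\underline{x}_t)-\nabla f(\underline{x}_t),\, x_t-x\rangle$. Instead of keeping this inner product unbounded, I would now apply Cauchy--Schwarz and Young's inequality with a parameter $c\in(0,1)$,
\[
\bigl|\alpha_s\langle b_t,\, x_t-x\rangle\bigr| \leq \frac{\alpha_s}{2c\tau}\|b_t\|^2 + \frac{c\tau\alpha_s}{2}\|x_t-x\|^2,
\]
where $b_t := \hat{\nabla}_{coord}f(\underline{x}_t)-\nabla f(\underline{x}_t)$. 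The quadratic term is absorbed into the left-hand side, converting the coefficient of $\|x-x_t\|^2$ from $1+\tau\gamma_s$ to $1+(1-c)\tau\gamma_s$ after multiplying through by $\gamma_s/\alpha_s$. The squared-norm term is then controlled using the standard coordinate-wise estimator bias bound $\|b_t\|^2 \leq \mu^2 L^2 d$ (stated as Lemma \ref{lemma7} in the appendix), which yields exactly the additive term $\gamma_s\mu^2 L^2 d/(2c\tau)$ appearing in the lemma.

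The remaining term $6\gamma_s^2\mu^2 L^2 d/(1+\tau\gamma_s - L\alpha_s\gamma_s)$ is inherited unchanged from the variance bound on $\|\delta_t - \mathbb{E}[\delta_t]\|^2$ that is used in the zeroth-order branch of Lemma \ref{lemma2}, and the strengthened assumption $p_s > 4\alpha_s\gamma_s dL/(1+\tau\gamma_s-L\alpha_s\gamma_s)$ is what is needed to ensure that the coefficient of $l_f(\underline{x}_t,\tilde{x})$ after applying Lemma \ref{lemma4} is nonpositive, so that this term can be dropped in favor of $p_s f(\tilde{x})$ (the factor of $d$ naturally enters the coordinate-wise variance bound when it is specialized to Assumption \ref{a2}).

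The main obstacle is purely a bookkeeping one: tracking how the Young parameter $c$ propagates through the chain from the per-iteration recursion to the final telescoped bound in Theorem \ref{theorem2}, and checking that the choice of $c$ made downstream (for instance, $c=1/2$ in the first-order-like branch and a smaller $c$ adapted to the extra $d$ factor in the zeroth-order branch) is consistent with the parameter restrictions on $\alpha_s,\gamma_s,p_s$ stated above. No new analytical technique beyond strong convexity plus Young's inequality is needed; the delicate point is ensuring that the $c\tau\gamma_s$ absorbed into the $\|x-x_t\|^2$ coefficient is still large enough to let the telescoping in Theorem \ref{theorem2} close with the same geometric decay rate.
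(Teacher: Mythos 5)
Your proposal is correct and follows essentially the same route as the paper: the first-order case is taken over verbatim from Lemma \ref{lemma2}, and the zeroth-order case starts from the zeroth-order bound of Lemma \ref{lemma2}, applies Young's inequality with parameter $c$ to the bias inner product $-\gamma_s\langle \hat{\nabla}_{coord}f(\underline{x}_t)-\nabla f(\underline{x}_t), x_t-x\rangle$, absorbs the resulting $\frac{c\tau\gamma_s}{2}\|x_t-x\|^2$ into the strong-convexity coefficient to obtain $1+(1-c)\tau\gamma_s$, and bounds the squared bias by $\mu^2L^2d$ via Lemma \ref{lemma7}. The downstream bookkeeping of $c$ you mention belongs to Theorem \ref{theorem2}, not to this lemma, and does not affect the correctness of the argument here.
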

	\begin{proof}
		For the first-order case, the result is the same as that in Lemma \ref{lemma2}. Now we give proof to the result of the zeroth-order case. From Lemma \ref{lemma2} we have
		\begin{equation}	\label{l9eq1}
		\begin{aligned}
		&\frac{\gamma_s}{\alpha_s}\mathbb{E}\left[f(\bar{x}_t)-f(x)\right]+\frac{1+\tau\gamma_s}{2}\mathbb{E}\left[\|x-x_t\|^2\right]\\
		\leq&\frac{\gamma_s(1-\alpha_s-p_s)}{\alpha_s}\left[f(\bar{x}_{t-1})-f(x)\right]+\frac{\gamma_sp_s}{\alpha_s}\left[f(\tilde{x})-f(x)\right]+\frac{1}{2}\|x-x_{t-1}\|^2+\eta_{s,t}\\
		&-\gamma_s\langle \hat{\nabla}_{coord}f(\underline{x}_t)-\nabla f(\underline{x}_t), x_t-x\rangle+\frac{6\gamma_s^2\mu^2L^2d}{1+\tau\gamma_s-L\alpha_s\gamma_s}\\
		\end{aligned}
		\end{equation}
		From $b\langle u, v\rangle-\frac{a}{2}\|v\|^2\leq\frac{b^2}{2a}\|u\|^2$ we have for $c>0$
		\begin{equation}	\label{l9eq2}
		-\gamma_s\langle \hat{\nabla}_{coord}f(\underline{x}_t)-\nabla f(\underline{x}_t), x_t-x\rangle-\frac{c\tau\gamma_s}{2}\|x_t-x\|^2\leq\frac{\gamma_s}{2c\tau}\|\hat{\nabla}_{coord}f(\underline{x}_t)-\nabla f(\underline{x}_t)\|^2
		\end{equation}
		Plugging (\ref{l9eq2}) into (\ref{l9eq1}), we get
		\begin{equation}
		\begin{aligned}
		&\frac{\gamma_s}{\alpha_s}\mathbb{E}\left[f(\bar{x}_t)-f(x)\right]+\frac{1+(1-c)\tau\gamma_s}{2}\mathbb{E}\left[\|x-x_t\|^2\right]\\
		\leq&\frac{\gamma_s(1-\alpha_s-p_s)}{\alpha_s}\left[f(\bar{x}_{t-1})-f(x)\right]+\frac{\gamma_sp_s}{\alpha_s}\left[f(\tilde{x})-f(x)\right]+\frac{1}{2}\|x-x_{t-1}\|^2+\eta_{s,t}\\
		&+\frac{\gamma_s}{2c\tau}\mathbb{E}\left[\|\hat{\nabla}_{coord}f(\underline{x}_t)-\nabla f(\underline{x}_t)\|^2\right]+\frac{6\gamma_s^2\mu^2L^2d}{1+\tau\gamma_s-L\alpha_s\gamma_s}\\
		\overset{\textrm{\ding{172}}}{\leq}&\frac{\gamma_s(1-\alpha_s-p_s)}{\alpha_s}\left[f(\bar{x}_{t-1})-f(x)\right]+\frac{\gamma_sp_s}{\alpha_s}\left[f(\tilde{x})-f(x)\right]+\frac{1}{2}\|x-x_{t-1}\|^2+\eta_{s,t}\\
		&+\frac{\gamma_s\mu^2L^2d}{2c\tau}+\frac{6\gamma_s^2\mu^2L^2d}{1+\tau\gamma_s-L\alpha_s\gamma_s}\\
		\end{aligned}
		\end{equation}
		where \ding{172} comes from Lemma \ref{lemma7}. Then we complete the proof.
	\end{proof}

	\begin{theorem}	\label{theorem2a}
		Suppose Assumption \ref{assum2} holds. Denote $s_0= \lfloor\log n\rfloor+1$. Suppose $s\leq s_0$, set $\{T_s\}, \{\alpha_s\}, \{p_s\}, \{\eta_{s,t}\}, \{\theta_t\}$ as \[T_s = 2^{s-1}, \; \alpha_s=\frac{1}{2}, \; p_s = \frac{1}{2}, \; \eta_{s,t}=\frac{D_0}{sT_sL}, \; \theta_t=\begin{cases}
		\Gamma_{t-1}-(1-\alpha_s-p_s)\Gamma_{t}, &t\leq T_s-1\\
		\Gamma_{t-1}, &t=T_{s}\\
		\end{cases}\] where $D_0, \Gamma_{t}$ will be specified below for two cases respectively.
		
		$\bullet$ For the first-order case, set $\gamma_s=\frac{1}{3L\alpha_s}, \Gamma_{t}=\left(1+\tau\gamma_s\right)^{t}, D_0=4(f(\tilde{x}^0)-f(x^*))+3L\|x^0-x^*\|^2$, we have \[\begin{aligned}
		\mathbb{E}\left[f(\tilde{x}^{S})-f(x^*)\right]\leq \frac{3D_0(\log S + 2)}{2^{S+1}}
		\end{aligned}\]
		
		$\bullet$ For the zeroth-order case, set $\gamma_s=\frac{1}{5L\alpha_s}, \Gamma_{t}=\left(1+\frac{\tau\gamma_s}{2}\right)^{t}, D_0=4(f(\tilde{x}^0)-f(x^*))+5L\|x^0-x^*\|^2$, we have \[\begin{aligned}
		&\mathbb{E}\left[f(\tilde{x}^S)-f(x^*)\right] \leq& \frac{5D_0(\log S + 2)}{2^{S+1}}+\frac{\mu^2L^2d}{2\tau}+3\mu^2Ld\\
		\end{aligned}\]
	\end{theorem}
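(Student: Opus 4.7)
The plan is to follow the same template as the proof of Theorem~\ref{theorem1}, but replace the uniform weighting by a $\Gamma$-weighted sum that exploits the extra $(1+\tau\gamma_s)$ factor in front of $\|x-x_t\|^2$ supplied by Lemma~\ref{lemma9}. First I would apply Lemma~\ref{lemma9} at each inner iteration $t$ with $x=x^*$, multiply the inequality by $\Gamma_{t-1}$, and sum over $t=1,\dots,T_s$. The weighting is designed exactly so that $\Gamma_{t-1}\cdot\tfrac{1+\tau\gamma_s}{2}=\tfrac{\Gamma_t}{2}$ (for the first-order case, with $\Gamma_t=(1+\tau\gamma_s)^t$), which makes the distance terms telescope, leaving $\tfrac12\|x^*-x_0\|^2$ on the right and $\tfrac{\Gamma_{T_s}}{2}\|x^*-x_{T_s}\|^2$ on the left. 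For the zeroth-order case I would invoke the second form of Lemma~\ref{lemma9} with $c=\tfrac12$, which yields the effective rate $\Gamma_t=(1+\tau\gamma_s/2)^t$ and produces an additional $\tfrac{\gamma_s\mu^2 L^2 d}{\tau}$ term per inner iteration; the $-\gamma_s\langle\hat\nabla_{\mathrm{coord}}f(\underline{x}_t)-\nabla f(\underline{x}_t),x_t-x\rangle$ term is absorbed by the strong-convexity gain instead of by Cauchy--Schwarz (hence no $D\mu L$ term, unlike Theorem~\ref{theorem1}).

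Because $\alpha_s+p_s=1$ here, the coefficient $(1-\alpha_s-p_s)$ vanishes, so the weights $\theta_t=\Gamma_{t-1}-(1-\alpha_s-p_s)\Gamma_t$ collapse to $\theta_t=\Gamma_{t-1}$ for all $t$ (including $t=T_s$), and the $f(\bar{x}_{t-1})$ bookkeeping on the right-hand side drops out entirely. The summed inequality becomes
\[
\Bigl(\sum_{t=1}^{T_s}\theta_t\Bigr)\mathbb{E}[f(\tilde{x}^s)-f(x^*)] \le \frac{\gamma_s p_s}{\alpha_s}\Bigl(\sum_{t=1}^{T_s}\Gamma_{t-1}\Bigr)[f(\tilde{x}^{s-1})-f(x^*)] + \tfrac12\|x^*-x^{s-1}\|^2 + \sum_t\Gamma_{t-1}\eta_{s,t} + E_s^\mu,
\]
after using Jensen's inequality with $\tilde{x}^s=\sum_t\theta_t\bar{x}_t/\sum_t\theta_t$, where $E_s^\mu$ collects the zeroth-order errors (and equals $0$ in the first-order case). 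This is the per-epoch recursion; I would then name its left and right coefficients $\mathcal{L}_s$ and $\mathcal{R}_s$.

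Next I would telescope across $s=1,\dots,S$, exactly as in the proof of Theorem~\ref{theorem1}. The key algebraic fact to verify is $\mathcal{L}_s\ge\mathcal{R}_{s+1}$ for $s\le s_0-1$; this uses $\alpha_s=p_s=\tfrac12$, $\gamma_s$ constant, $T_{s+1}=2T_s$, and the closed form of the geometric sum of $\Gamma_{t-1}$. After telescoping, the left-hand side collapses to $\mathcal{L}_S\mathbb{E}[f(\tilde{x}^S)-f(x^*)]$, and the right-hand side is bounded by $\mathcal{R}_1[f(\tilde{x}^0)-f(x^*)]+\tfrac12\|x^*-x^0\|^2+\sum_{s,t}\Gamma_{t-1}\eta_{s,t}+\sum_s E_s^\mu$. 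The first two terms are absorbed into $D_0/(\text{const}\cdot L)$ by the definition of $D_0$. For the $\eta$ sum, with $\eta_{s,t}=D_0/(sT_sL)$ and $\sum_t\Gamma_{t-1}\le T_s\Gamma_{T_s}$ bounded by a constant when $\tau\gamma_s T_s=O(1)$, I obtain $\sum_{s=1}^S D_0/(sL)\le D_0(\log S+1)/L$. Dividing by $\mathcal{L}_S$, which behaves as $\Theta(T_S/L)=\Theta(2^{S-1}/L)$, yields the target $\Theta(D_0(\log S+2)/2^{S+1})$ rate.

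The main obstacle is the zeroth-order accounting: I must choose $c$ in Lemma~\ref{lemma9} so that both (i) $\Gamma_t$ still telescopes against the $\tfrac12\|\cdot\|^2$ term (requiring $(1-c)\tau\gamma_s$ to drive $\Gamma$), and (ii) the per-iteration error $\tfrac{\gamma_s\mu^2L^2d}{2c\tau}+\tfrac{6\gamma_s^2\mu^2L^2d}{1-L\alpha_s\gamma_s}$, after weighting by $\Gamma_{t-1}$, summing over $t$ and $s$, and dividing by $\mathcal{L}_S$, simplifies to exactly the advertised $\tfrac{\mu^2L^2d}{2\tau}+3\mu^2Ld$. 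The choice $c=\tfrac12$ together with $\gamma_s=1/(5L\alpha_s)=2/(5L)$ makes the denominator $1-L\alpha_s\gamma_s=1-\tfrac15=\tfrac45$, so $\tfrac{6\gamma_s^2}{4/5}=3\gamma_s/\alpha_s$ per iteration, and after weighting and normalization this pairs cleanly with $3\mu^2Ld$; similarly the $\tfrac{\gamma_s\mu^2L^2d}{\tau}$ aggregates to $\tfrac{\mu^2L^2d}{2\tau}$ once divided by $\mathcal{L}_S$ using the geometric sum $\sum_{t}\Gamma_{t-1}=(\Gamma_{T_s}-1)/(\tau\gamma_s/2)$. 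This constant-tracking is tedious but mechanical once the weighting structure is set up.
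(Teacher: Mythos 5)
Your within-epoch $\Gamma$-weighting is, on its face, a reasonable alternative to the paper's route (and it is actually more consistent with the stated weights $\theta_t=\Gamma_{t-1}$): the paper's own proof of this theorem simply throws away the strong-convexity gain for $s\leq s_0$, lower-bounding $\frac{1+\tau\gamma_s}{2}\|x-x_t\|^2\geq\frac{1}{2}\|x-x_t\|^2$ and summing Lemma \ref{lemma9} with \emph{equal} weights, so that the epoch coefficients satisfy $\frac{\gamma_sp_s}{\alpha_s}T_s=\frac{\gamma_{s-1}}{\alpha_{s-1}}T_{s-1}$ exactly and the Theorem \ref{theorem1}-style summation over epochs telescopes. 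The trouble with your version is precisely the step you defer to verification: with $\theta_t=\Gamma_{t-1}$ and $A_s:=\sum_{t=1}^{T_s}\Gamma_{t-1}$ you get $\mathcal{L}_s=\frac{\gamma_s}{\alpha_s}A_s=2\gamma_sA_s$ and $\mathcal{R}_{s+1}=\frac{\gamma_sp_s}{\alpha_s}A_{s+1}=\gamma_sA_{s+1}$, and since $\gamma_s$ is constant and $T_{s+1}=2T_s$, the geometric sum satisfies $A_{s+1}=A_s\bigl(1+\Gamma_{T_s}\bigr)>2A_s$ whenever $\tau>0$. Hence $\mathcal{L}_s-\mathcal{R}_{s+1}=\gamma_sA_s\bigl(1-\Gamma_{T_s}\bigr)<0$: the ``key algebraic fact'' your plan rests on is false, so summing the epoch inequalities and dropping the cross terms, as in Theorem \ref{theorem1}, breaks down. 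The surplus $\frac{\Gamma_{T_s}-1}{2}\|x^s-x^*\|^2$ left over from the distance telescoping cannot absorb this deficit, because there is no reverse comparison between $\|x^s-x^*\|^2$ and $f(\tilde{x}^s)-f(x^*)$. The argument can be repaired, but only by switching to the multiplicative per-epoch chaining of Theorems \ref{theorem2b} and \ref{theorem2c} (divide each epoch inequality by its leading coefficient and iterate the resulting contraction), or by abandoning the $\Gamma$-weights in this regime as the paper does.

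Two further points would also need fixing. Your bound on the $\eta$-accumulation invokes ``$\sum_t\Gamma_{t-1}\leq T_s\Gamma_{T_s}$ bounded by a constant when $\tau\gamma_sT_s=O(1)$,'' but Theorem \ref{theorem2a} assumes no relation between $n$ and $L/\tau$; since $T_{s_0}\geq n/2$, the factor $\Gamma_{T_{s_0}}$ can be exponentially large in $n\tau/L$, so this cannot be assumed (with the per-epoch normalization it is also unnecessary, since the $A_s$ factors cancel). And in the zeroth-order case the paper takes $c=1$ in Lemma \ref{lemma9} for $s\leq s_0$, which is exactly what produces the advertised $\frac{\mu^2L^2d}{2\tau}$; your choice $c=\frac12$ doubles the per-iteration bias term to $\frac{\gamma_s\mu^2L^2d}{\tau}$, and after the same accounting it yields $\frac{\mu^2L^2d}{\tau}$ rather than $\frac{\mu^2L^2d}{2\tau}$, so the claim that your constants ``simplify to exactly'' the stated ones does not hold as written.
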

	\begin{proof}
		We give proof to the first-order case and zeroth-order case respectively.
		
		\textbf{First-order Case}: We have $\alpha_s=p_s=\frac{1}{2}, \gamma_s=\frac{2}{3L}, T_s=2^{s-1}$. Summing up Lemma \ref{lemma9} for $t=1, ..., T_s$, we get
		\begin{equation}
		\begin{aligned}
		&\sum_{t=1}^{T_s}\frac{\gamma_s}{\alpha_s}\mathbb{E}\left[f(\bar{x}_t)-f(x)\right]+\frac{1}{2}\mathbb{E}\left[\|x_{T_s}-x\|^2\right]+\sum_{t=1}^{T_s}\frac{\tau\gamma_s}{2}\|x_t-x\|^2\\
		\leq&T_s\frac{\gamma_sp_s}{\alpha_s}\left[f(\tilde{x})-f(x)\right]+\frac{1}{2}\|x_0-x\|^2+\sum_{t=1}^{T_s}\eta_{s,t}
		\end{aligned}
		\end{equation}
		From the definition of $\tilde{x}^s, \tilde{x}^{s-1}, x^s, x^{s-1}$, the fact that $\frac{\gamma_s}{\alpha_s}=\frac{4}{3L}$, the convexity of $f$ and Jensen's inequality, we have
		\begin{equation}	\label{t2aeq3}
		\begin{aligned}
		\frac{4T_s}{3L}\mathbb{E}\left[f(\tilde{x}^{s})-f(x)\right]+\frac{1}{2}\mathbb{E}\left[\|x^s-x\|^2\right]\leq& \frac{4T_s}{6L}\left[f(\tilde{x}^{s-1})-f(x)\right]+\frac{1}{2}\|x^{s-1}-x\|^2+\sum_{t=1}^{T_s}\eta_{s,t}\\
		=& \frac{4T_{s-1}}{3L}\left[f(\tilde{x}^{s-1})-f(x)\right]+\frac{1}{2}\|x^{s-1}-x\|^2+\sum_{t=1}^{T_s}\eta_{s,t}\\
		\end{aligned}
		\end{equation}
		Summing up (\ref{t2aeq3}) for $s= 1, ..., S$, we get
		\begin{equation}	\label{t2aeq6}
		\begin{aligned}
		\frac{4T_S}{3L}\mathbb{E}\left[f(\tilde{x}^{S})-f(x)\right]+\frac{1}{2}\mathbb{E}\left[\|x^S-x\|^2\right]\leq \frac{2}{3L}\left[f(\tilde{x}^{0})-f(x)\right]+\frac{1}{2}\|x^{0}-x\|^2+\sum_{s=1}^S\sum_{t=1}^{T_s}\eta_{s,t}\\
		\end{aligned}
		\end{equation}
		Set $x=x^*$, we get
		\begin{equation}	\label{t2aeq4}
		\begin{aligned}
		\mathbb{E}\left[f(\tilde{x}^{S})-f(x^*)\right]+\frac{3L}{8T_S}\mathbb{E}\left[\|x^S-x^*\|^2\right]\leq \frac{f(\tilde{x}^{0})-f(x^*)}{2T_S}+\frac{3L}{8T_S}\|x^{0}-x^*\|^2+\frac{3L}{4T_S}\sum_{s=1}^S\sum_{t=1}^{T_s}\eta_{s,t}\\
		\end{aligned}
		\end{equation}
		With the choice of $\eta_{s,t}$, we have
		\begin{equation}
			\sum_{s=1}^{S}\sum_{t=1}^{T_s}\eta_{s,t} \leq \sum_{s=1}^{S}\frac{D_0}{sL}\leq\frac{D_0(\log S + 1)}{L}
		\end{equation}
		Plugging this inequality into (\ref{t2aeq4}), with the definition of $D_0$ and $T_s=2^{s-1}$, we get
		\begin{equation}	\label{t2aeq9}
		\begin{aligned}
		\mathbb{E}\left[f(\tilde{x}^{S})-f(x^*)\right]\leq\mathbb{E}\left[f(\tilde{x}^{S})-f(x^*)\right]+\frac{3L}{8T_s}\mathbb{E}\left[\|x^S-x^*\|^2\right]\leq \frac{3D_0(\log S + 2)}{2^{S+1}}
		\end{aligned}
		\end{equation}
		Then we get the desired result for the first-order case.
		
		\textbf{Zeroth-order Case}: We have $\alpha_s=p_s=\frac{1}{2}, \gamma_s=\frac{2}{5L}, T_s=2^{s-1}$. Summing up Lemma \ref{lemma9} with $c=1$ for $t=1, ..., T_s$, we get
		\begin{equation}
		\begin{aligned}
		&\sum_{t=1}^{T_s}\frac{\gamma_s}{\alpha_s}\mathbb{E}\left[f(\bar{x}_t)-f(x)\right]+\frac{1}{2}\|x_{T_s}-x\|^2\\
		\leq& T_s\frac{\gamma_sp_s}{\alpha_s}\left[f(\tilde{x})-f(x)\right]+\frac{1}{2}\|x_0-x\|^2+\sum_{t=1}^{T_s}\eta_{s,t}+T_s\frac{\gamma_s\mu^2L^2d}{2\tau}+T_s\frac{6\mu^2d}{5}
		\end{aligned}
		\end{equation}
		From the definition of $\tilde{x}^s, \tilde{x}^{s-1}, x^s, x^{s-1}$, the fact that $\frac{\gamma_s}{\alpha_s}=\frac{4}{5L}$ , the convexity of $f$ and Jensen's inequality, we have
		\begin{equation}	\label{t2aeq1}
		\begin{aligned}
		&\frac{4T_s}{5L}\mathbb{E}\left[f(\tilde{x}^s)-f(x)\right]+\frac{1}{2}\mathbb{E}\left[\|x^s-x\|^2\right]\leq\sum_{t=1}^{T_s}\frac{4}{5L}\mathbb{E}\left[f(\bar{x}_t)-f(x)\right]+\frac{1}{2}\|x_{T_s}-x\|^2\\
		\leq& \frac{2T_s}{5L}\left[f(\tilde{x}^{s-1})-f(x)\right]+\frac{1}{2}\|x^{s-1}-x\|^2+\sum_{t=1}^{T_s}\eta_{s,t}+T_s\frac{\mu^2Ld}{5\tau}+T_s\frac{6\mu^2d}{5}\\
		=& \frac{4T_{s-1}}{5L}\left[f(\tilde{x}^{s-1})-f(x)\right]+\frac{1}{2}\|x^{s-1}-x\|^2+\sum_{t=1}^{T_s}\eta_{s,t}+T_s\frac{\mu^2Ld}{5\tau}+T_s\frac{6\mu^2d}{5}\\
		\end{aligned}
		\end{equation}
		Summing up (\ref{t2aeq1}) for $s=1, ..., S$, we get
		\begin{equation}
		\begin{aligned}
		&\frac{4T_S}{5L}\mathbb{E}\left[f(\tilde{x}^S)-f(x)\right]+\frac{1}{2}\mathbb{E}\left[\|x^S-x\|^2\right]\\
		\leq& \frac{4T_{0}}{5L}\left[f(\tilde{x}^{0})-f(x)\right]+\frac{1}{2}\|x^{0}-x\|^2+\sum_{s=1}^S\sum_{t=1}^{T_s}\eta_{s,t}+\sum_{s=1}^ST_s\frac{\mu^2Ld}{5\tau}+\sum_{s=1}^ST_s\frac{6\mu^2d}{5}\\
		\end{aligned}
		\end{equation}
		Note that $T_s=2^{s-1}$. Setting $x=x^*$, we have
		\begin{equation}	\label{t2aeq2}
		\begin{aligned}
		&\mathbb{E}\left[f(\tilde{x}^S)-f(x^*)\right]+\frac{5L}{8T_S}\mathbb{E}\left[\|x^S-x^*\|^2\right]\\
		\leq&\frac{1}{2^S}\left[f(\tilde{x}^0)-f(x^*)\right]+\frac{5L}{2^{S+2}}\|x^0-x^*\|^2+\frac{5L}{2^{S+1}}\sum_{s=1}^S\sum_{t=1}^{T_s}\eta_{s,t}+\frac{\mu^2L^2d}{2\tau}+3\mu^2Ld\\
		\overset{\textrm{\ding{172}}}{\leq}& \frac{D_0}{2^{S+2}}+\frac{5L}{2^{S+1}}\sum_{s=1}^S\sum_{t=1}^{T_s}\eta_{s,t}+\frac{\mu^2L^2d}{2\tau}+3\mu^2Ld\\
		\end{aligned}
		\end{equation}
		where \ding{172} comes from the definition of $D_0$. With the choice of $\eta_{s,t}$, we have
		\begin{equation}
			\sum_{s=1}^{S}\sum_{t=1}^{T_s}\eta_{s,t} \leq \sum_{s=1}^{S}\frac{D_0}{sL}\leq\frac{D_0(\log S + 1)}{L}
		\end{equation}
		Plugging this inequality into (\ref{t2aeq2}) we get
		\begin{equation}	\label{t2aeq8}
		\begin{aligned}
		\mathbb{E}\left[f(\tilde{x}^S)-f(x^*)\right] \leq&\mathbb{E}\left[f(\tilde{x}^S)-f(x^*)\right]+\frac{5L}{8T_S}\mathbb{E}\left[\|x^S-x^*\|^2\right]\\
		\leq& \frac{5D_0(\log S + 2)}{2^{S+1}}+\frac{\mu^2L^2d}{2\tau}+3\mu^2Ld\\
		\end{aligned}
		\end{equation}
		Then we get the desired result for the zeroth-order case. Then we complete the proof.
	\end{proof}

	\begin{theorem}	\label{theorem2b}
		Suppose Assumption \ref{assum2} holds. Denote $s_0=\lfloor\log n\rfloor+1$. Suppose $s> s_0$, set $\{T_s\}, \{\alpha_s\}, \{p_s\}, \{\eta_{s,t}\}, \{\theta_t\}$ as \[T_s = T_{s_0} = 2^{s_0-1}, \; \alpha_s=\frac{1}{2}, \; p_s = \frac{1}{2}, \; \eta_{s,t}=\frac{\left(\frac{4}{5}\right)^{s-s_0-1}D_0}{snL}, \; \theta_t=\begin{cases}
		\Gamma_{t-1}-(1-\alpha_s-p_s)\Gamma_{t}, &t\leq T_s-1\\
		\Gamma_{t-1}, &t=T_{s}\\
		\end{cases}\]  where $D_0, \Gamma_{t}$ will be specified below for two cases respectively.
		
		$\bullet$ For the first-order case, set $\gamma_s=\frac{1}{3L\alpha_s}, \Gamma_{t}=\left(1+\tau\gamma_s\right)^{t}, D_0=4(f(\tilde{x}^0)-f(x^*))+3L\|x^0-x^*\|^2$ for $s>s_0$. Suppose $n\geq\frac{3L}{4\tau}$,  we have \[\begin{aligned}
		\mathbb{E}\left[f(\tilde{x}^{S})-f(x^*)\right]\leq \left(\frac{4}{5}\right)^{S-s_0}\frac{5D_0(\log S + 2)}{n}
		\end{aligned}\]
		
		$\bullet$ For the zeroth-order case, set $\gamma_s=\frac{1}{5L\alpha_s}, \Gamma_{t}=\left(1+\frac{\tau\gamma_s}{2}\right)^{t}, D_0=4(f(\tilde{x}^0)-f(x^*))+5L\|x^0-x^*\|^2$ for $s>s_0$. Suppose $n\geq\frac{5L}{4\tau}$, we have \[\begin{aligned}
		&\mathbb{E}\left[f(\tilde{x}^S)-f(x^*)\right] \leq& \left(\frac{4}{5}\right)^{S-s_0}\frac{8D_0(\log S + 2)}{n}+\frac{5\mu^2L^2d}{\tau}+18\mu^2Ld
		\end{aligned}\]
	\end{theorem}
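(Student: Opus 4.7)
The plan is to apply Lemma \ref{lemma9} within each epoch $s > s_0$ to derive a per-epoch contraction, then iterate this contraction from $s_0$ to $S$, using Theorem \ref{theorem2a} to control the starting iterate $\tilde{x}^{s_0}$. Throughout this regime $\alpha_s + p_s = 1$, so the term $(1-\alpha_s-p_s)[f(\bar{x}_{t-1})-f(x)]$ in Lemma \ref{lemma9} vanishes and the weights reduce to $\theta_t = \Gamma_{t-1}$ for every $t$. First I would multiply the per-step inequality of Lemma \ref{lemma9} by $\Gamma_{t-1}$ and sum over $t=1,\dots,T_s$. The defining identity $\Gamma_{t-1}(1+\tau\gamma_s)=\Gamma_t$ (first-order case), respectively $\Gamma_{t-1}(1+\tau\gamma_s/2)=\Gamma_t$ (zeroth-order case, with $c=1/2$ chosen in Lemma \ref{lemma9}), makes the distance terms telescope so that only $\frac{\Gamma_{T_s}}{2}\mathbb{E}[\|x-x_{T_s}\|^2]$ on the LHS and $\frac{1}{2}\|x-x_0\|^2$ on the RHS survive.

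Invoking convexity of $f$ together with Jensen's inequality via $\tilde{x}^s=\sum_t\theta_t\bar{x}_t/\sum_t\theta_t$, and the identifications $x^s=x_{T_s}$, $x^{s-1}=x_0$, $\tilde{x}^{s-1}=\tilde{x}$, yields a clean one-epoch recursion of the form
\[
\frac{\gamma_s}{\alpha_s}\mathbb{E}[f(\tilde{x}^s)-f(x)] + \frac{\Gamma_{T_s}}{2\Sigma_s}\mathbb{E}[\|x-x^s\|^2] \leq p_s\frac{\gamma_s}{\alpha_s}\mathbb{E}[f(\tilde{x}^{s-1})-f(x)] + \frac{1}{2\Sigma_s}\|x-x^{s-1}\|^2 + E_s,
\]
where $\Sigma_s:=\sum_{t=1}^{T_s}\Gamma_{t-1}$ and $E_s$ aggregates $\frac{1}{\Sigma_s}\sum_t\Gamma_{t-1}\eta_{s,t}$ together with, in the zeroth-order case, the $\mu$-dependent bias terms $\frac{\gamma_s\mu^2L^2d}{2c\tau}$ and $\frac{6\gamma_s^2\mu^2L^2d}{1+\tau\gamma_s-L\alpha_s\gamma_s}$ inherited from Lemma \ref{lemma9}. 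Setting $x=x^*$ and defining the Lyapunov quantity $\Phi_s:=\frac{\gamma_s}{\alpha_s}[f(\tilde{x}^s)-f(x^*)] + \frac{\Gamma_{T_s}}{2\Sigma_s}\|x^*-x^s\|^2$, this can be rewritten as $\mathbb{E}[\Phi_s]\leq \max\{p_s,\,1/\Gamma_{T_s}\}\,\mathbb{E}[\Phi_{s-1}] + E_s$.

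The next step is to verify that the contraction factor $\max\{p_s,\,1/\Gamma_{T_s}\}$ is at most $4/5$ under the assumption $n\geq\varsigma$. Since $p_s=1/2<4/5$, the binding constraint is $\Gamma_{T_s}\geq 5/4$; in the first-order case, Bernoulli's inequality combined with $T_s=T_{s_0}\geq n/2$, $\gamma_s=2/(3L)$, and $n\geq\varsigma=3L/(4\tau)$ gives $T_s\tau\gamma_s\geq 1/4$ and hence $\Gamma_{T_s}\geq 5/4$, and the zeroth-order estimate is analogous under $n\geq 5L/(4\tau)$. Iterating the one-epoch recursion from $s_0$ to $S$ yields $\mathbb{E}[\Phi_S]\leq (4/5)^{S-s_0}\mathbb{E}[\Phi_{s_0}] + \sum_{s=s_0+1}^S(4/5)^{S-s}E_s$, where $\mathbb{E}[\Phi_{s_0}]$ is controlled by Theorem \ref{theorem2a}, whose proof (via the displayed equations leading to the final bound) actually delivers a combined bound on both the function gap and the squared distance of size $\tilde{\mathcal{O}}(D_0/n)$.

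The main obstacle will be the book-keeping for the geometric error series. The specific design $\eta_{s,t}=(4/5)^{s-s_0-1}D_0/(snL)$ is tuned precisely so that $\sum_{s=s_0+1}^S(4/5)^{S-s}\frac{1}{\Sigma_s}\sum_t\Gamma_{t-1}\eta_{s,t}$ factors as $(4/5)^{S-s_0}\cdot\tilde{\mathcal{O}}(D_0/n)$ rather than accumulating, and matching the exponent precisely so that the resulting pre-factor is $8D_0(\log S+2)/n$ is where most of the calculation lives. For the zeroth-order case, one must additionally check that the $\mu$-dependent bias terms, although present at every epoch, do \emph{not} accumulate unboundedly: because the geometric series $\sum_s(4/5)^{S-s}$ converges to $5$, the total $\mu$-bias is dominated by an $S$-independent constant of the form $5\mu^2L^2d/\tau+18\mu^2Ld$, matching the theorem's claim.
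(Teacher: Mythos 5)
Your proposal is correct and follows essentially the same route as the paper's proof: it applies Lemma \ref{lemma9} with $\alpha_s+p_s=1$ so that $\theta_t=\Gamma_{t-1}$, multiplies by $\Gamma_{t-1}$ to telescope the distance terms, uses Jensen to obtain a per-epoch recursion, establishes the $4/5$ contraction from $\Gamma_{T_{s_0}}\geq 5/4$ under $n\geq\varsigma$, seeds the recursion with the combined function-gap-plus-distance bound from Theorem \ref{theorem2a}, and controls the tuned $\eta_{s,t}$ series and the zeroth-order $\mu$-bias via the convergent geometric sum. Your packaging as a Lyapunov quantity with contraction factor $\max\{p_s,1/\Gamma_{T_s}\}$ is only a cosmetic reorganization of the same argument, with the remaining work being exactly the constant bookkeeping the paper carries out.
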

	\begin{proof}
		We give proof to the first-order case and zeroth-order case respectively.
		
		\textbf{First-order Case}: We have $\alpha_s=p_s=\frac{1}{2}, \gamma_s = \frac{2}{3L}$. Note that for the first-order case, we have $f=f$. From Lemma \ref{lemma9} we have
		\begin{equation}	\label{t2beq7}
		\begin{aligned}
		\frac{\gamma_s}{\alpha_s}\mathbb{E}\left[f(\bar{x}_t)-f(x)\right]+\frac{1+\tau\gamma_s}{2}\mathbb{E}\left[\|x-x_t\|^2\right] \leq\frac{\gamma_s}{2\alpha_s}\left[f(\tilde{x})-f(x)\right]+\frac{1}{2}\|x-x_{t-1}\|^2+\eta_{s,t}
		\end{aligned}
		\end{equation}
		Multiplying both sides of (\ref{t2beq7}) with $\theta_t=\Gamma_{t-1}=\left(1+\tau\gamma_s\right)^{t-1}$, we get
		\begin{equation}	\label{t2beq8}
		\begin{aligned}
		\frac{\gamma_s}{\alpha_s}\theta_t\mathbb{E}\left[f(\bar{x}_t)-f(x)\right]+\frac{\Gamma_{t}}{2}\mathbb{E}\left[\|x-x_t\|^2\right]\leq\frac{\gamma_s}{2\alpha_s}\theta_t\left[f(\tilde{x})-f(x)\right]+\frac{\Gamma_{t-1}}{2}\|x-x_{t-1}\|^2+\theta_t\eta_{s,t}
		\end{aligned}
		\end{equation}
		Summing up (\ref{t2beq8}) for $t=1, ..., T_s$, we get
		\begin{equation}
		\begin{aligned}
		\frac{\gamma_s}{\alpha_s}\sum_{t=1}^{T_s}\theta_t\mathbb{E}\left[f(\bar{x}_t)-f(x)\right]+\frac{\Gamma_{T_s}}{2}\mathbb{E}\left[\|x_{T_s}-x\|^2\right]\leq\frac{\gamma_s}{2\alpha_s}\sum_{t=1}^{T_s}\theta_t\left[f(\tilde{x})-f(x)\right]+\frac{1}{2}\|x_0-x\|^2+\sum_{t=1}^{T_s}\theta_t\eta_{s,t}
		\end{aligned}
		\end{equation}
		Since we have
		\begin{equation}
		\begin{aligned}
		\Gamma_{T_s}=\left(1+\tau\gamma_s\right)^{T_s}=\left(1+\tau\gamma_s\right)^{T_{s_0}}\geq1+\tau\gamma_sT_{s_0}\geq1+\frac{\tau n}{3L}\overset{\textrm{\ding{172}}}{\geq}\frac{5}{4}
		\end{aligned}
		\end{equation}
		where \ding{172} comes from the assumption that $n\geq\frac{3L}{4\tau}$. Then we get
		\begin{equation}
		\begin{aligned}
		&\frac{5}{4}\left(\frac{\gamma_s}{2\alpha_s}\sum_{t=1}^{T_s}\theta_t\mathbb{E}\left[f(\bar{x}_t)-f(x)\right]+\frac{1}{2}\mathbb{E}\left[\|x_{T_s}-x\|^2\right]\right)\leq\frac{\gamma_s}{\alpha_s}\sum_{t=1}^{T_s}\theta_t\mathbb{E}\left[f(\bar{x}_t)-f(x)\right]+\frac{\Gamma_{T_s}}{2}\mathbb{E}\left[\|x_{T_s}-x\|^2\right]\\
		&\leq\frac{\gamma_s}{2\alpha_s}\sum_{t=1}^{T_s}\theta_t\left[f(\tilde{x})-f(x)\right]+\frac{1}{2}\|x_0-x\|^2+\sum_{t=1}^{T_s}\theta_t\eta_{s,t}
		\end{aligned}
		\end{equation}
		From the definition of $\tilde{x}^s, \tilde{x}^{s-1}, x^s, x^{s-1}$, the fact that $\frac{\gamma_s}{\alpha_s}=\frac{4}{3L}$, the convexity of $f$ and Jensen's inequality, we have
		\begin{equation}
		\begin{aligned}
		&\frac{5}{4}\left(\frac{2}{3L}\mathbb{E}\left[f(\tilde{x}^s)-f(x)\right]+\frac{1}{2\sum_{t=1}^{T_s}\theta_t}\mathbb{E}\left[\|x^s-x\|^2\right]\right)\\
		\leq&\frac{2}{3L}\left[f(\tilde{x}^{s-1})-f(x)\right]+\frac{1}{2\sum_{t=1}^{T_s}\theta_t}\|x^{s-1}-x\|^2+\frac{\sum_{t=1}^{T_s}\theta_t\eta_{s,t}}{\sum_{t=1}^{T_s}\theta_t}\\
		\end{aligned}
		\end{equation}
		Applying the inequality recursively for $s\geq s_0$, we get
		\begin{equation}	\label{t2beq10}
		\begin{aligned}
		&\mathbb{E}\left[f(\tilde{x}^S)-f(x)\right]+\frac{3L}{4\sum_{t=1}^{T_s}\theta_t}\mathbb{E}\left[\|x^S-x\|^2\right]\\
		\leq&\left(\frac{4}{5}\right)^{S-s_0}\left(\left[f(\tilde{x}^{s_0})-f(x)\right]+\frac{3L}{4\sum_{t=1}^{T_s}\theta_t}\|x^{s_0}-x\|^2\right)+\sum_{k=s_0+1}^S\left(\frac{4}{5}\right)^{S+1-k}\frac{3L\sum_{t=1}^{T_s}\theta_t\eta_{k,t}}{2\sum_{t=1}^{T_s}\theta_t}\\
		\overset{\textrm{\ding{172}}}{\leq}&\left(\frac{4}{5}\right)^{S-s_0}\left(\left[f(\tilde{x}^{s_0})-f(x)\right]+\frac{3L}{4T_{s_0}}\|x^{s_0}-x\|^2\right)+\left(\frac{4}{5}\right)^{S-s_0}\frac{3D_0(\log S + 1)}{2n}D_0D_0
		\end{aligned}
		\end{equation}
		where \ding{172} comes from the choice of $\eta_{s,t}$ that \[\sum_{k=s_0+1}^S\left(\frac{4}{5}\right)^{S+1-k}\frac{3L\sum_{t=1}^{T_s}\theta_t\eta_{k,t}}{2\sum_{t=1}^{T_s}\theta_t}=\sum_{k=s_0+1}^S\left(\frac{4}{5}\right)^{S-s_0}\frac{3}{2kn}D_0\leq\left(\frac{4}{5}\right)^{S-s_0}\frac{3D_0(\log S + 1)}{2n}D_0\] and $\sum_{t=1}^{T_s}\theta_t\geq T_s=T_{s_0}$. From (\ref{t2aeq6}) we have
		\begin{equation}	\label{t2beq9}
		\begin{aligned}
		&\mathbb{E}\left[f(\tilde{x}^{s_0})-f(x^*)\right]+\frac{3L}{4T_{s_0}}\mathbb{E}\|x^{s_0}-x^*\|^2\\ \leq & 2\left(\mathbb{E}\left[f(\tilde{x}^{s_0})-f(x^*)\right]+\frac{3L}{8T_{s_0}}\mathbb{E}\|x^{s_0}-x^*\|^2\right)\leq\frac{3D_0(\log s_0 + 2)}{2^{s_0}}
		\end{aligned}
		\end{equation}
		Plugging (\ref{t2beq9}) into (\ref{t2beq10}), setting $x=x^*$, we get
		\begin{equation}
		\begin{aligned}
		&\mathbb{E}\left[f(\tilde{x}^S)-f(x^*)\right]\\
		\leq&\mathbb{E}\left[f(\tilde{x}^S)-f(x^*)\right]+\frac{3L}{4\sum_{t=1}^{T_s}\theta_t}\mathbb{E}\left[\|x^S-x^*\|^2\right]\\
		\leq&\left(\frac{4}{5}\right)^{S-s_0}\left(\left[f(\tilde{x}^{s_0})-f(x^*)\right]+\frac{3L}{4T_{s_0}}\|x^{s_0}-x^*\|^2\right)+\left(\frac{4}{5}\right)^{S-s_0}\frac{3D_0(\log S + 1)}{2n}D_0\\
		\leq&\left(\frac{4}{5}\right)^{S-s_0}\left(\frac{3D_0(\log s_0 + 2)}{2^{s_0}}+\frac{3D_0(\log S + 1)}{2n}D_0\right)\\ \overset{\textrm{\ding{172}}}{\leq}& \left(\frac{4}{5}\right)^{S-s_0}\left(\frac{3D_0(\log s_0 + 2)}{n}+\frac{3D_0(\log S + 1)}{2n}D_0\right)\leq \left(\frac{4}{5}\right)^{S-s_0}\frac{5D_0(\log S + 2)}{n}
		\end{aligned}
		\end{equation}
		where \ding{172} comes from the fact that $2^{s_0}\geq n$. Then we get the desired result for the first-order case.
		
		\textbf{Zeroth-order Case}: We have $\alpha_s=p_s=\frac{1}{2}, \gamma_s=\frac{2}{5L}, T_s=T_{s_0}=2^{s_0-1}$. Setting $c=\frac{1}{2}$ in Lemma \ref{lemma9}, we have
		\begin{equation}	\label{t2beq1}
		\begin{aligned}
		&\frac{\gamma_s}{\alpha_s}\mathbb{E}\left[f(\bar{x}_t)-f(x)\right]+\frac{1+\frac{\tau\gamma_s}{2}}{2}\mathbb{E}\left[\|x-x_t\|^2\right]\\
		\leq&\frac{\gamma_sp_s}{\alpha_s}\left[f(\tilde{x})-f(x)\right]+\frac{1}{2}\|x-x_{t-1}\|^2+\eta_{s,t} +\frac{\gamma_s\mu^2L^2d}{\tau}+\frac{6\gamma_s^2\mu^2L^2d}{1+\tau\gamma_s-L\alpha_s\gamma_s}\\
		\leq&\frac{\gamma_sp_s}{\alpha_s}\left[f(\tilde{x})-f(x)\right]+\frac{1}{2}\|x-x_{t-1}\|^2+\eta_{s,t} +\frac{\gamma_s\mu^2L^2d}{\tau}+\frac{\gamma_s}{\alpha_s}\cdot\frac{3\mu^2Ld}{2}\\
		\end{aligned}
		\end{equation}
		Multiplying both sides of (\ref{t2beq1}) with $\theta_t=\Gamma_{t-1}=\left(1+\frac{\tau\gamma_s}{2}\right)^{t-1}$, we get
		\begin{equation}	\label{t2beq2}
		\begin{aligned}
		&\frac{\gamma_s}{\alpha_s}\theta_t\mathbb{E}\left[f(\bar{x}_t)-f(x)\right]+\frac{\Gamma_{t}}{2}\mathbb{E}\left[\|x-x_t\|^2\right]\\
		\leq&\frac{\gamma_s}{2\alpha_s}\theta_t\left[f(\tilde{x})-f(x)\right]+\frac{\Gamma_{t-1}}{2}\|x-x_{t-1}\|^2+\theta_t\eta_{s,t} +\theta_t\frac{\gamma_s\mu^2L^2d}{\tau}+\theta_t\frac{\gamma_s}{\alpha_s}\cdot\frac{3\mu^2Ld}{2}\\
		\end{aligned}
		\end{equation}
		Summing up (\ref{t2beq2}) for $t=1, ..., T_s$, we get
		\begin{equation}	\label{t2beq3}
		\begin{aligned}
		&\frac{\gamma_s}{\alpha_s}\sum_{t=1}^{T_s}\theta_t\mathbb{E}\left[f(\bar{x}_t)-f(x)\right]+\frac{\Gamma_{T_s}}{2}\mathbb{E}\left[\|x_{T_s}-x\|^2\right]\\
		\leq&\frac{\gamma_s}{2\alpha_s}\sum_{t=1}^{T_s}\theta_t\left[f(\tilde{x})-f(x)\right]+\frac{1}{2}\|x_{0}-x\|^2+\sum_{t=1}^{T_s}\theta_t\eta_{s,t} +\sum_{t=1}^{T_s}\frac{\theta_t\gamma_s\mu^2L^2d}{\tau}+\sum_{t=1}^{T_s}\theta_t\frac{\gamma_s}{\alpha_s}\cdot\frac{3\mu^2Ld}{2}\\
		\end{aligned}
		\end{equation}
		Since we have
		\begin{equation}	\label{t2beq4}
		\begin{aligned}
		\Gamma_{T_s}=&(1+\frac{\tau\gamma_s}{2})^{T_s}=(1+\frac{\tau\gamma_s}{2})^{T_{s_0}}\geq 1+\frac{\tau\gamma_s}{2}T_{s_0}\geq 1+\frac{\tau\gamma_s}{2}\cdot\frac{n}{2} =1+\frac{\tau n}{5L}\overset{\textrm{\ding{172}}}{\geq}\frac{5}{4}
		\end{aligned}
		\end{equation}
		where \ding{172} comes from the assumption that $n\geq\frac{5L}{4\tau}$. Then we get
		\begin{equation}
		\begin{aligned}
		&\frac{5}{4}\left(\frac{\gamma_s}{2\alpha_s}\sum_{t=1}^{T_s}\theta_t\mathbb{E}\left[f(\bar{x}_t)-f(x)\right]+\frac{1}{2}\mathbb{E}\left[\|x_{T_s}-x\|^2\right]\right)\\
		\leq&\frac{\gamma_s}{\alpha_s}\sum_{t=1}^{T_s}\theta_t\mathbb{E}\left[f(\bar{x}_t)-f(x)\right]+\frac{\Gamma_{T_s}}{2}\mathbb{E}\left[\|x_{T_s}-x\|^2\right]\\
		\leq&\frac{\gamma_s}{2\alpha_s}\sum_{t=1}^{T_s}\theta_t\left[f(\tilde{x})-f(x)\right]+\frac{1}{2}\|x_{0}-x\|^2+\sum_{t=1}^{T_s}\theta_t\eta_{s,t} +\sum_{t=1}^{T_s}\frac{\theta_t\gamma_s\mu^2L^2d}{\tau}+\sum_{t=1}^{T_s}\theta_t\frac{\gamma_s}{\alpha_s}\cdot\frac{3\mu^2Ld}{2}\\
		\end{aligned}
		\end{equation}
		From the definition of $\tilde{x}^s, \tilde{x}^{s-1}, x^s, x^{s-1}$, the fact that $\frac{\gamma_s}{\alpha_s}=\frac{4}{5L}$, the convexity of $f$ and Jensen's inequality, we have
		\begin{equation}
		\begin{aligned}
		&\frac{5}{4}\left(\frac{2}{5L}\mathbb{E}\left[f(\tilde{x}^s)-f(x)\right]+\frac{1}{2\sum_{t=1}^{T_s}\theta_t}\mathbb{E}\left[\|x^{s}-x\|^2\right]\right)\\
		\leq&\frac{2}{5L}\left[f(\tilde{x}^{s-1})-f(x)\right]+\frac{1}{2\sum_{t=1}^{T_s}\theta_t}\|x^{s-1}-x\|^2 +\frac{\sum_{t=1}^{T_s}\theta_t\eta_{s,t}}{\sum_{t=1}^{T_s}\theta_t}+\frac{2\mu^2Ld}{5\tau}+\frac{6\mu^2d}{5}\\
		\end{aligned}
		\end{equation}
		Applying the inequality recursively for $s\geq s_0$, we get
		\begin{equation}	\label{t2beq6}
		\begin{aligned}
		&\mathbb{E}\left[f(\tilde{x}^S)-f(x)\right]+\frac{5L}{\sum_{t=1}^{T_s}\theta_t}\mathbb{E}\left[\|x^{S}-x\|^2\right]\\
		\leq&\left(\frac{4}{5}\right)^{S-s_0}\left(\left[f(\tilde{x}^{s_0})-f(x)\right]+\frac{5L}{4\sum_{t=1}^{T_s}\theta_t}\|x^{s_0}-x\|^2\right)\\ &+\sum_{k=s_0+1}^{S}\left(\frac{4}{5}\right)^{S+1-k}\left(\frac{5L\sum_{t=1}^{T_s}\theta_t\eta_{k,t}}{2\sum_{t=1}^{T_s}\theta_t}+\frac{\mu^2L^2d}{\tau}+3\mu^2Ld\right)\\
		\overset{\textrm{\ding{172}}}{\leq}&\left(\frac{4}{5}\right)^{S-s_0}\left(\left[f(\tilde{x}^{s_0})-f(x)\right]+\frac{5L}{4T_{s_0}}\|x^{s_0}-x\|^2\right)\\&+\left(\frac{4}{5}\right)^{S-s_0}\frac{5(\log S + 1)}{2n}D_0+\frac{4\mu^2L^2d}{\tau}+12\mu^2Ld\\
		\end{aligned}
		\end{equation}
		where \ding{172} comes from the choice of $\eta_{s,t}$ that \[\sum_{k=s_0+1}^{S}\left(\frac{4}{5}\right)^{S+1-k}\frac{5L\sum_{t=1}^{T_s}\theta_t\eta_{k,t}}{2\sum_{t=1}^{T_s}\theta_t} =\sum_{k=s_0+1}^{S}\left(\frac{4}{5}\right)^{S-s_0}\frac{5}{2kn}D_0\leq \left(\frac{4}{5}\right)^{S-s_0}\frac{5(\log S + 1)}{2n}D_0\] and \[\sum_{k=s_0+1}^{S}\left(\frac{4}{5}\right)^{S+1-k}\leq\frac{4}{5}\cdot\frac{1}{1-\frac{4}{5}}=4\] and $\sum_{t=1}^{T_s}\theta_t\geq T_s=T_{s_0}$.
		From (\ref{t2aeq8}) we have
		\begin{equation}	\label{t2beq5}
		\begin{aligned}
		\mathbb{E}\left[f(\tilde{x}^s_0)-f(x^*)\right]+\frac{5L}{4T_{s_0}}\mathbb{E}\left[\|x^s_0-x^*\|^2\right] &\leq 2\left(\mathbb{E}\left[f(\tilde{x}^s_0)-f(x^*)\right]+\frac{5L}{8T_{s_0}}\mathbb{E}\left[\|x^s_0-x^*\|^2\right]\right) \\
		&\leq \frac{5D_0(\log s_0 + 2)}{2^{s_0}}+\frac{\mu^2L^2d}{\tau}+6\mu^2Ld\\
		\end{aligned}
		\end{equation}
		Plugging (\ref{t2beq5}) into (\ref{t2beq6}), setting $x=x^*$, we get
		\begin{equation}
		\begin{aligned}
		\mathbb{E}\left[f(\tilde{x}^S)-f(x^*)\right] \leq&\left(\frac{4}{5}\right)^{S-s_0}\left(\frac{5D_0(\log s_0 + 2)}{2^{s_0}}+\frac{\mu^2L^2d}{\tau}+6\mu^2Ld\right)\\&+\left(\frac{4}{5}\right)^{S-s_0}\frac{5(\log S + 1)}{2n}D_0+\frac{4\mu^2L^2d(d+4)}{\tau}+12\mu^2Ld\\
		\overset{\textrm{\ding{172}}}{\leq}& \left(\frac{4}{5}\right)^{S-s_0}\frac{8D_0(\log S + 2)}{n}+\frac{5\mu^2L^2d}{\tau}+18\mu^2Ld\\
		\end{aligned}
		\end{equation}
		where \ding{172} comes from the fact that $2^{s_0}\geq n$. Then we get the desired result for the zeroth-order case. Then we complete the proof.
	\end{proof}

	\begin{theorem}	\label{theorem2c}
		Suppose Assumption \ref{assum2} holds. Denote $s_0=\lfloor\log n\rfloor+1$. Suppose $s> s_0$, set $\{T_s\}, \{p_s\}, \{\eta_{s,t}\}, \{\theta_t\}$ as \[T_s = T_{s_0} = 2^{s_0-1}, \; p_s = \frac{1}{2}, \; \eta_{s,t}=\frac{\left(\frac{1}{\Gamma_{T_{s_0}}}\right)^{s-s_0-1}D_0}{snL}, \; \theta_t=\begin{cases}
		\Gamma_{t-1}-(1-\alpha_s-p_s)\Gamma_{t}, &t\leq T_s-1\\
		\Gamma_{t-1}, &t=T_{s}\\
		\end{cases}\] where $D_0, \Gamma_{t}$ will be specified below for two cases respectively.
		
		$\bullet$ For the first-order case, set $\alpha_s=\sqrt{\frac{n\tau}{3L}}, \gamma_s=\frac{1}{3L\alpha_s}, \Gamma_{t}=\left(1+\tau\gamma_s\right)^{t}, D_0=4(f(\tilde{x}^0)-f(x^*))+3L\|x^0-x^*\|^2$ for $s>s_0$. Suppose $n<\frac{3L}{4\tau}$,  we have \[\begin{aligned}
		\mathbb{E}\left[f(\tilde{x}^{S})-f(x^*)\right]\leq \left(1+\frac{1}{2}\sqrt{\frac{n\tau}{3L}}\right)^{-(S-s_0)}\frac{5D_0(\log S + 2)}{n}
		\end{aligned}\]
		
		$\bullet$ For the zeroth-order case, set $\alpha_s=\sqrt{\frac{n\tau}{5L}}, \gamma_s=\frac{1}{5L\alpha_s}, \Gamma_{t}=\left(1+\frac{\tau\gamma_s}{2}\right)^{t}, D_0=4(f(\tilde{x}^0)-f(x^*))+5L\|x^0-x^*\|^2$ for $s>s_0$. Suppose $n<\frac{5L}{4\tau}$, we have \[\begin{aligned}
		\mathbb{E}\left[f(\tilde{x}^S)-f(x^*)\right] \leq&\left(1+\frac{1}{4}\sqrt{\frac{n\tau}{5L}}\right)^{-(S-s_0)}\frac{8D_0(\log S + 2)}{n}+\left(\frac{\mu^2L^2d}{\tau}+6\mu^2Ld\right)\left(1+8\sqrt{\frac{5L}{n\tau}}\right)
		\end{aligned}\]
	\end{theorem}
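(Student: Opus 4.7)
The plan is to closely mirror the argument in Theorem~\ref{theorem2b}, with the sole modification that $\alpha_s$ is chosen much smaller than $1/2$. The point of the new choice is to enlarge $\gamma_s=\frac{1}{3L\alpha_s}$ (first-order) or $\gamma_s=\frac{1}{5L\alpha_s}$ (zeroth-order) so that the per-iteration factor $1+\tau\gamma_s$ (resp.\ $1+\tau\gamma_s/2$) is large enough that its $T_{s_0}$-th power gives a nontrivial contraction per epoch even when $n$ is small. The hypothesis $n<3L/(4\tau)$ (resp.\ $n<5L/(4\tau)$) is exactly what keeps $\alpha_s\le 1/2$ so that the $\min\{\cdot,1/2\}$ clipping in the definition of $\alpha_s$ is inactive. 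Under this choice one can verify the preconditions of Lemma~\ref{lemma9}: $L\alpha_s\gamma_s$ is a fixed constant strictly less than $1$, and $\frac{L\alpha_s\gamma_s}{1+\tau\gamma_s-L\alpha_s\gamma_s}<\frac{1}{2}=p_s$, with the analogous coefficient bounded similarly in the zeroth-order version.

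First I would multiply Lemma~\ref{lemma9} by the weight $\theta_t=\Gamma_{t-1}$ and sum for $t=1,\dots,T_s$. Since the coefficient of $\|x-x_t\|^2$ is $\Gamma_{t-1}(1+\tau\gamma_s)/2=\Gamma_t/2$, the norm terms telescope. After passing to $\tilde{x}^s=\sum_t\theta_t\bar{x}_t/\sum_t\theta_t$ by convexity and Jensen's inequality and using $x^s=x_{T_s}$, $x^{s-1}=x_0$, $\tilde{x}^{s-1}=\tilde{x}$, one arrives at an epoch-level recursion of the form
\[
\Phi_s \le \Gamma_{T_s}^{-1}\Phi_{s-1} + \Psi_s,\qquad \Phi_s:=\mathbb{E}[f(\tilde{x}^s)-f(x^*)]+\tfrac{c}{\sum_t\theta_t}\mathbb{E}\|x^s-x^*\|^2,
\]
where $\Psi_s$ collects $\sum_t\theta_t\eta_{s,t}$ and, in the zeroth-order case, the $\mu$-dependent bias/variance contributions.

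The central estimate is a lower bound on $\Gamma_{T_s}$. Since $T_s=T_{s_0}\ge n/2$,
\[
\Gamma_{T_s}=(1+\tau\gamma_s)^{T_{s_0}}\ge 1+T_{s_0}\tau\gamma_s\ge 1+\tfrac{n}{2}\cdot\tfrac{\tau}{\sqrt{3nL\tau}}=1+\tfrac{1}{2}\sqrt{n\tau/(3L)}
\]
in the first-order case, and analogously $\Gamma_{T_s}\ge 1+\tfrac{1}{4}\sqrt{n\tau/(5L)}$ in the zeroth-order case (here $\Gamma_t=(1+\tau\gamma_s/2)^t$). Unrolling the recursion from $s_0$ to $S$ then yields a contraction by exactly $(1+\tfrac{1}{2}\sqrt{n\tau/(3L)})^{-(S-s_0)}$ on the base term $\Phi_{s_0}$, and $\Phi_{s_0}$ is in turn bounded by $\tilde{\mathcal{O}}(D_0/n)$ via Theorem~\ref{theorem2a}. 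The choice $\eta_{s,t}=(\Gamma_{T_{s_0}})^{-(s-s_0-1)}D_0/(snL)$ is tuned so that the telescoped sum $\sum_{k>s_0}\Gamma_{T_s}^{-(S-k)}\sum_t\theta_t\eta_{k,t}$ inherits the same geometric factor with only an extra $(\log S+2)$ factor, matching the stated rate.

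For the zeroth-order case, the $\mu$-dependent terms produced by Lemma~\ref{lemma9} (each of order $\frac{\mu^2 L^2 d}{\tau}+\mu^2 Ld$ per epoch) accumulate as a geometric series with ratio $\Gamma_{T_{s_0}}^{-1}$ and sum to at most $\Gamma_{T_{s_0}}/(\Gamma_{T_{s_0}}-1)=1+\mathcal{O}(\sqrt{L/(n\tau)})$ times a single epoch's contribution; this is exactly the factor $1+8\sqrt{5L/(n\tau)}$ appearing in $\Delta_2^\mu$. The main obstacle I anticipate is the bookkeeping at this final stage: one must coordinate the epoch-dependent $\eta_{s,t}$, $\theta_t$, and $\Gamma_t$ so that the three distinct error contributions (the sliding residual $\eta$, the coordinate-estimator noise, and the carried-over initial error $\Phi_{s_0}$) all emerge with the same geometric factor $\Gamma_{T_{s_0}}^{-(S-s_0)}$; any mismatch between them would destroy the clean $(1+\tfrac{1}{2}\sqrt{n\tau/(3L)})^{-(S-s_0)}$ rate.
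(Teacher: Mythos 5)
Your high-level plan coincides with the paper's: weight the one-step bound of Lemma \ref{lemma9} by $\Gamma_{t-1}$, telescope the norm terms, pass to $\tilde{x}^s$ by Jensen, unroll an epoch-level recursion with ratio $\Gamma_{T_{s_0}}^{-1}$, bound $\Gamma_{T_{s_0}}\geq 1+\tfrac{1}{2}\sqrt{n\tau/(3L)}$ (resp.\ $1+\tfrac{1}{4}\sqrt{n\tau/(5L)}$), seed the recursion with Theorem \ref{theorem2a}, and absorb the $\eta_{s,t}$ and $\mu$-terms as geometric series. However, there is a genuine gap at the central step, which you elide by saying you will "closely mirror Theorem \ref{theorem2b}". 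In Theorem \ref{theorem2b} one has $\alpha_s=p_s=\tfrac12$, so $1-\alpha_s-p_s=0$: the $f(\bar{x}_{t-1})$ carry-over term in Lemma \ref{lemma9} vanishes, the plain weights $\Gamma_{t-1}$ suffice, and the function-value part of the potential contracts automatically because the coefficient of $f(\tilde{x}^{s-1})$ on the right is exactly half the coefficient of $f(\tilde{x}^{s})$ on the left (the factor $5/4$ then being supplied by $n\geq 3L/(4\tau)$). In Theorem \ref{theorem2c} one has $\alpha_s=\sqrt{n\tau/(3L)}<\tfrac12$, so $1-\alpha_s-p_s>0$: the $f(\bar{x}_{t-1})$ terms survive, the net weights are $\theta_t=\Gamma_{t-1}-(1-\alpha_s-p_s)\Gamma_t$ (not $\Gamma_{t-1}$, as you write), and after summation the coefficient of $f(\tilde{x}^{s-1})$ is $\frac{\gamma_s}{\alpha_s}\bigl[1-\alpha_s-p_s+p_s\sum_{t=1}^{T_{s_0}}\Gamma_{t-1}\bigr]$ while that of $f(\tilde{x}^{s})$ is $\frac{\gamma_s}{\alpha_s}\sum_{t=1}^{T_{s_0}}\theta_t$. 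The recursion $\Phi_s\leq\Gamma_{T_{s_0}}^{-1}\Phi_{s-1}+\Psi_s$ you assert therefore requires the inequality
\begin{equation*}
\sum_{t=1}^{T_{s_0}}\theta_t\;\geq\;\Gamma_{T_{s_0}}\Bigl[1-\alpha_s-p_s+p_s\textstyle\sum_{t=1}^{T_{s_0}}\Gamma_{t-1}\Bigr],
\end{equation*}
which is not automatic and is exactly where the paper uses the specific parameter coupling: $\alpha_s\geq\tau\gamma_s T_{s_0}$ (valid because $T_{s_0}\leq n$ and $\alpha_s\gamma_s=\tfrac{1}{3L}$), leading to $1-(1-\alpha_s-p_s)(1+\tau\gamma_s)\geq p_s\Gamma_{T_{s_0}}$ via the elementary bound $(1+a)^b\leq 1+2ab$ for $b\geq1$, $ab\in[0,1]$; one also needs $1-\alpha_s-p_s+p_s\sum_t\Gamma_{t-1}\geq p_sT_{s_0}$ to convert the norm coefficient into the $\frac{\alpha_s}{\gamma_sT_{s_0}}$ form of your potential. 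Without this argument the function-value term contracts at an unknown rate and the claimed $\bigl(1+\tfrac12\sqrt{n\tau/(3L)}\bigr)^{-(S-s_0)}$ bound does not follow; your anticipated "obstacle" (bookkeeping of the three error sources) is comparatively routine, while this weighted-sum inequality is the actual heart of the proof and is missing from your proposal. The same gap recurs verbatim in your zeroth-order case, where additionally the per-epoch noise must be compared against $\Gamma_{T_{s_0}}-1$ (giving the $1+8\sqrt{5L/(n\tau)}$ factor), which you do state correctly.
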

	\begin{proof}
		We give proof to the first-order case and zeroth-order case respectively.
		
		\textbf{First-order Case}: We have $\alpha_s=\sqrt{\frac{n\tau}{3L}}, p_s = \frac{1}{2}, \gamma_s=\frac{1}{\sqrt{3n\tau L}}, T_s=T_{s_0}=2^{s_0-1}$. Note that for the first-order case, we have $f=f$. From Lemma \ref{lemma9}, we have
		\begin{equation}
		\begin{aligned}
		&\frac{\gamma_s}{\alpha_s}\mathbb{E}\left[f(\bar{x}_t)-f(x)\right]+\frac{1+\tau\gamma_s}{2}\mathbb{E}\left[\|x-x_t\|^2\right]\\
		\leq&\frac{\gamma_s(1-\alpha_s-p_s)}{\alpha_s}\left[f(\bar{x}_{t-1})-f(x)\right]+\frac{\gamma_sp_s}{\alpha_s}\left[f(\tilde{x})-f(x)\right]+\frac{1}{2}\|x-x_{t-1}\|^2+\eta_{s,t}
		\end{aligned}
		\end{equation}
		Multiplying both sides of the above inequality with $\Gamma_{t-1}=\left(1+\tau\gamma_s\right)^{t-1}$, we get
		\begin{equation}
		\begin{aligned}
		&\frac{\gamma_s}{\alpha_s}\Gamma_{t-1}\mathbb{E}\left[f(\bar{x}_t)-f(x)\right]+\frac{\Gamma_{t}}{2}\mathbb{E}\left[\|x-x_t\|^2\right]\\
		\leq&\frac{\gamma_s(1-\alpha_s-p_s)}{\alpha_s}\Gamma_{t-1}\left[f(\bar{x}_{t-1})-f(x)\right]+\frac{\gamma_sp_s}{\alpha_s}\Gamma_{t-1}\left[f(\tilde{x})-f(x)\right]+\frac{\Gamma_{t-1}}{2}\|x-x_{t-1}\|^2+\Gamma_{t-1}\eta_{s,t}
		\end{aligned}
		\end{equation}
		Summing up the above inequality for $t=1, ..., T_s$, using the definition of $\theta_t$, we get
		\begin{equation}
		\begin{aligned}
		&\frac{\gamma_s}{\alpha_s}\sum_{t=1}^{T_s}\theta_t\mathbb{E}\left[f(\bar{x}_t)-f(x)\right]+\frac{\Gamma_{T_s}}{2}\mathbb{E}\left[\|x_{T_s}-x\|^2\right]\\
		\leq&\frac{\gamma_s}{\alpha_s}\left[1-\alpha_s-p_s+p_s\sum_{t=1}^{T_s}\Gamma_{t-1}\right]\left[f(\tilde{x})-f(x)\right]+\frac{1}{2}\|x_0-x\|^2+\sum_{t=1}^{T_s}\Gamma_{t-1}\eta_{s,t}
		\end{aligned}
		\end{equation}
		From the definition of $\tilde{x}^s, \tilde{x}^{s-1}, x^s, x^{s-1}$, the convexity of $f$ and Jensen's inequality, we have
		\begin{equation}	\label{t2ceq17}
		\begin{aligned}
		&\frac{\gamma_s}{\alpha_s}\sum_{t=1}^{T_s}\theta_t\mathbb{E}\left[f(\tilde{x}^s)-f(x)\right]+\frac{\Gamma_{T_s}}{2}\mathbb{E}\left[\|x^s-x\|^2\right]\\
		\leq&\frac{\gamma_s}{\alpha_s}\left[1-\alpha_s-p_s+p_s\sum_{t=1}^{T_s}\Gamma_{t-1}\right]\left[f(\tilde{x}^{s-1})-f(x)\right]+\frac{1}{2}\|x^{s-1}-x\|^2+\sum_{t=1}^{T_s}\Gamma_{t-1}\eta_{s,t}
		\end{aligned}
		\end{equation}
		From the definition of $\theta_t$, we have
		\begin{equation}	\label{t2ceq15}
		\begin{aligned}
		\sum_{t=1}^{T_{s_0}}\theta_t=&\Gamma_{T_{s_0}-1}+\sum_{t=1}^{T_{s_0}-1}\left(\Gamma_{t-1}-(1-\alpha_s-p_s)\Gamma_{t}\right)\\
		=&\Gamma_{T_{s_0}}(1-\alpha_s-p_s)+\sum_{t=1}^{T_{s_0}}\left(\Gamma_{t-1}-(1-\alpha_s-p_s)\Gamma_{t}\right)\\
		=&\Gamma_{T_{s_0}}(1-\alpha_s-p_s)+\left[1-(1-\alpha_s-p_s)\left(1+\tau\gamma_s\right)\right]\sum_{t=1}^{T_{s_0}}\Gamma_{t-1}\\
		\end{aligned}
		\end{equation}
		Since $T_{s_0}=2^{s_0-1}\leq n$, we have
		\begin{equation}	\label{t2ceq13}
		\alpha_s=\sqrt{\frac{n\tau}{3L}}\geq\sqrt{\frac{T_{s_0}\tau}{3L}}=\tau\sqrt{\frac{T_{s_0}n}{3n\tau L}}\geq\tau\gamma_s T_{s_0}
		\end{equation}
		Then we have
		\begin{equation}	\label{t2ceq14}
		\begin{aligned}
		1-(1-\alpha_s-p_s)\left(1+\tau\gamma_s\right)=&\left(1+\tau\gamma_s\right)\left(\alpha_s-\tau\gamma_s+p_s\right)+\tau^2\gamma_s^2\\
		\overset{\textrm{\ding{172}}}{\geq}&\left(1+\tau\gamma_s\right)\left(\tau\gamma_sT_{s_0}-\tau\gamma_s+p_s\right)\\
		=&p_s\left(1+\tau\gamma_s\right)\left(2(T_{s_0}-1)\tau\gamma_s+1\right)\\
		\overset{\textrm{\ding{173}}}{\geq}&p_s\left(1+\tau\gamma_s\right)^{T_{s_0}}=p_s\Gamma_{T_s}
		\end{aligned}
		\end{equation}
		where \ding{172} comes from (\ref{t2ceq13}) and \ding{173} comes from the fact that $(1+a)^b\leq 1+2ab$, for $b\geq1, ab\in[0, 1]$. Plugging (\ref{t2ceq14}) into (\ref{t2ceq15}), we get
		\begin{equation}	\label{t2ceq16}
		\begin{aligned}
		\sum_{t=1}^{T_{s_0}}\theta_t\geq\Gamma_{T_{s_0}}\left[1-\alpha_s-p_s+p_s\sum_{t=1}^{T_{s_0}}\Gamma_{t-1}\right]
		\end{aligned}
		\end{equation}
		Plugging (\ref{t2ceq16}) into (\ref{t2ceq17}), we get
		\begin{equation}
		\begin{aligned}
		&\Gamma_{T_{s_0}}\left(\frac{\gamma_s}{\alpha_s}\left[1-\alpha_s-p_s+p_s\sum_{t=1}^{T_{s_0}}\Gamma_{t-1}\right]\mathbb{E}\left[f(\tilde{x}^s)-f(x)\right]+\frac{1}{2}\mathbb{E}\left[\|x^s-x\|^2\right]\right)\\
		\leq&\frac{\gamma_s}{\alpha_s}\left[1-\alpha_s-p_s+p_s\sum_{t=1}^{T_s}\Gamma_{t-1}\right]\left[f(\tilde{x}^{s-1})-f(x)\right]+\frac{1}{2}\|x^{s-1}-x\|^2+\sum_{t=1}^{T_s}\Gamma_{t-1}\eta_{s,t}
		\end{aligned}
		\end{equation}
		Since we have
		\begin{equation}
		\frac{\gamma_s}{\alpha_s}\left[1-\alpha_s-p_s+p_s\sum_{t=1}^{T_{s_0}}\Gamma_{t-1}\right]\geq\frac{\gamma_sp_s}{\alpha_s}\sum_{t=1}^{T_{s_0}}\Gamma_{t-1}\geq\frac{\gamma_sp_s}{\alpha_s}T_{s_0}
		\end{equation}
		Dividing both sides of the above inequality with $\Gamma_{T_{s_0}}\frac{\gamma_s}{\alpha_s}\left[1-\alpha_s-p_s+p_s\sum_{t=1}^{T_{s_0}}\Gamma_{t-1}\right]$, we have
		\begin{equation}
		\begin{aligned}
		&\mathbb{E}\left[f(\tilde{x}^s)-f(x)\right]+\frac{\alpha_s}{\gamma_sT_{s_0}}\mathbb{E}\left[\|x^s-x\|^2\right]\\
		\leq&\left(\frac{1}{\Gamma_{T_{s_0}}}\right)\left(\left[f(\tilde{x}^s)-f(x)\right]+\frac{\alpha_s}{\gamma_sT_{s_0}}\left[\|x^s-x\|^2\right]\right)+\left(\frac{1}{\Gamma_{T_{s_0}}}\right)\frac{2\alpha_s\sum_{t=1}^{T_s}\Gamma_{t-1}\eta_{s,t}}{\gamma_s\sum_{t=1}^{T_s}\Gamma_{t-1}}
		\end{aligned}
		\end{equation}
		Summing up the above inequality for $s>s_0$, setting $x=x^*$, we get
		\begin{equation}	\label{t2ceq20}
		\begin{aligned}
		&\mathbb{E}\left[f(\tilde{x}^S)-f(x^*)\right]+\frac{\alpha_S}{\gamma_ST_{s_0}}\mathbb{E}\left[\|x^S-x^*\|^2\right]\\
		\leq&\left(\frac{1}{\Gamma_{T_{s_0}}}\right)^{S-s_0}\left(\left[f(\tilde{x}^{s_0})-f(x^*)\right]+\frac{\alpha_{s_0}}{\gamma_{s_0}T_{s_0}}\left[\|x^{s_0}-x^*\|^2\right]\right)+\sum_{k=s_0+1}^S\left(\frac{1}{\Gamma_{T_{s_0}}}\right)^{S+1-k}\frac{2\alpha_s\sum_{t=1}^{T_s}\Gamma_{t-1}\eta_{k,t}}{\gamma_s\sum_{t=1}^{T_s}\Gamma_{t-1}}\\
		\overset{\textrm{\ding{172}}}{\leq}&\left(\frac{1}{\Gamma_{T_{s_0}}}\right)^{S-s_0}\left(\left[f(\tilde{x}^{s_0})-f(x^*)\right]+\frac{3L}{4T_{s_0}}\left[\|x^{s_0}-x^*\|^2\right]\right)+\sum_{k=s_0+1}^S\left(\frac{1}{\Gamma_{T_{s_0}}}\right)^{S+1-k}\frac{2\alpha_s\sum_{t=1}^{T_s}\Gamma_{t-1}\eta_{k,t}}{\gamma_s\sum_{t=1}^{T_s}\Gamma_{t-1}}
		\end{aligned}
		\end{equation}
		where \ding{172} comes from the fact that $\frac{\alpha_s}{\gamma_s}=n\tau\leq\frac{3L}{4}$. From Theorem \ref{theorem2a} we have
		\begin{equation}	\label{t2ceq19}
		\begin{aligned}
		&\mathbb{E}\left[f(\tilde{x}^{s_0})-f(x^*)\right]+\frac{3L}{4T_{s_0}}\mathbb{E}\left[\|x^{s_0}-x^*\|^2\right]\\
		\leq&2\left(\mathbb{E}\left[f(\tilde{x}^{s_0})-f(x^*)\right]+\frac{3L}{8T_{s_0}}\mathbb{E}\left[\|x^{s_0}-x^*\|^2\right]\right) \overset{\textrm{\ding{172}}}{\leq}\frac{3D_0(\log s_0 + 2)}{2^{s_0}}
		\end{aligned}
		\end{equation}
		where \ding{172} comes from (\ref{t2aeq9}). With the choice of $\eta_{s,t}$, we have
		\begin{equation}	\label{t2ceq18}
		\begin{aligned}
			\sum_{k=s_0+1}^S\left(\frac{1}{\Gamma_{T_{s_0}}}\right)^{S+1-k}\frac{2\alpha_s\sum_{t=1}^{T_s}\Gamma_{t-1}\eta_{k,t}}{\gamma_s\sum_{t=1}^{T_s}\Gamma_{t-1}}=&\sum_{k=s_0+1}^S\left(\frac{1}{\Gamma_{T_{s_0}}}\right)^{S-s_0}\frac{2}{kn}D_0\\\leq&\left(\frac{1}{\Gamma_{T_{s_0}}}\right)^{S-s_0}\frac{2(\log S + 1)}{n}D_0
		\end{aligned}
		\end{equation}
		Plugging (\ref{t2ceq18}), (\ref{t2ceq19}) into (\ref{t2ceq20}), we get
		\begin{equation}	\label{t2ceq21}
		\begin{aligned}
		\mathbb{E}\left[f(\tilde{x}^S)-f(x^*)\right] \leq&\left(\frac{1}{\Gamma_{T_{s_0}}}\right)^{S-s_0}\frac{3D_0(\log s_0 + 2)}{2^{s_0}}+\left(\frac{1}{\Gamma_{T_{s_0}}}\right)^{S-s_0}\frac{2(\log S + 1)}{n}D_0\\
		\overset{\textrm{\ding{172}}}{\leq}&\left(\frac{1}{\Gamma_{T_{s_0}}}\right)^{S-s_0}\frac{5D_0(\log S + 2)}{n}
		\end{aligned}
		\end{equation}
		where \ding{172} comes from $2^{s_0}\geq n$. From the definition of $\Gamma_{T_{s_0}}$, we have
		\begin{equation}	\label{t2ceq22}
		\Gamma_{T_{s_0}}=\left(1+\tau\gamma_s\right)^{T_{s_0}}\geq1+\tau\gamma_sT_{s_0}\geq1+\frac{\tau\gamma_s n}{2}=1+\frac{1}{2}\sqrt{\frac{n\tau}{3L}}
		\end{equation}
		Plugging (\ref{t2ceq22}) into (\ref{t2ceq21}), we get
		\begin{equation}
		\begin{aligned}
		&\mathbb{E}\left[f(\tilde{x}^S)-f(x^*)\right]
		\leq\left(1+\frac{1}{2}\sqrt{\frac{n\tau}{3L}}\right)^{-(S-s_0)}\frac{5D_0(\log S + 2)}{n}
		\end{aligned}
		\end{equation}
		Then we get the desired result for the first-order case.
		
		\textbf{Zeroth-order Case}: We have $\alpha_s=\sqrt{\frac{n\tau}{5L}}, p_s=\frac{1}{2}, \gamma_s=\frac{1}{\sqrt{5n\tau L}}, T_s=T_{s_0}=2^{s_0-1}$. Setting $c=\frac{1}{2}$ in Lemma \ref{lemma9}, we have
		\begin{equation}	\label{t2ceq1}
		\begin{aligned}
		&\frac{\gamma_s}{\alpha_s}\mathbb{E}\left[f(\bar{x}_t)-f(x)\right]+\frac{1+\frac{\tau\gamma_s}{2}}{2}\mathbb{E}\left[\|x-x_t\|^2\right]\\
		\leq&\frac{\gamma_s(1-\alpha_s-p_s)}{\alpha_s}\left[f(\bar{x}_{t-1})-f(x)\right]+\frac{\gamma_sp_s}{\alpha_s}\left[f(\tilde{x})-f(x)\right]+\frac{1}{2}\|x-x_{t-1}\|^2+\eta_{s,t}\\
		&+\frac{\gamma_s\mu^2L^2d}{\tau}+\frac{6\gamma_s^2\mu^2L^2d}{1+\tau\gamma_s-L\alpha_s\gamma_s}\\
		\leq&\frac{\gamma_s(1-\alpha_s-p_s)}{\alpha_s}\left[f(\bar{x}_{t-1})-f(x)\right]+\frac{\gamma_sp_s}{\alpha_s}\left[f(\tilde{x})-f(x)\right]+\frac{1}{2}\|x-x_{t-1}\|^2+\eta_{s,t}\\
		&+\frac{\gamma_s\mu^2L^2d}{\tau}+\frac{\gamma_s}{\alpha_s}\cdot\frac{3\mu^2Ld}{2}\\
		\end{aligned}
		\end{equation}
		Multiplying both sides of the inequality with $\Gamma_{t-1}=\left(1+\frac{\tau\gamma_s}{2}\right)^{t-1}$, we get
		\begin{equation}
		\begin{aligned}
		&\frac{\gamma_s}{\alpha_s}\Gamma_{t-1}\mathbb{E}\left[f(\bar{x}_t)-f(x)\right]+\frac{\Gamma_{t}}{2}\mathbb{E}\left[\|x-x_t\|^2\right]\\
		\leq&\frac{\gamma_s(1-\alpha_s-p_s)}{\alpha_s}\Gamma_{t-1}\left[f(\bar{x}_{t-1})-f(x)\right]+\frac{\gamma_sp_s}{\alpha_s}\Gamma_{t-1}\left[f(\tilde{x})-f(x)\right]+\frac{\Gamma_{t-1}}{2}\|x-x_{t-1}\|^2+\Gamma_{t-1}\eta_{s,t}\\
		&+\Gamma_{t-1}\frac{\gamma_s\mu^2L^2d}{\tau}+\Gamma_{t-1}\frac{\gamma_s}{\alpha_s}\cdot\frac{3\mu^2Ld}{2}\\
		\end{aligned}
		\end{equation}
		Summing up the inequality for $t=1, ..., T_s$, using the definition of $\theta_t$, we get
		\begin{equation}
		\begin{aligned}
		&\frac{\gamma_s}{\alpha_s}\sum_{t=1}^{T_s}\theta_{t}\mathbb{E}\left[f(\bar{x}_t)-f(x)\right]+\frac{\Gamma_{T_s}}{2}\mathbb{E}\left[\|x_{T_s}-x\|^2\right]\\
		\leq&\frac{\gamma_s}{\alpha_s}\left(1-\alpha_s-p_s+p_s\sum_{t=1}^{T_s}\Gamma_{t-1}\right)\left[f(\tilde{x})-f(x)\right]+\frac{1}{2}\|x_0-x\|^2+\sum_{t=1}^{T_s}\Gamma_{t-1}\eta_{s,t}\\
		&+\sum_{t=1}^{T_s}\Gamma_{t-1}\frac{\gamma_s\mu^2L^2d}{\tau}+\sum_{t=1}^{T_s}\Gamma_{t-1}\frac{\gamma_s}{\alpha_s}\cdot\frac{3\mu^2Ld}{2}\\
		\end{aligned}
		\end{equation}
		From the definition of $\tilde{x}^s, \tilde{x}^{s-1}, x^s, x^{s-1}$ and the convexity of $f$, we have
		\begin{equation}	\label{t2ceq6}
		\begin{aligned}
		&\frac{\gamma_s}{\alpha_s}\sum_{t=1}^{T_s}\theta_{t}\mathbb{E}\left[f(\tilde{x}^s)-f(x)\right]+\frac{\Gamma_{T_s}}{2}\mathbb{E}\left[\|x^s-x\|^2\right]\\
		\leq&\frac{\gamma_s}{\alpha_s}\left(1-\alpha_s-p_s+p_s\sum_{t=1}^{T_s}\Gamma_{t-1}\right)\left[f(\tilde{x}^{s-1})-f(x)\right]+\frac{1}{2}\|x^{s-1}-x\|^2+\sum_{t=1}^{T_s}\Gamma_{t-1}\eta_{s,t}\\
		&+\sum_{t=1}^{T_s}\Gamma_{t-1}\frac{\gamma_s\mu^2L^2d}{\tau}+\sum_{t=1}^{T_s}\Gamma_{t-1}\frac{\gamma_s}{\alpha_s}\cdot\frac{3\mu^2Ld}{2}\\
		\end{aligned}
		\end{equation}
		From the definition of $\theta_t$, we have
		\begin{equation}	\label{t2ceq4}
		\begin{aligned}
		\sum_{t=1}^{T_{s_0}}\theta_t=&\Gamma_{T_{s_0}-1}+\sum_{t=1}^{T_{s_0}-1}\left(\Gamma_{t-1}-(1-\alpha_s-p_s)\Gamma_{t}\right)\\
		=&\Gamma_{T_{s_0}}(1-\alpha_s-p_s)+\sum_{t=1}^{T_{s_0}}\left(\Gamma_{t-1}-(1-\alpha_s-p_s)\Gamma_{t}\right)\\
		=&\Gamma_{T_{s_0}}(1-\alpha_s-p_s)+\left[1-(1-\alpha_s-p_s)\left(1+\frac{\tau\gamma_s}{2}\right)\right]\sum_{t=1}^{T_{s_0}}\Gamma_{t-1}\\
		\end{aligned}
		\end{equation}
		Since $T_{s_0}=2^{s_0-1}\leq n$, we have
		\begin{equation}	\label{t2ceq2}
		\begin{aligned}
		\alpha_s=\sqrt{\frac{n\tau}{5L}}\geq\sqrt{\frac{T_{s_0}\tau}{5L}}=\frac{1}{\sqrt{5n\tau L}}\cdot\tau\sqrt{T_{s_0}n}\geq\tau\gamma_sT_{s_0}\geq\frac{\tau\gamma_s T_{s_0}}{2}
		\end{aligned}
		\end{equation}
		Then we have
		\begin{equation}	\label{t2ceq3}
		\begin{aligned}
		1-(1-\alpha_s-p_s)\left(1+\frac{\tau\gamma_s}{2}\right)=&\left(1+\frac{\tau\gamma_s}{2}\right)\left(\alpha_s+p_s-\frac{\tau\gamma_s}{2}\right)+\frac{\tau^2\gamma_s^2}{4}\\
		\overset{\textrm{\ding{172}}}{\geq}&\left(1+\frac{\tau\gamma_s}{2}\right)\left(\frac{\tau\gamma_s}{2}T_{s_0}+p_s-\frac{\tau\gamma_s}{2}\right)\\
		=&p_s\left(1+\frac{\tau\gamma_s}{2}\right)\left(1+2(T_{s_0}-1)\frac{\tau\gamma_s}{2}\right)\\
		\overset{\textrm{\ding{173}}}{\geq}&p_s\left(1+\frac{\tau\gamma_s}{2}\right)^{T_{s_0}}=p_s\Gamma_{T_{s_0}}
		\end{aligned}
		\end{equation}
		where \ding{172} comes from (\ref{t2ceq2}) and \ding{173} comes from the fact that $(1+a)^b\leq1+2ab$, for $b\geq1, ab\in[0, 1]$. Plugging (\ref{t2ceq3}) into (\ref{t2ceq4}), we get
		\begin{equation}	\label{t2ceq5}
		\sum_{t=1}^{T_{s_0}}\theta_t\geq\Gamma_{T_{s_0}}\left[1-\alpha_s-p_s+p_s\sum_{t=1}^{T_{s_0}}\Gamma_{t-1}\right]
		\end{equation}
		Then plugging (\ref{t2ceq5}) into (\ref{t2ceq6}), setting $x=x^*$, we get
		\begin{equation}
		\begin{aligned}
		&\Gamma_{T_s}\left(\frac{\gamma_s}{\alpha_s}\left[1-\alpha_s-p_s+p_s\sum_{t=1}^{T_{s}}\Gamma_{t-1}\right]\mathbb{E}\left[f(\tilde{x}^s)-f(x^*)\right]+\frac{1}{2}\mathbb{E}\left[\|x^s-x^*\|^2\right]\right)\\
		\leq&\frac{\gamma_s}{\alpha_s}\left[1-\alpha_s-p_s+p_s\sum_{t=1}^{T_{s}}\Gamma_{t-1}\right]\left[f(\tilde{x}^{s-1})-f(x^*)\right]+\frac{1}{2}\|x^{s-1}-x^*\|^2+\sum_{t=1}^{T_s}\Gamma_{t-1}\eta_{s,t}\\
		&+\sum_{t=1}^{T_s}\Gamma_{t-1}\frac{\gamma_s\mu^2L^2d}{\tau}+\sum_{t=1}^{T_s}\Gamma_{t-1}\frac{\gamma_s}{\alpha_s}\cdot\frac{3\mu^2Ld}{2}\\
		\end{aligned}
		\end{equation}
		Denote $\alpha=\alpha_s=\sqrt{\frac{n\tau}{5L}}, p=p_s=\frac{1}{2}, \gamma=\gamma_s=\frac{1}{\sqrt{5n\tau L}}$ Rearranging the terms and summing up the above inequality for $s>s_0$, we get
		\begin{equation}	\label{t2ceq8}
		\begin{aligned}
		&\frac{\gamma}{\alpha}\left[1-\alpha-p+p\sum_{t=1}^{T_{s_0}}\Gamma_{t-1}\right]\mathbb{E}\left[f(\tilde{x}^S)-f(x^*)\right]+\frac{1}{2}\mathbb{E}\left[\|x^S-x^*\|^2\right]\\
		\leq&\left(\frac{1}{\Gamma_{T_s}}\right)^{S-s_0}\left(\frac{\gamma}{\alpha}\left[1-\alpha-p+p\sum_{t=1}^{T_{s_0}}\Gamma_{t-1}\right]\left[f(\tilde{x}^{s_0})-f(x^*)\right]+\frac{1}{2}\|x^{s_0}-x^*\|^2\right)\\
		&+\sum_{k=s_0+1}^S\left(\frac{1}{\Gamma_{T_s}}\right)^{S+1-k}\left(\sum_{t=1}^{T_s}\Gamma_{t-1}\eta_{k,t} +\sum_{t=1}^{T_s}\Gamma_{t-1}\frac{\gamma\mu^2L^2d}{\tau}+\sum_{t=1}^{T_s}\Gamma_{t-1}\frac{\gamma}{\alpha}\cdot\frac{3\mu^2Ld}{2}\right)\\
		\end{aligned}
		\end{equation}
		Since we have
		\begin{equation}	\label{t2ceq7}
		\frac{\gamma}{\alpha}\left[1-\alpha-p+p\sum_{t=1}^{T_{s_0}}\Gamma_{t-1}\right]\geq\frac{\gamma p}{\alpha}\sum_{t=1}^{T_{s_0}}\Gamma_{t-1}\geq\frac{\gamma pT_{s_0}}{\alpha}
		\end{equation}
		Plugging (\ref{t2ceq7}) into (\ref{t2ceq8}), we get
		\begin{equation}	\label{t2ceq9}
		\begin{aligned}
		\mathbb{E}\left[f(\tilde{x}^S)-f(x^*)\right] \leq&\left(\frac{1}{\Gamma_{T_s}}\right)^{S-s_0}\left[\left[f(\tilde{x}^{s_0})-f(x^*)\right]+\frac{\alpha}{\gamma T_{s_0}}\|x^{s_0}-x^*\|^2\right]\\
		&+\sum_{k=s_0+1}^S\left(\frac{1}{\Gamma_{T_s}}\right)^{S+1-k}\left[\frac{2\alpha\sum_{t=1}^{T_s}\Gamma_{t-1}\eta_{k,t}}{\gamma\sum_{t=1}^{T_s}\Gamma_{t-1}} +\frac{2\alpha\mu^2L^2d}{\tau}+3\mu^2Ld\right]\\
		\overset{\textrm{\ding{172}}}{\leq}&\left(\frac{1}{\Gamma_{T_s}}\right)^{S-s_0}\left[\left[f(\tilde{x}^{s_0})-f(x^*)\right]+\frac{\alpha}{\gamma T_{s_0}}\|x^{s_0}-x^*\|^2\right]\\
		&+\left(\frac{1}{\Gamma_{T_s}}\right)^{S-s_0}\frac{3(\log S + 1)}{n}D_0 +\frac{1}{\Gamma_{T_{s_0}}-1}\left[\frac{2\alpha\mu^2L^2d}{\tau}+3\mu^2Ld\right]\\
		\end{aligned}
		\end{equation}
		where \ding{172} comes from the choice of $\eta_{s,t}$ that \[\sum_{k=s_0+1}^S\left(\frac{1}{\Gamma_{T_s}}\right)^{S+1-k}\frac{2\alpha\sum_{t=1}^{T_s}\Gamma_{t-1}\eta_{k,t}}{\gamma\sum_{t=1}^{T_s}\Gamma_{t-1}} =\sum_{k=s_0+1}^{S}\left(\frac{1}{\Gamma_{T_s}}\right)^{S-s_0}\frac{3}{kn}D_0\leq \left(\frac{1}{\Gamma_{T_s}}\right)^{S-s_0}\frac{3(\log S + 1)}{n}D_0\] and \[\sum_{k=s_0+1}^S\left(\frac{1}{\Gamma_{T_s}}\right)^{S+1-k}\leq \frac{1}{\Gamma_{T_{s_0}}-1}\]
		From (\ref{t2aeq8}) we know
		\begin{equation}
		\begin{aligned}
		\mathbb{E}\left[f(\tilde{x}^{s_0})-f(x^*)\right]+\frac{5L}{8T_{s_0}}\mathbb{E}\left[\|x^{s_0}-x^*\|^2\right] \leq \frac{5D_0(\log s_0 + 2)}{2^{s_0+1}}+\frac{\mu^2L^2d}{2\tau}+3\mu^2Ld\\
		\end{aligned}
		\end{equation}
		From the definition of $\alpha, \gamma$ and the assumption that $n<\frac{5L}{4\tau}$, we have $\frac{\alpha}{\gamma}=5L\alpha^2=n\tau\leq \frac{5L}{4}$. Thus we get
		\begin{equation}
		\begin{aligned}	\label{t2ceq10}
		&\mathbb{E}\left[f(\tilde{x}^{s_0})-f(x^*)\right]+\frac{\alpha}{\gamma T_{s_0}}\mathbb{E}\left[\|x^{s_0}-x^*\|^2\right]\\
		\leq&2\left(\mathbb{E}\left[f(\tilde{x}^{s_0})-f(x^*)\right]+\frac{5L}{8T_s}\mathbb{E}\left[\|x^{s_0}-x^*\|^2\right]\right)\\
		\leq&\frac{5D_0(\log s_0 + 2)}{2^{s_0}}+\frac{\mu^2L^2d}{\tau}+6\mu^2Ld\overset{\textrm{\ding{172}}}{\leq}\frac{5D_0(\log s_0 + 2)}{n}+\frac{\mu^2L^2d}{\tau}+6\mu^2Ld\\
		\end{aligned}
		\end{equation}
		where \ding{172} holds since $2^{s_0}\geq n$. Plugging (\ref{t2ceq10}) into (\ref{t2ceq9}), we get
		\begin{equation}	\label{t2ceq11}
		\begin{aligned}
		&\mathbb{E}\left[f(\tilde{x}^S)-f(x^*)\right]\\
		\leq&\left(\frac{1}{\Gamma_{T_s}}\right)^{S-s_0}\left[\frac{8D_0(\log S + 2)}{n}+\frac{\mu^2L^2d}{\tau}+6\mu^2Ld\right] +\frac{1}{\Gamma_{T_{s_0}}-1}\left[\frac{2\alpha\mu^2L^2d}{\tau}+3\mu^2Ld\right]\\
		\end{aligned}
		\end{equation}
		From the definition of $\Gamma_{T_{s_0}}$, we have
		\begin{equation}
		\Gamma_{T_{s_0}}=\left(1+\frac{\tau\gamma}{2}\right)^{T_{s_0}}\geq1+\frac{\tau\gamma}{2}T_{s_0}\geq1+\frac{\tau\gamma n}{4}\overset{\textrm{\ding{172}}}{=}1+\frac{1}{4}\sqrt{\frac{n\tau}{5L}}
		\end{equation}
		where \ding{172} comes from the definition of $\gamma$. Then we have
		\begin{equation}	\label{t2ceq12}
		\frac{1}{\Gamma_{T_{s_0}}-1}\leq 4\sqrt{\frac{5L}{n\tau}}, \; \textrm{ and } \left(\frac{1}{\Gamma_{T_s}}\right)^{S-s_0}\leq \left(1+\frac{1}{4}\sqrt{\frac{n\tau}{5L}}\right)^{-(S-s_0)}
		\end{equation}
		Plugging (\ref{t2ceq12}) into (\ref{t2ceq11}), using the definition of $\alpha, \gamma$, we get
		\begin{equation}
		\begin{aligned}
		&\mathbb{E}\left[f(\tilde{x}^S)-f(x^*)\right]\\
		\leq&\left(1+\frac{1}{4}\sqrt{\frac{n\tau}{5L}}\right)^{-(S-s_0)}\frac{8D_0(\log S + 2)}{n}+\left(\frac{\mu^2L^2d}{\tau}+6\mu^2Ld\right)\left(1+8\sqrt{\frac{5L}{n\tau}}\right)\\
		\end{aligned}
		\end{equation}
		Then we get the desired result for the zeroth-order case. Then we complete the proof.
	\end{proof}

\section{Auxillary Lemmas}	\label{app_d}

\begin{lemma}[Coordinate-wise Gradient Estimator]	\label{lemma7}
	For all $x\in\mathcal{C}$, we have
	\[\|\hat{\nabla}_{coord}f(x)-\nabla f(x)\|^2\leq \mu^2L^2d\]
\end{lemma}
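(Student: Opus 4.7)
The plan is to decompose the squared estimation error coordinate by coordinate using the orthonormality of $\{e_i\}$, then bound each scalar error with the two-sided $L$-smoothness inequality of Definition \ref{def1}.

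First I would write
\[
\hat{\nabla}_{coord}f(x)-\nabla f(x) = \sum_{i=1}^{d}\left(\frac{f(x+\mu e_i)-f(x-\mu e_i)}{2\mu}-\langle\nabla f(x),e_i\rangle\right)e_i,
\]
and since the $e_i$ form an orthonormal basis, take the squared norm to get
\[
\|\hat{\nabla}_{coord}f(x)-\nabla f(x)\|^2 = \sum_{i=1}^{d}\left(\frac{f(x+\mu e_i)-f(x-\mu e_i)}{2\mu}-\langle\nabla f(x),e_i\rangle\right)^2.
\]

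Next, for each fixed coordinate $i$, I would apply Definition \ref{def1} (the $L$-smoothness quadratic bound) with $y=x+\mu e_i$ and with $y=x-\mu e_i$, obtaining
\[
\bigl|f(x+\mu e_i)-f(x)-\mu\langle\nabla f(x),e_i\rangle\bigr|\le\frac{L\mu^2}{2},\qquad \bigl|f(x-\mu e_i)-f(x)+\mu\langle\nabla f(x),e_i\rangle\bigr|\le\frac{L\mu^2}{2}.
\]
Subtracting the second from the first and using the triangle inequality yields $|f(x+\mu e_i)-f(x-\mu e_i)-2\mu\langle\nabla f(x),e_i\rangle|\le L\mu^2$, and dividing by $2\mu$ bounds each scalar error by $L\mu/2$ in absolute value, hence by $L^2\mu^2/4$ after squaring.

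Finally, summing the per-coordinate squared bound over $i=1,\ldots,d$ gives $\|\hat{\nabla}_{coord}f(x)-\nabla f(x)\|^2\le L^2\mu^2 d/4\le L^2\mu^2 d$, which matches the claim. There is no real obstacle here; this is a routine consequence of $L$-smoothness, and the only thing to be careful about is invoking the two-sided form of smoothness from Definition \ref{def1} so that both the $+\mu e_i$ and $-\mu e_i$ perturbations are controlled with the correct sign before subtracting.
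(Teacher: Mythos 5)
Your proof is correct, and it is genuinely different from what the paper does: the paper does not prove Lemma \ref{lemma7} at all, it simply defers to Lemma 3 in the appendix of \cite{ji2019improved}. Your argument is the standard direct one and it is sound at every step: the coordinate decomposition and the passage to a sum of squared scalar errors is valid because $\{e_i\}$ is orthonormal; the two-sided smoothness inequality of Definition \ref{def1} applied at $y=x\pm\mu e_i$ gives the two bounds $\bigl|f(x\pm\mu e_i)-f(x)\mp\mu\langle\nabla f(x),e_i\rangle\bigr|\le L\mu^2/2$, and subtracting them is exactly the right cancellation (the $f(x)$ terms drop and the gradient terms add), so each coordinate error of the central difference is at most $L\mu/2$ and the total squared error is at most $\mu^2L^2d/4$. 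One small point worth noting explicitly: the points $x\pm\mu e_i$ need not lie in $\mathcal{C}$, but this is harmless because smoothness in Definition \ref{def1} is stated on all of $\mathbb{R}^d$, which is also implicitly what the algorithm requires to evaluate the estimator. What your route buys is self-containedness (no external citation needed) and in fact a constant-factor improvement, $\mu^2L^2d/4$ rather than $\mu^2L^2d$, which would propagate as slightly smaller $\mu$-dependent error terms in Lemmas \ref{lemma4}, \ref{lemma2} and the theorems, though the paper's stated bound is of course still valid.
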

\begin{proof}
	See [\cite{ji2019improved}, Appendix, Lemma 3].
\end{proof}

\begin{lemma}	\label{lemma6}
	Suppose each $f_{i\in[n]}$ is $L$-smooth, for any $x, y\in\mathcal{C}$, we have \[\mathbb{E}\left[\|\nabla f_i(x)-\nabla f_i(y)\|^2\right]\leq 2L\left(f(x)-f(y)-\langle\nabla f(y), x-y\rangle\right)\]
\end{lemma}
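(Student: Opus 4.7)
The plan is to reduce the claim to the well-known fact that a convex $L$-smooth function attaining its minimum satisfies a $\tfrac{1}{2L}\|\nabla\|^2$ lower bound on its suboptimality. Since Lemma \ref{lemma6} will be invoked under Assumption \ref{a1} or \ref{a2}, I may assume each $f_i$ is convex in addition to being $L$-smooth (this is needed to run the argument; the literature version of the cited fact, \emph{co-coercivity of the gradient}, relies on convexity).

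First, for each index $i\in[n]$, I would introduce the auxiliary function
\[
g_i(z) := f_i(z) - f_i(y) - \langle \nabla f_i(y),\, z - y\rangle.
\]
Because $f_i$ is convex and $L$-smooth, $g_i$ inherits both properties, and $\nabla g_i(y)=0$, $g_i(y)=0$, so $y$ is a global minimizer of $g_i$ with minimal value $0$.

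Next, I would apply the standard $L$-smooth descent inequality to $g_i$ at the point $w := x - \tfrac{1}{L}\nabla g_i(x)$, namely $g_i(w)\le g_i(x)+\langle \nabla g_i(x), w-x\rangle + \tfrac{L}{2}\|w-x\|^2$. Substituting $w-x = -\tfrac{1}{L}\nabla g_i(x)$ and using $g_i(w)\ge 0$ yields
\[
0 \le g_i(x) - \frac{1}{2L}\|\nabla g_i(x)\|^2,
\]
which rearranges, via $\nabla g_i(x)=\nabla f_i(x)-\nabla f_i(y)$, to
\[
\|\nabla f_i(x) - \nabla f_i(y)\|^2 \;\le\; 2L\bigl(f_i(x) - f_i(y) - \langle \nabla f_i(y), x-y\rangle\bigr).
\]

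Finally, taking expectation with respect to $i$ drawn uniformly from $[n]$, linearity gives $\mathbb{E}[f_i(\cdot)]=f(\cdot)$ and $\mathbb{E}[\nabla f_i(y)]=\nabla f(y)$, so the right-hand side becomes $2L(f(x)-f(y)-\langle \nabla f(y), x-y\rangle)$, establishing the lemma. There is no real obstacle here; the only point of care is the implicit use of convexity of each $f_i$, which is available from the assumptions in force whenever the lemma is applied.
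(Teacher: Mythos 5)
Your proof is correct and follows essentially the same route as the paper: both define the Bregman-type auxiliary function $\phi_i(z)=f_i(z)-f_i(y)-\langle\nabla f_i(y),z-y\rangle$, use that $y$ is its global minimizer with value $0$ (which, as you rightly flag, needs convexity of each $f_i$, available from the problem setup), apply the smoothness descent step to get $\frac{1}{2L}\|\nabla\phi_i(x)\|^2\le\phi_i(x)$, and then average over $i$. The only cosmetic difference is that you plug in the step size $1/L$ directly where the paper minimizes over the step size $\alpha$, which yields the same bound.
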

\begin{proof}
	Denote $\phi_i(x)= f_i(x)-f(y)-\langle\nabla f(y), x-y\rangle$. It is easy to verify that $\phi_i$ is also $L$-smooth. Clearly $\nabla \phi_i(y) = 0$ and hence $\min_{x\in\mathcal{C}}\phi_i(x) = \phi_i(y) = 0$. Then for $\alpha\in\mathbb{R}$, we have
	\begin{equation}
	\begin{aligned}
	\phi_i(y)&\leq\min_\alpha\left\{\phi_i(x-\alpha \nabla \phi_i(x))\right\}\\
	&\overset{\textrm{\ding{172}}}{\leq} \min_\alpha\left\{\phi_i(x)-\alpha\|\nabla \phi_i(x)\|^2+\frac{L\alpha^2}{2}\|\nabla \phi_i(x)\|^2\right\} = \phi_i(x)-\frac{1}{2L}\|\nabla \phi_i(x)\|^2
	\end{aligned}
	\end{equation}
	where \ding{172} comes from the smoothness of $\phi_i$. Rearranging the terms and using the definition of $\phi_i$ we get
	\begin{equation}
	\|\nabla f_i(x)-\nabla f_i(y)\|^2\leq 2L\left(f_i(x)-f_i(y)-\langle\nabla f_i(y), x-y\rangle\right)
	\end{equation}
	Taking expectation with respect to $i$, we get
	\begin{equation}
	\mathbb{E}\left[\|\nabla f_i(x)-\nabla f_i(y)\|^2\right]\leq 2L\left(f(x)-f(y)-\langle\nabla f(y), x-y\rangle\right)
	\end{equation}
	Then we complete the proof.
\end{proof}

\begin{lemma}	\label{lemma4}
	Suppose each $f_{i\in[n]}$ is $L$-smooth. Conditioning on $x_1, ..., x_{t-1}$
	
	$\\\bullet$ For the first-order case, we have \[\mathbb{E}\left[\delta_t\right] = 0\] and \[\mathbb{E}\left[\|\delta_t-\mathbb{E}\left[\delta_t\right]\|^2\right]\leq 2L\left[f(\tilde{x})-f(\underline{x}_t)-\langle\nabla f(\underline{x}_t), \tilde{x}-\underline{x}_t\rangle\right]\]
	$\bullet$ For the zeroth-order case, we have
	\[\mathbb{E}\left[\delta_t\right] = \hat{\nabla}_{coord}f(\underline{x}_t)-\nabla f(\underline{x}_t) \neq 0\] and
	\[\mathbb{E}\left[\|\delta_t-\mathbb{E}\left[\delta_t\right]\|^2\right]\leq 8L\left(f(\tilde{x})-f(\underline{x}_t)-\langle\nabla f(\underline{x}_t), \tilde{x}-\underline{x}_t\rangle\right) + 12\mu^2L^2d\]
	where the expectation is taken with respect to all variables.
\end{lemma}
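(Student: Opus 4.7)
The plan is to unpack the definition of $G_t$ in each of the two cases, take the conditional expectation over the index $i_t$, and then bound the variance using the well-known identity $\mathbb{E}\|Y-\mathbb{E} Y\|^2\leq\mathbb{E}\|Y\|^2$, together with the ``smooth implies co-coercivity'' bound already established in Lemma \ref{lemma6} and the estimator-error bound of Lemma \ref{lemma7}.

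First I would handle the first-order case. Substituting $G_t=\nabla f_{i_t}(\underline{x}_t)-\nabla f_{i_t}(\tilde{x})+\nabla f(\tilde{x})$ into $\delta_t=G_t-\nabla f(\underline{x}_t)$ yields $\delta_t=[\nabla f_{i_t}(\underline{x}_t)-\nabla f_{i_t}(\tilde{x})]-[\nabla f(\underline{x}_t)-\nabla f(\tilde{x})]$. Since $i_t$ is drawn uniformly from $\{1,\dots,n\}$, the first bracket is the unbiased stochastic counterpart of the second, so $\mathbb{E}[\delta_t]=0$ immediately. For the variance, set $Y=\nabla f_{i_t}(\underline{x}_t)-\nabla f_{i_t}(\tilde{x})$; then $\delta_t=Y-\mathbb{E}[Y]$, and $\mathbb{E}\|\delta_t\|^2=\mathbb{E}\|Y-\mathbb{E}[Y]\|^2\leq\mathbb{E}\|Y\|^2$, which by Lemma \ref{lemma6} (applied with $x=\tilde x$, $y=\underline{x}_t$) is at most $2L[f(\tilde x)-f(\underline{x}_t)-\langle\nabla f(\underline{x}_t),\tilde x-\underline{x}_t\rangle]$, which is the desired inequality.

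The zeroth-order case proceeds analogously but with bias. Writing $G_t=\hat{\nabla}_{coord}f_{i_t}(\underline{x}_t)-\hat{\nabla}_{coord}f_{i_t}(\tilde{x})+\hat{\nabla}_{coord}f(\tilde{x})$ and taking $\mathbb{E}_{i_t}$ gives $\mathbb{E}[\delta_t]=\hat{\nabla}_{coord}f(\underline{x}_t)-\nabla f(\underline{x}_t)$, exactly the estimator bias at $\underline{x}_t$. For the variance, a short calculation shows that $\delta_t-\mathbb{E}[\delta_t]=Y-\mathbb{E}[Y]$, where now $Y=\hat{\nabla}_{coord}f_{i_t}(\underline{x}_t)-\hat{\nabla}_{coord}f_{i_t}(\tilde{x})$, so again $\mathbb{E}\|\delta_t-\mathbb{E}[\delta_t]\|^2\leq\mathbb{E}\|Y\|^2$. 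The trick is then to split
\[\hat{\nabla}_{coord}f_i(\underline{x}_t)-\hat{\nabla}_{coord}f_i(\tilde{x})=\bigl[\hat{\nabla}_{coord}f_i(\underline{x}_t)-\nabla f_i(\underline{x}_t)\bigr]+\bigl[\nabla f_i(\underline{x}_t)-\nabla f_i(\tilde{x})\bigr]+\bigl[\nabla f_i(\tilde{x})-\hat{\nabla}_{coord}f_i(\tilde{x})\bigr],\]
apply a Young-type inequality $\|a+b+c\|^2\leq C_1\|b\|^2+C_2(\|a\|^2+\|c\|^2)$ with constants tuned to reach the advertised factors $8$ and $12$, and then bound the middle term via Lemma \ref{lemma6} (contributing the $8L[f(\tilde x)-f(\underline{x}_t)-\langle\nabla f(\underline{x}_t),\tilde x-\underline{x}_t\rangle]$ piece) and the two outer terms via Lemma \ref{lemma7} applied to each $f_i$ (contributing the $12\mu^2L^2d$ piece, with the extra factor compared to Lemma \ref{lemma7} arising from both the two endpoints and the Young weights).

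The main obstacle will be the constant bookkeeping in the zeroth-order variance bound: one has to choose the Young weights so that the smoothness-based term inherits a clean coefficient while the two estimator-error terms aggregate to $12\mu^2L^2d$. Everything else—unbiasedness in the first-order case, computing the residual bias in the zeroth-order case, and reducing the variance to $\mathbb{E}\|Y\|^2$—is mechanical once the decomposition is in place.
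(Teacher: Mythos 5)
Your proposal is correct and follows essentially the same route as the paper: reduce the variance to a second moment, split the coordinate-estimator difference into the true stochastic-gradient difference plus estimator-error terms, and invoke Lemma \ref{lemma6} for the former and Lemma \ref{lemma7} (applied to each $L$-smooth $f_i$) for the latter. The only, harmless, difference is that you center $Y=\hat{\nabla}_{coord}f_{i_t}(\underline{x}_t)-\hat{\nabla}_{coord}f_{i_t}(\tilde{x})$ and use a three-term split, which actually yields constants $6L$ and $6\mu^2L^2d$ (tighter than the stated $8L$ and $12\mu^2L^2d$, so the claimed bound follows a fortiori), whereas the paper bounds $\mathbb{E}\|\delta_t\|^2$ directly via a four-term split that also includes the estimator error of $\tilde{g}$ at $\tilde{x}$, producing exactly the factors $8$ and $12$.
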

\begin{proof}
	The part for the first-order case is proved in [\cite{lan2019unified}, Lemma 3]. Now we give a proof to the zeroth-order case. For the zeroth-order case, we have
	\begin{equation}
	\begin{aligned}
	\mathbb{E}\left[\delta_t\right] &= \mathbb{E}\left[\hat{\nabla}_{coord}f_{i_t}(\underline{x}_t)-\hat{\nabla}_{coord}f_{i_t}(\tilde{x})+\tilde{g}-\nabla f(\underline{x}_t)\right]\\
	&=\mathbb{E}\left[\hat{\nabla}_{coord}f_{i_t}(\underline{x}_t)-\nabla f(\underline{x}_t)\right] =\hat{\nabla}_{coord}f(\underline{x}_t)-\nabla f(\underline{x}_t)
	\end{aligned}
	\end{equation}
	Then we prove the upper bound of $\mathbb{E}\left[\|\delta_t-\mathbb{E}\left[\delta_t\right]\|^2\right]$. We have
	\begin{equation}	\label{l4eq1}
	\begin{aligned}
	&\mathbb{E}\left[\|\delta_t-\mathbb{E}\left[\delta_t\right]\|^2\right] \overset{\textrm{\ding{172}}}{\leq}\mathbb{E}\left[\|\delta_t\|^2\right]\\
	=& \mathbb{E}\left[\|\left(\nabla f_{i_t}\left(\underline{x}_t\right)-\nabla f_{i_t}(\tilde{x})-\left[\nabla f(\underline{x}_t) - \nabla f(\tilde{x})\right]\right) + \left(\hat{\nabla}_{coord}f_{i_t}(\underline{x}_t)-\nabla f_{i_t}(\underline{x}_t)\right)\right. \\ &\left.- \left(\hat{\nabla}_{coord}f_{i_t}(\tilde{x})-\nabla f_{i_t}(\tilde{x})\right) + \left(\hat{\nabla}_{coord}f(\tilde{x})-\nabla f(\tilde{x})\right)\|^2\right] \\
	\overset{\textrm{\ding{173}}}{\leq}& 4\mathbb{E}\left[\|\nabla f_{i_t}\left(\underline{x}_t\right)-\nabla f_{i_t}(\tilde{x})-\left[\nabla f(\underline{x}_t) - \nabla f(\tilde{x})\right]\|^2 + \|\hat{\nabla}_{coord}f_{i_t}(\underline{x}_t)-\nabla f_{i_t}(\underline{x}_t)\|^2\right. \\ &\left.+ \|\hat{\nabla}_{coord}f_{i_t}(\tilde{x})-\nabla f_{i_t}(\tilde{x})\|^2 + \|\hat{\nabla}_{coord}f(\tilde{x})-\nabla f(\tilde{x})\|^2\right] \\
	\overset{\textrm{\ding{174}}}{\leq}& 4\mathbb{E}\left[\|\nabla f_{i_t}\left(\underline{x}_t\right)-\nabla f_{i_t}(\tilde{x})\|^2\right] + 12\mu^2L^2d \\
	\overset{\textrm{\ding{175}}}{\leq}& 8L\left(f(\tilde{x})-f(\underline{x}_t)-\langle\nabla f(\underline{x}_t), \tilde{x}-\underline{x}_t\rangle\right) + 12\mu^2L^2d
	\end{aligned}
	\end{equation}
	where \ding{172} comes from $\mathbb{E}\left[\|x-\mathbb{E}\left[x\right]\|^2\right]=\mathbb{E}\left[\|x\|^2\right]-\mathbb{E}\left[x\right]^2\leq\mathbb{E}\left[\|x\|^2\right]$, \ding{173} comes from the Cauchy-Schwarz inequality, \ding{174} comes from $\mathbb{E}\left[\|x-\mathbb{E}\left[x\right]\|^2\right]\leq\mathbb{E}\left[\|x\|^2\right]$ and Lemma \ref{lemma7}, \ding{175} comes from Lemma \ref{lemma6}. Then we complete the proof.
\end{proof}

\section{The STORC Algorithm}	\label{app_e}

In this section. we include the STORC algorithm proposed by \cite{hazan2016variance} and its key theorems for completeness.

\numberwithin{algorithm}{section}

\begin{algorithm}
	\caption{STOchastic variance-Reduced Conditional gradient sliding (STORC)}
	\label{storc}
	\begin{algorithmic}[1]
		\STATE {\bfseries Input:} $x_0\in\mathcal{C}, \{T_s\}, \{\gamma_{s,t}\}, \{\alpha_{s,t}\}, \{\eta_{s,t}\}$
		\STATE Set $\tilde{x}^0 = x^0$.
		\FOR {$s=1, 2, ...$}
		\STATE Set $x_0 = \bar{x}_0 = \tilde{x} = \tilde{x}^{s-1}$ and $\tilde{g}=\nabla f(\tilde{x})$
		\STATE Set $T=T_s$.
		\FOR {$t = 1, ..., T$}
		\STATE Pick $\mathcal{I}_t\subset\{1, ..., n\}$ randomly with $|\mathcal{I}_t| = m_{s,t}$
		\STATE Set $\underline{x}_t = (1-\alpha_{s,t})\bar{x}_{t-1}+\alpha_{s,t}x_{t-1}$
		\STATE Set $G_t = \frac{1}{m_{s,t}}\sum_{i\in\mathcal{I}_{t}}\left[\nabla f_{i}(\underline{x}_t)-\nabla f_{i}(\tilde{x})+\tilde{g}\right]$
		\STATE $x_t=\textrm{CondG}(G_t, x_{t-1}, 0, \gamma_{s,t}, 0, \eta_{s,t})$ \label{condg} \hspace{60pt}\textit{// Algorithm \ref{algo2}}
		\STATE $\bar{x}_t = (1-\alpha_{s,t})\bar{x}_{t-1}+\alpha_{s,t}x_t$.
		\ENDFOR
		\STATE Set $\tilde{x}^s = \bar{x}_t$.
		\ENDFOR
	\end{algorithmic}
\end{algorithm}


\begin{theorem}[ 2 of \cite{hazan2016variance}]
	With the following parameters (where $D_s$ is defined later below):
	\[\alpha_{s,t} = \frac{2}{t+1}, \hspace{20pt} \gamma_{s,t} = \frac{t}{3L}, \hspace{20pt} \eta_{s,t} = \frac{2D_s^2}{3T_s}\]
	Algorithm \ref{storc} ensures $\mathbb{E}\left[f(\tilde{x}^S)-f(x^*)\right]\leq \frac{LD^2}{2^{S+1}}$ if any of the following three cases holds:
	\begin{itemize}
		\item[(a)] $\nabla f(x^*) = 0$ and $D_s = D$, $T_s = \lceil 2^{s/2+2}\rceil$, $m_{s,t} = 900T_s$.
		\item[(b)] $f$ is $G$-Lipschitz and $D_s = D$, $T_s = \lceil 2^{s/2+2}\rceil$, $m_{s,t} = 700T_s + \frac{24T_sG(t+1)}{LD}$.
		\item[(c)] $f$ is $\tau$-strongly convex and $D_s = \frac{LD^2}{\tau 2^{s-1}}$, $T_s = \lceil\sqrt{\frac{32L}{\tau}}\rceil$, $m_{s,t} = \frac{5600T_sL}{\tau}$.
	\end{itemize}
\end{theorem}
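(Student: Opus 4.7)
The plan is to prove the three bounds together by induction on the epoch index $s$, using the hypothesis $\mathbb{E}[f(\tilde x^{s-1})-f(x^*)]\le LD^2/2^s$ and showing it propagates to epoch $s$. The inner loop of Algorithm \ref{storc} is exactly the stochastic CGS scheme (three-sequence acceleration combined with the \textsc{CondG} subroutine) driven by the variance-reduced estimator $G_t=\frac{1}{m_{s,t}}\sum_{i\in\mathcal I_t}\bigl[\nabla f_i(\underline x_t)-\nabla f_i(\tilde x)+\tilde g\bigr]$. So the natural route is: (i) derive a one-epoch convergence guarantee for the accelerated SCGS recursion with a stochastic oracle of bounded variance and with $\eta_{s,t}$ tolerance per inner step, then (ii) bound the variance of $G_t$ case-by-case in a way that is controlled by the previous epoch's error, and finally (iii) pick $T_s$, $m_{s,t}$, $\eta_{s,t}$ so the per-epoch recursion contracts by a factor of $1/2$.

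For step (i), using $\alpha_{s,t}=2/(t+1)$, $\gamma_{s,t}=t/(3L)$ and the CondG accuracy $\eta_{s,t}$, a standard telescoping of the one-step inequality (of the same flavour as Lemma \ref{lemma1}, but with the un-strongly-convex choice $\tau=0$) yields
\begin{equation*}
\mathbb{E}\bigl[f(\tilde x^s)-f(x^*)\bigr]\;\le\;\frac{6L\,\|x_0-x^*\|^2}{T_s(T_s+1)}+\frac{2}{T_s(T_s+1)}\sum_{t=1}^{T_s}\frac{t^2}{3L}\,\mathbb{E}\|G_t-\nabla f(\underline x_t)\|^2+\frac{2}{T_s(T_s+1)}\sum_{t=1}^{T_s}(t+1)\eta_{s,t}.
\end{equation*}
The choice $\eta_{s,t}=2D_s^2/(3T_s)$ makes the last term $O(D_s^2)$ times a universal constant. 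What remains is to control the variance term $\mathbb{E}\|G_t-\nabla f(\underline x_t)\|^2$.

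For step (ii), because $\mathcal I_t$ is an i.i.d.\ minibatch of size $m_{s,t}$ from $\{1,\dots,n\}$, the variance factors as $\frac{1}{m_{s,t}}\,\mathrm{Var}_i\bigl[\nabla f_i(\underline x_t)-\nabla f_i(\tilde x)\bigr]\le\frac{1}{m_{s,t}}\,\mathbb{E}_i\|\nabla f_i(\underline x_t)-\nabla f_i(\tilde x)\|^2$, and by a Lemma~\ref{lemma6}-type identity this is at most $\tfrac{2L}{m_{s,t}}\bigl[f(\tilde x)-f(\underline x_t)-\langle\nabla f(\underline x_t),\tilde x-\underline x_t\rangle\bigr]\le \tfrac{4L^2}{m_{s,t}}\|\underline x_t-\tilde x\|^2$. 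The three cases then diverge in how $\|\underline x_t-\tilde x\|$ is bounded:
\begin{itemize}
\item[(a)] With $\nabla f(x^*)=0$, each $\nabla f_i(\underline x_t)-\nabla f_i(\tilde x)$ can be bounded via smoothness by $L\|\underline x_t-\tilde x\|\le L\cdot 2D$, giving a flat $O(L^2D^2)$ variance cap, and $m_{s,t}=900T_s$ forces the variance term into $O(LD_s^2/T_s)$.
\item[(b)] When only $\|\nabla f(x^*)\|\le G$ is available, one extra additive piece of order $G^2$ appears in the variance bound; the $t$-dependent part of $m_{s,t}=700T_s+24T_sG(t+1)/(LD)$ is tuned to absorb that $t^2 G^2$ contribution from the weight $t^2/(3L)$.
\item[(c)] Under $\tau$-strong convexity, $\|\underline x_t-\tilde x\|^2$ is bounded via $(f(\tilde x)-f(x^*))/\tau$ plus diameter terms, coupling the variance directly to the previous epoch's suboptimality; here $T_s=\lceil\sqrt{32L/\tau}\rceil$ is fixed, $D_s$ shrinks geometrically, and $m_{s,t}=5600T_sL/\tau$ is what the variance/$\tau$ accounting demands.
\end{itemize}

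In step (iii), plugging each of the above variance bounds into the one-epoch estimate, one checks that
\begin{equation*}
\mathbb{E}\bigl[f(\tilde x^s)-f(x^*)\bigr]\;\le\;\frac{LD^2}{2\cdot 2^s}+\frac12\,\mathbb{E}\bigl[f(\tilde x^{s-1})-f(x^*)\bigr],
\end{equation*}
and the induction closes to give the desired $LD^2/2^{s+1}$ bound. The real technical obstacle is the case-by-case variance accounting: the weights $t^2/L$ multiplying the variance conspire with the minibatch size so that the variance contribution matches (not dominates) the $LD^2/T_s^2$ geometric term, and the constants $900$, $700$, $5600$ are exactly the ones that survive after tracking Cauchy--Schwarz, Lemma~\ref{lemma6}, and the $\eta_{s,t}$ tolerance simultaneously. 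Getting the strongly-convex case (c) to contract geometrically despite a constant $T_s$ is the subtlest piece, because there the geometric decay must come entirely from $D_s=LD^2/(\tau 2^{s-1})$ feeding back through the variance bound, rather than from growing $T_s$.
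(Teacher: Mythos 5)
Your overall architecture is the right one and matches the source: note that the paper itself does not reprove this theorem (it cites \cite{hazan2016variance} and reproduces with proof only the inner-loop lemma, stated as Lemma 3 of that reference, whose hypothesis is $\mathbb{E}\|G_k-\nabla f(\underline{x}_k)\|^2\le \frac{L^2D_s^2}{T_s(k+1)^2}$ and whose conclusion $\mathbb{E}[f(\bar x_t)-f(x^*)]\le \frac{8LD_s^2}{t(t+1)}$ yields $LD^2/2^{S+1}$ once $T_s=\lceil 2^{s/2+2}\rceil$), so an epoch induction plus a case-wise variance verification is exactly what is needed. However, two of your concrete steps fail. First, the tolerance accounting: with $\alpha_{s,k}=2/(k+1)$, $\gamma_{s,k}=k/(3L)$ and $\Gamma_k=2/(k(k+1))$, the weight multiplying $\eta_{s,k}$ in the unrolled one-epoch bound is $\frac{\alpha_{s,k}}{\Gamma_k\,\gamma_{s,k}}=3L$, a \emph{constant} in $k$, so the total contribution of the tolerances is $\frac{2}{T_s(T_s+1)}\sum_{k}3L\,\eta_{s,k}=\frac{4LD_s^2}{T_s(T_s+1)}$, i.e.\ it decays at the same $1/T_s^2$ rate as the leading term. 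Your weight $(t+1)\eta_{s,t}$ and your claim that this term is ``$O(D_s^2)$ times a universal constant'' are both incorrect: a contribution of order $D_s^2=D^2$ (or even $D^2/T_s$) does not decay like $2^{-s}$ in cases (a) and (b), so the recursion in your step (iii) cannot close.

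Second, and more seriously, your case (a) variance bound is insufficient. A flat cap $\sigma_t^2\le O(L^2D^2/m_{s,t})$ with $m_{s,t}=900T_s$ gives a weighted variance contribution $\Gamma_{T_s}\sum_{k}\frac{k^2}{2L}\sigma_k^2=\Theta(LD^2)$, a non-vanishing constant, and it also violates the $(k+1)^{-2}$ decay demanded by the inner-loop lemma's hypothesis. The condition $\nabla f(x^*)=0$ is not decoration: it is what allows $\mathbb{E}_i\|\nabla f_i(\underline x_t)-\nabla f_i(\tilde x)\|^2$ to be bounded by function-value gaps at $x^*$ (via the Lemma~\ref{lemma6}-type inequality applied at $y=x^*$), and those gaps decay within the epoch through the inner-loop guarantee at $\bar x_{t-1}$ together with the factor $\alpha_{s,t}=2/(t+1)$ entering $\underline x_t$, and decay across epochs through the induction hypothesis. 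You state the condition but never use it, so your case (a) argument would not establish the required $\frac{L^2D_s^2}{T_s(k+1)^2}$ bound. The same caveat applies to (b) and (c): your descriptions are directionally plausible, but the precise $t$- and $s$-dependence of the variance is exactly where the constants $900$, $700$, $5600$ come from, and that accounting is the substance of the proof rather than a detail to be waved at.
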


From the following proof (especially (\ref{key})), we can see clearly how the decrease of $\alpha_{s,t}$ helps lower down the linear oracle complexity and raise the gradient query complexity.
\begin{lemma}[ 3 of \cite{hazan2016variance}]
	Suppose $0\leq D_s\leq D$ is such that $\mathbb{E}\left[\|\bar{x}_0-x^*\|^2\right] \leq D_s^2$. For any $t$, we have $\mathbb{E}\left[f(\bar{x}_t)-f(x^*)\right]\leq\frac{8LD_s^2}{t(t+1)}$ if $\mathbb{E}\left[\|G_k-\nabla f(\underline{x}_k)\|^2\right]\leq\frac{L^2D_s^2}{T_s(k+1)^2}$ for all $k\leq t$.
\end{lemma}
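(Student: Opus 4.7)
The plan is to mirror the standard accelerated-gradient convergence analysis, adapted to the sliding framework and to the stochastic blended gradient $G_t$. First I would combine three ingredients: (i) $L$-smoothness of $f$ at $\underline{x}_t$, together with the identity $\bar{x}_t - \underline{x}_t = \alpha_{s,t}(x_t - x_{t-1})$ which follows straight from the definitions of $\bar{x}_t$ and $\underline{x}_t$; (ii) the \textrm{CondG} output guarantee applied to the quadratic $h_t'(x) = \gamma_{s,t}\langle G_t, x\rangle + \tfrac{1}{2}\|x - x_{t-1}\|^2$, which gives $\gamma_{s,t}\langle G_t, x_t - x\rangle + \langle x_t - x_{t-1}, x_t - x\rangle \leq \eta_{s,t}$ for every $x \in \mathcal{C}$ (Algorithm \ref{algo2}); and (iii) convexity of $f$ at $\underline{x}_t$ to turn $\langle \nabla f(\underline{x}_t), \underline{x}_t - x^*\rangle$ into an optimality gap $f(\underline{x}_t) - f(x^*)$. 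A one-step bound of the form $f(\bar{x}_t) - f(x^*) \leq (1-\alpha_{s,t})[f(\bar{x}_{t-1}) - f(x^*)] + \tfrac{\alpha_{s,t}}{2\gamma_{s,t}}[\|x_{t-1} - x^*\|^2 - \|x_t - x^*\|^2] + (\text{noise/error})$ is the target.

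\textbf{Recursion and integrating factor.}
Expanding smoothness yields a cross term $\alpha_{s,t}\langle \nabla f(\underline{x}_t), x_t - x_{t-1}\rangle$, which I would split as $\alpha_{s,t}\langle G_t, x_t - x_{t-1}\rangle + \alpha_{s,t}\langle \nabla f(\underline{x}_t) - G_t, x_t - x_{t-1}\rangle$; the first piece is rewritten via the \textrm{CondG} inequality with $x = x^*$ (scaled by $\alpha_{s,t}/\gamma_{s,t}$) to produce a telescoping quadratic in $\|x_\cdot - x^*\|^2$, while the second is absorbed by Young's inequality into the $\|x_t - x_{t-1}\|^2$ slack together with an $\|G_t - \nabla f(\underline{x}_t)\|^2$ term. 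The quadratic slack is non-negative as long as $L\alpha_{s,t}\gamma_{s,t} \leq 1$, which the prescribed choice $\alpha_{s,t}\gamma_{s,t} = \tfrac{2t}{3L(t+1)}$ ensures. Multiplying the one-step bound by the integrating factor $t(t+1)$ — the natural weight because $\alpha_{s,t} = 2/(t+1)$ makes the coefficient of $f(\bar{x}_{t-1}) - f(x^*)$ equal to $(t-1)t/[t(t+1)] \cdot t(t+1) = (t-1)t$, and $1/\gamma_{s,t}$ is proportional to $1/t$ — causes successive $f(\bar{x}_\cdot) - f(x^*)$ and $\|x_\cdot - x^*\|^2$ terms to telescope cleanly.

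\textbf{Summation and constant.}
Summing the weighted inequality from $k = 1$ to $t$ and taking total expectation, the telescoped distance contributes at most $\tfrac{3L}{2}\mathbb{E}\|\bar{x}_0 - x^*\|^2 \leq \tfrac{3L}{2} D_s^2$ by hypothesis. The $\eta_{s,k}$-terms contribute $\sum_{k=1}^{t} k(k+1) \cdot \tfrac{\alpha_{s,k}}{\gamma_{s,k}} \eta_{s,k} = \mathcal{O}(t \cdot L \cdot \eta_{s,k})$, and with $\eta_{s,k} = 2D_s^2/(3T_s)$ (and $T_s \geq t$ in each of cases (a)–(c)) this is $\mathcal{O}(LD_s^2)$. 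The noise residual contributes $\sum_{k=1}^{t} k(k+1) \cdot \tfrac{\alpha_{s,k}\gamma_{s,k}}{2} \mathbb{E}\|G_k - \nabla f(\underline{x}_k)\|^2$, and by the hypothesis $\mathbb{E}\|G_k - \nabla f(\underline{x}_k)\|^2 \leq L^2 D_s^2 / [T_s (k+1)^2]$ this sum is $\mathcal{O}(t L D_s^2 / T_s) = \mathcal{O}(L D_s^2)$. Tracking constants carefully pins the total at $4 L D_s^2$, so dividing by $t(t+1)$ yields the claimed $8 L D_s^2 / [t(t+1)]$ bound.

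\textbf{Main obstacle.}
The main obstacle will be the stochastic cross term $\alpha_{s,t}\langle G_t - \nabla f(\underline{x}_t), x_t - x^*\rangle$, because unlike in Lemma \ref{lemma2} above, STORC does not have a strong-convexity term $-\tfrac{c\tau\gamma}{2}\|x_t - x^*\|^2$ available to absorb it. The route I would take is to decompose $x_t - x^* = (x_t - x_{t-1}) + (x_{t-1} - x^*)$: the first piece is handled by Young's inequality against the $\|x_t - x_{t-1}\|^2$ slack carved out in the previous step, and for the second piece I would take conditional expectation given the history up to time $t-1$ (so that $x_{t-1} - x^*$ is deterministic and $\mathbb{E}[G_t - \nabla f(\underline{x}_t) \mid \mathcal{F}_{t-1}] = 0$ by the STORC unbiasedness), killing the linear part entirely. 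Only the second moment of $G_t - \nabla f(\underline{x}_t)$ then survives, matching exactly the quantity bounded by the hypothesis, and the mini-batch sizes $m_{s,t}$ prescribed in the theorem are precisely what drive that variance to the required $L^2 D_s^2 / [T_s (k+1)^2]$ rate.
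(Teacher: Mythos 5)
Your proposal is correct and follows essentially the same route as the paper's proof: smoothness plus convexity at $\underline{x}_t$, the CondG guarantee turned into a three-point telescoping quadratic, Young's inequality absorbing the $\langle G_t-\nabla f(\underline{x}_t),\,x_t-x_{t-1}\rangle$ piece into the $\|x_t-x_{t-1}\|^2$ slack (valid since $L\alpha_{s,t}\gamma_{s,t}<1$), unbiasedness killing $\langle G_t-\nabla f(\underline{x}_t),\,x_{t-1}-x^*\rangle$, and weighted telescoping with weights proportional to $t(t+1)$ (the paper's $\Gamma_t^{-1}$), with the same constant bookkeeping (using $t\le T_s$) yielding $8LD_s^2/[t(t+1)]$. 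No substantive gap.
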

\begin{proof}
	Since $f$ is $L$-smooth, then we have
	\begin{equation}
		\begin{aligned}
			&f(\bar{x}_t)\leq l_{f}(\underline{x}_t,\bar{x}_t)+\frac{L}{2}\|\bar{x}_t-\underline{x}_t\|^2\\
			\overset{\textrm{\ding{172}}}{=}&(1-\alpha_{s,t})l_{f}(\underline{x}_t,\bar{x}_{t-1})+\alpha_{s,t}l_{f}(\underline{x}_t,x^*)+\alpha_{s,t}\langle \nabla f(\underline{x}_t), x_t-x^*\rangle+\frac{L\alpha_{s,t}^2}{2}\|x_t-x_{t-1}\|^2\\
			\overset{\textrm{\ding{173}}}{\leq}&(1-\alpha_{s,t})f(\bar{x}_{t-1})+\alpha_{s,t}f(x^*)+\alpha_{s,t}\langle \nabla f(\underline{x}_t), x_t-x^*\rangle+\frac{L\alpha_{s,t}^2}{2}\|x_t-x_{t-1}\|^2\\
			=& (1-\alpha_{s,t})f(\bar{x}_{t-1})+\alpha_{s,t}f(x^*)+\alpha_{s,t}\langle G_t, x_t-x^*\rangle+\frac{L\alpha_{s,t}^2}{2}\|x_t-x_{t-1}\|^2 + \alpha_{s,t}\langle \delta_t, x^*-x_t\rangle\\
			\overset{\textrm{\ding{174}}}{\leq}&(1-\alpha_{s,t})f(\bar{x}_{t-1})+\alpha_{s,t}f(x^*)+\frac{\alpha_{s,t}}{\gamma_{s,t}}\eta_{s,t} - \frac{\alpha_{s,t}}{\gamma_{s,t}}\langle x_t-x_{t-1}, x_t-x^*\rangle\\&+\frac{L\alpha_{s,t}^2}{2}\|x_t-x_{t-1}\|^2 + \alpha_{s,t}\langle \delta_t, x^*-x_t\rangle\\
			=&(1-\alpha_{s,t})f(\bar{x}_{t-1})+\alpha_{s,t}f(x^*)+\frac{\alpha_{s,t}}{\gamma_{s,t}}\eta_{s,t} + \frac{\alpha_{s,t}}{2\gamma_{s,t}}\left(\|x_{t-1}-x^*\|^2-\|x_t-x^*\|^2\right)\\&+\frac{\alpha_{s,t}}{2}\left[\left(L\alpha_{s,t}-\frac{1}{\gamma_{s,t}}\right)\|x_t-x_{t-1}\|^2 + 2\langle \delta_t, x_{t-1}-x_t\rangle + 2\langle \delta_t, x^*-x_{t-1}\rangle\right]\\
			\overset{\textrm{\ding{175}}}{\leq}&(1-\alpha_{s,t})f(\bar{x}_{t-1})+\alpha_{s,t}f(x^*)+\frac{\alpha_{s,t}}{\gamma_{s,t}}\eta_{s,t} + \frac{\alpha_{s,t}}{2\gamma_{s,t}}\left(\|x_{t-1}-x^*\|^2-\|x_t-x^*\|^2\right)\\&+\frac{\alpha_{s,t}}{2}\left[\frac{\gamma_{s,t}\|\delta_t\|^2}{1-L\alpha_{s,t}\gamma_{s,t}} + 2\langle \delta_t, x^*-x_{t-1}\rangle\right]\\
		\end{aligned}
	\end{equation}
	where \ding{172} comes from the definition of $\underline{x}_t$ and $x_t$, \ding{173} comes from the convexity of $f$, \ding{174} comes from Line \ref{condg} of Algorithm \ref{storc}, \ding{175} comes from the fact that $b\langle u, v\rangle-a\|v\|^2/2\leq b^2\|u\|^2/(2a)$. Note that $\mathbb{E}\left[\langle \delta_t, x^*-x_{t-1}\rangle\right] = 0$. So with the condition $\mathbb{E}\left[\|\delta_t\|^2\|\right]\leq\frac{L^2D_s^2}{T_s(k+1)^2} \overset{\textrm{def}}{=} \sigma_t^2$ we arrive at
	\begin{equation}
		\begin{aligned}
			&\mathbb{E}\left[f(\bar{x}_t)-f(x^*)\right]\\
			\leq& (1-\alpha_{s,t})\mathbb{E}\left[f(\bar{x}_{t-1})-f(x^*)\right] \\&+ \alpha_{s,t}\left[\frac{1}{\gamma_{s,t}}\eta_{s,t} + \frac{1}{2\gamma_{s,t}}\left(\mathbb{E}\left[\|x_{t-1}-x^*\|^2\right]-\mathbb{E}\left[\|x_t-x^*\|^2\right]\right) + \frac{\gamma_{s,t}\sigma_t^2}{2\left(1-L\alpha_{s,t}\gamma_{s,t}\right)}\right]
		\end{aligned}
	\end{equation}
	Now we define $\Gamma_{t} = \Gamma_{t-1}\left(1-\alpha_{s,t}\right)$ when $t>1$ and $\Gamma_1 = 1$. By induction, one can verify $\Gamma_{t} = \frac{2}{t(t+1)}$ and the following:
	\begin{equation}
		\begin{aligned}
			&\mathbb{E}\left[f(\bar{x}_t)-f(x^*)\right]\\
			\leq& \Gamma_{t}\sum_{k=1}^{t}\frac{\alpha_{s,k}}{\Gamma_k}\left[\frac{1}{\gamma_{s,k}}\eta_{s,k} + \frac{1}{2\gamma_{s,k}}\left(\mathbb{E}\left[\|x_{k-1}-x^*\|^2\right]-\mathbb{E}\left[\|x_k-x^*\|^2\right]\right) + \frac{\gamma_{s,k}\sigma_k^2}{2\left(1-L\alpha_{s,k}\gamma_{s,k}\right)}\right]
		\end{aligned}
	\end{equation}
	which is at most
	\begin{equation}	\label{key}
		\begin{aligned} &\Gamma_{t}\sum_{k=1}^{t}\frac{\alpha_{s,k}}{\Gamma_k}\left[\frac{1}{\gamma_{s,k}}\eta_{s,k} + \frac{\gamma_{s,k}\sigma_k^2}{2\left(1-L\alpha_{s,k}\gamma_{s,k}\right)}\right]\\ &+ \frac{\Gamma_t}{2}\left[\frac{\alpha_{s,1}}{\gamma_{s,1}\Gamma_1}\mathbb{E}\left[\|x_0-x^*\|^2\right]+\sum_{k=2}^{t}\left(\frac{\alpha_{s,k}}{\gamma_{s,k}\Gamma_k}-\frac{\alpha_{s,k-1}}{\gamma_{s,k-1}\Gamma_{k-1}}\right)\mathbb{E}\left[\|x_{k-1}-x^*\|^2\right]\right]
		\end{aligned}
	\end{equation}
	Finally plugging in the parameters $\alpha_{s,k}, \gamma_{s,k}, \eta_{s,k}, \Gamma_k$ and the bound $\mathbb{E}\left[\|\bar{x}_0-x^*\|^2\right] \leq D_s^2$ concludes the proof:
	\begin{equation}
		\begin{aligned}
			&\mathbb{E}\left[f(\bar{x}_t)-f(x^*)\right]\leq\frac{2}{t(t+1)}\sum_{k=1}^{t}k\left[\frac{2LD_s^2}{T_sk} + \frac{LD_s^2}{2T_s(k+1)}\right] + \frac{3LD_s^2}{t(t+1)} \leq \frac{8LD_s^2}{t(t+1)}
		\end{aligned}
	\end{equation}
\end{proof}

In (\ref{key}), the factor before $\eta_{s,k}$ is $\frac{1}{\gamma_{s,k}}
$, which is $\mathcal{O}\left(\frac{1}{k}\right)$. Thus $\eta_{s,t}$ can be chosen $\mathcal{O}\left(t\right)$ larger, which leads to lower linear oracle complexity. However, the factor before the variance $\sigma_k^2$ is $\gamma_{s,k}$, which is $\mathcal{O}\left(k\right)$. Thus $\sigma_k^2$ has to be $\mathcal{O}\left(k\right)$ smaller. From \citep{hazan2016variance} we know $\sigma_t^2$ is proportional to $\frac{1}{m_{s,t}}$. Thus $m_{s,t}$ has to be chosen $\mathcal{O}\left(t\right)$ larger, which leads to higher gradient complexity.

\end{document}